\newcommand{\keywords}[1]{{\bf Keywords: } #1\par}
\newtheorem{theorem}{Theorem}
\newtheorem{remark}{Remark}
\newtheorem{proposition}{Proposition}
\newtheorem{lemma}[theorem]{Lemma}
\title{Variance-Constrained Actor-Critic Algorithms for Discounted and Average Reward MDPs}
\author[1]{Prashanth L A \thanks{prashanth.la@inria.fr}}
\author[2]{Mohammad Ghavamzadeh \thanks{mohammad.ghavamzadeh@inria.fr}}
\affil[1]{\small INRIA Lille - Nord Europe, Team SequeL, FRANCE.}
\affil[2]{\small Adobe Research \& INRIA Lille - Team SequeL\footnote{currently at Adobe Research, on leave of absence from INRIA.}}
\date{}
\begin{document}

\maketitle

\begin{abstract} 
In many sequential decision-making problems we may want to manage risk by minimizing some measure of variability in rewards in addition to maximizing a standard criterion. Variance related risk measures are among the most common risk-sensitive criteria in finance and operations research. However, optimizing many such criteria is known to be a hard problem. In this paper, we consider both discounted and average reward Markov decision processes. For each formulation, we first define a measure of variability for a policy, which in turn gives us a set of risk-sensitive criteria to optimize. For each of these criteria, we derive a formula for computing its gradient. 
We then devise actor-critic algorithms that operate on three timescales - a TD critic on the fastest timescale, a policy gradient (actor) on the intermediate timescale, and a dual ascent for Lagrange multipliers on the slowest timescale. In the discounted setting, we point out the difficulty in estimating the gradient of the variance of the return and incorporate simultaneous perturbation approaches to alleviate this. The average setting, on the other hand, allows for an actor update using compatible features to estimate the gradient of the variance. We establish the convergence of our algorithms to locally risk-sensitive optimal policies. Finally, we demonstrate the usefulness of our algorithms in a traffic signal control application.  

\keywords{Markov decision process (MDP), reinforcement learning (RL), risk sensitive RL, actor-critic algorithms, multi-time-scale stochastic approximation, simultaneous perturbation stochastic approximation (SPSA), smoothed functional (SF).}
\end{abstract} 


\section{Introduction}
\label{sec:introduction}

The usual optimization criteria for an infinite horizon Markov decision process (MDP) are the {\em expected sum of discounted rewards} and the {\em average reward}~\cite{puterman1994markov,BertsekasDP01}. Many algorithms have been developed to maximize these criteria both when the model of the system is known (planning) and unknown (learning)~\cite{BertsekasT96,sutton1998reinforcement}. These algorithms can be categorized to {\bf value function-based} methods that are mainly based on the two celebrated dynamic programming algorithms {\em value iteration} and {\em policy iteration}; and {\bf policy gradient} methods that are based on updating the policy parameters in the direction of the gradient of a performance measure, i.e.,~the value function of the initial state or the average reward. Policy gradient methods estimate the gradient of the performance measure either without using an explicit representation of the value function (e.g.,~\cite{Williams92SS,Marbach98SM,Baxter01IP}) or using such a 
representation in which case they are referred to as {\em actor-critic} algorithms (e.g.,~\cite{Sutton00PG,Konda00AA,Peters05NA,Bhatnagar07IN,bhatnagar2009natural}). Using an explicit representation for value function (e.g.,~linear function approximation) by actor-critic algorithms reduces the variance of the gradient estimate with the cost of adding it a bias. 

Actor-critic methods were among the earliest to be investigated in RL~\cite{Barto83NE,Sutton84TC}. They comprise a family of reinforcement learning (RL) methods that maintain two distinct algorithmic components: An {\em Actor}, whose role is to maintain and update an action-selection policy; and a {\em Critic}, whose role is to estimate the value function associated with the actor's policy. Thus, the critic addresses a problem of {\em prediction}, whereas the actor is concerned with {\em control}. A common practice is to update the policy parameters using stochastic gradient ascent, and to estimate the value-function using some form of temporal difference (TD) learning~\cite{Sutton88LP}. 

However in many applications, we may prefer to minimize some measure of {\em risk} as well as maximizing a usual optimization criterion. In such cases, we would like to use a criterion that incorporates a penalty for the {\em variability} induced by a given policy. This variability can be due to two types of uncertainties: {\bf 1)} uncertainties in the model parameters, which is the topic of {\em robust} MDPs (e.g.,~\cite{Nilim05RC,Delage10PO,Xu12DR}), and {\bf 2)} the inherent uncertainty related to the stochastic nature of the system, which is the topic of {\em risk-sensitive} MDPs (e.g.,~\cite{Howard72RS,Sobel82VD,filar1989variance}). 

In risk-sensitive sequential decision-making, the objective is to maximize a risk-sensitive criterion such as the expected exponential utility~\cite{Howard72RS}, a variance related measure~\cite{Sobel82VD,filar1989variance}, the percentile performance~\cite{Filar95PP}, or conditional value-at-risk (CVaR)~\cite{Ruszczynski10RA,Shen13RS}. 
Unfortunately, when we include a measure of risk in our optimality criteria, the corresponding optimal policy is usually no longer Markovian stationary (e.g.,~\cite{filar1989variance}) and/or computing it is not tractable (e.g.,~\cite{filar1989variance,Mannor11MV}). Although risk-sensitive sequential decision-making has a long history in operations research and finance, it has only recently grabbed attention in the machine learning community. Most of the work on this topic (including those mentioned above) has been in the context of MDPs (when the model of the system is known) and much less work has been done within the reinforcement learning (RL) framework (when the model is unknown and all the information about the system is obtained from the samples resulted from the agent's interaction with the environment). In risk-sensitive RL, we can mention the work by Borkar~\cite{Borkar01SF,Borkar02QR,
borkar2010learning} and Basu et al.~\cite{Basu08LA} who considered the expected exponential utility, the one by Mihatsch and Neuneier~\cite{Mihatsch02RS} that formulated a new risk-sensitive control framework based on transforming the temporal difference errors that occur during learning, and the one by Tamar et al.~\cite{tamar2012policy} on several variance related measures. Tamar et al.~\cite{tamar2012policy} study stochastic shortest path problems, and in this context, propose a policy gradient algorithm (and in a more recent work~\cite{Tamar13VA} an actor-critic algorithm) for maximizing several risk-sensitive criteria that involve both the expectation and variance of the {\em return} random variable (defined as the sum of the rewards that the agent obtains in an episode). 

In this paper,\footnote{This paper is an extension of an earlier work by the authors~\cite{Prashanth13AC} and includes novel second order methods in the discounted setting, detailed proofs of all proposed algorithms, and additional experimental results.} we develop actor-critic algorithms for optimizing variance-related risk measures in both discounted and average reward MDPs. In the following, we first summarize our contributions in the discounted reward setting and follow it with those in average reward setting.

\paragraph{\textbf{Discounted reward setting.}}
Here we define the measure of variability as the {\em variance of the return} (similar to~\cite{tamar2012policy}). We formulate the following constrained optimization problem with the aim of maximizing the mean of the return subject to its variance being bounded from above:
For a given $\alpha >0$,
\begin{align*}
\max_\theta V^\theta(x^0)\quad\quad \text{subject to} \quad\quad \Lambda^\theta(x^0)\leq\alpha.
\end{align*}
In the above, $V^\theta(x^0)$ is the mean of the return, starting in state $x^0$ for a policy identified by its parameter $\theta$, while $\Lambda^\theta(x^0)$ is the variance of the return (see Section \ref{sec:discounted-setting} for precise definitions).
A standard approach to solve the above problem is to employ the Lagrangian relaxation procedure~\cite{bertsekas1999nonlinear} and solve the following unconstrained problem:
\begin{align*}
\max_\lambda\min_\theta\left(L(\theta,\lambda) \stackrel{\triangle}{=} -V^\theta(x^0)+\lambda\big(\Lambda^\theta(x^0)-\alpha\big)\right),
\end{align*}
where $\lambda$ is the Lagrange multiplier. For solving the above problem, it is required to derive a formula for the gradient of the Lagrangian $L(\theta,\lambda)$, both w.r.t. $\theta$ and $\lambda$. While the gradient w.r.t. $\lambda$ is particularly simple since it is the constraint value, the other gradient, i.e., w.r.t. $\theta$ is complicated. We derive this formula in Lemma \ref{grad-V-U} and show that $\nabla_\theta L(\theta,\lambda)$ requires the gradient of the value function at every state of the MDP (see the discussion in Sections~\ref{sec:discounted-setting} and~\ref{sec:discounted-alg}). 

Note that we operate in a \textit{simulation optimization} setting, i.e., we have access to reward samples from the underlying MDP. Thus, it is required to estimate the mean and varaince of the return (we use a TD-critic for this purpose) and then use these estimates to compute gradient of the Lagrangian. The latter is used then used to descend in the policy parameter. 
We estimate the gradient of the Lagrangian using two simultaneous perturbation methods: {\em simultaneous perturbation stochastic approximation} (SPSA)~\cite{Spall92MS} and {\em smoothed functional} (SF)~\cite{katkovnik1972convergence}, resulting in two separate discounted reward actor-critic algorithms. In addition, we also propose second-order algorithms with a Newton step, using both SPSA and SF.

Simultaneous perturbation methods have been popular in the field of stochastic optimization and the reader is referred to~\cite{Bhatnagar13SR} for a textbook introduction. First introduced in~\cite{Spall92MS}, the idea of SPSA is to perturb each coordinate of a parameter vector uniformly using Rademacher random variable, in the quest for finding the minimum of a function that is only observable via simulation. Traditional gradient schemes require $2\kappa_1$ evaluations of the function, where $\kappa_1$ is the parameter dimension. On the other hand, SPSA requires only two evaluations  irrespective of the parameter dimension and hence is an efficient scheme, especially useful in high-dimensional settings. While a one-simulation variant of SPSA was proposed in \cite{spall1997one}, the original two-simulation SPSA algorithm is preferred as it is more efficient and also seen to work better than its one-simulation variant. Later enhancements to the original SPSA scheme include using deterministic perturbation using certain Hadamard matrices~\cite{
bhatnagar2003two} and second-order methods that estimate Hessian using SPSA~\cite{spall2000adaptive,bhatnagar2005adaptive}.
The SF schemes are another class of simultaneous perturbation methods, which again perturb each coordinate of the parameter vector uniformly. However, unlike SPSA, Gaussian random variables are used here for the perturbation. Originally proposed in~\cite{katkovnik1972convergence}, the SF schemes have been studied and enhanced in later works such as~\cite{styblinski1986,bhatnagar2007adaptive}. Further,~\cite{shalabh2011constrained} proposes both SPSA and SF like schemes for constrained optimization.

\paragraph{\textbf{Average reward setting.}} Here we first define the measure of variability as the {\em long-run variance} of a policy as follows:
\begin{align*}
 \Lambda(\theta) = \lim_{T\rightarrow\infty}\frac{1}{T}\E\left[\sum_{n=0}^{T-1}\big(R_n-\rho(\mu)\big)^2\mid \theta\right],
\end{align*}
where $\rho(\theta)$ is the average reward under policy identified by its parameter $\theta$ (see Section \ref{sec:average-setting} for precise definitions). The aim here is to solve the following constrained optimization problem:
\begin{align*}
 \max_\theta\rho(\theta)\quad\quad \text{subject to} \quad\quad \Lambda(\theta)\leq\alpha.
\end{align*}
As in the discounted setting. we derive an expression for the gradient of the Lagrangian (see Lemma \ref{grad-rho-eta}). Unlike the discounted setting, we do not require sophisticated simulation optimizations schemes, as the gradient expressions in Lemma \ref{grad-rho-eta} suggest a simpler alternative that employs \textit{compatible features}~\cite{Sutton00PG,Peters05NA}. Compatible features for linearly approximating the action-value function of policy $\theta$ are of the form $\nabla_\theta\log\mu(a|x)$. These features are well-defined if the policy is differentiable w.r.t.~its parameters $\theta$.~\citet{Sutton00PG} showed the advantages of using these features in approximating the action-value function in actor-critic algorithms. In \cite{bhatnagar2009natural}, the authors use compatible features to develop actor-critic algorithms for a risk-neutral setting. 
We extend this to variance-constrained setting and establish that square value function itself serves as a good baseline level when calculating the gradient of the average square reward (see the discussion surrounding Lemma \ref{TD-error-Advantage}). This facilitates the usage of   \textit{compatible features} for obtaining unbiased estimates of both average reward as well as square reward.   
We then develop an actor-critic algorithm that employ these \textit{compatible features} in order to descend in the policy parameter $\theta$ and also identify the bias that arises due to function approximation (see Lemma \ref{bias-average}).

\paragraph{\textbf{Proof of convergence.}} Using the ordinary differential equations (ODE) approach, we establish the asymptotic convergence of our algorithms to locally risk-sensitive optimal policies. Our algorithms employ multi-timescale stochastic approximation, in both settings. The convergence proof proceeds by analysing each timescale separately. In essence, the iterates on a faster timescale view those on a slower timescale as quasi-static, while the slower timescale iterate views that on a faster timescale as equilibrated. Using this principle, we show that TD critic (on the fastest timescale in all the algorithms) converge to fixed points of the Bellman operator, for any fixed policy $\theta$ and Lagrange multiplier $\lambda$. Next, for any given $\lambda$, the policy update tracks in the asymptotic limit and converges to the equilibria of the corresponding ODE. Finally, $\lambda$ updates on slowest timescale converge and the overall convergence is to a local saddle point of the Lagrangian. Moreover, the limiting point is feasible for the constrained optimization problem mentioned above, i.e., the policy obtained upon convergence satisfies the constraint that the variance is upper-bounded by $\alpha$.

\paragraph{\textbf{Simulation experiments.}} We demonstrate the usefulness of our discounted and average reward risk-sensitive actor-critic algorithms in a traffic signal control application. The objective in our formulation is to minimize the total number of vehicles in the system, which indirectly minimizes the delay experienced by the system. The motivation behind using a risk-sensitive control strategy is to reduce the variations in the delay experienced by road users. From the results, we observe that the risk-sensitive algorithms proposed in this paper result in a long-term (discounted or average) cost that is higher than their risk-neutral variants. However, from the empirical variance of the cost (both discounted as well as average) perspective, the risk-sensitive algorithms outperform their risk-neutral variants.


\begin{remark}
It is important to note that our both discounted and average reward algorithms can be easily extended to other variance related risk criteria such as the Sharpe ratio, which is popular in financial decision-making~\cite{Sharpe66MF} (see Remarks~\ref{subsec:discount-SR} and~\ref{subsec:average-SR} for more details). 
\end{remark}

\begin{remark}
Another important point is that the {\em expected exponential utility} risk measure can be also considered as an approximation of the mean-variance tradeoff due to the following Taylor expansion (see e.g.,~Eq.~11~in~\cite{Mihatsch02RS})
\begin{equation*}
-\frac{1}{\beta}\log\mathbb{E}[e^{-\beta X}] = \mathbb{E}[X] - \frac{1}{\beta}\text{Var}[X]+O(\beta^2),
\end{equation*}
and we know that it is much easier to design actor-critic or other reinforcement learning algorithms~\cite{Borkar01SF,Borkar02QR,Basu08LA,borkar2010learning} for this risk measure than those that will be presented in this paper. However, this formulation is limited in the sense that it requires knowing the ideal tradeoff between the mean and variance, since it takes $\beta$ as an input. On the other hand, the mean-variance formulations considered in this paper are more general because \\
\begin{inparaenum}[\bfseries (1)]
\item we optimize for the Lagrange multiplier $\lambda$, which plays a similar role to $\beta$, as a tradeoff between the mean and variance, and \\
\item it is usually more natural to know an upper-bound on the variance (as in the mean-variance formulations considered in this paper) than knowing the ideal tradeoff between the mean and variance (as considered in the expected exponential utility formulation). \\
\end{inparaenum}
Despite all these, we should not consider these formulations as replacement for each other or try to find a formulation that is the best for all problems, but instead should consider them as different formulations that each might be the right fit for a specific problem. 
\end{remark}

\paragraph{\textbf{Closely related works.}}
In comparison to~\cite{tamar2012policy} and \cite{Tamar13VA}, which are the most closely related contributions, we would like to point out the following:\\
  \begin{inparaenum}[\bfseries(1)]
   \item The authors develop policy gradient and actor-critic methods for stochastic shortest path problems in \cite{tamar2012policy} and \cite{Tamar13VA}, respectively. On the other hand, we devise actor-critic algorithms for both discounted and average reward MDP settings.; and \\
   \item More importantly, we note the difficulty in the discounted formulation that requires to estimate the gradient of the value function at every state of the MDP and also sample from two different distributions. This precludes us from using {\em compatible features} - a method that has been employed successfully in actor-critic algorithms in a risk-neutral setting (cf. \cite{bhatnagar2009natural}) as well as more recently in \cite{Tamar13VA} for a risk-sensitive stochastic shortest path setting. We alleviate the above mentioned problems for the discounted setting by employing simultaneous perturbation based schemes for estimating the gradient in the first order methods and Hessian in the second order methods, that we propose.\\ 
  \item Unlike \cite{tamar2012policy,Tamar13VA} who consider a fixed $\lambda$ in their constrained formulations, we perform dual ascent using sample variance constrants and optimize the Lagrange multiplier $\lambda$. In rigorous terms, $\lambda_n$ in our algorithms is shown to converge to a local maxima of $\nabla_\lambda L(\theta^{\lambda},\lambda)$ (here $\theta^\lambda$ is the limit of the $\theta$ recursion for a given value of $\lambda$) and the limit $\lambda^*$ is such that the variance constraint is satisfied for the corresponding policy $\theta^{\lambda^*}$.   
  \end{inparaenum}

\paragraph{\textbf{Organization of the paper.}}
The rest of the paper is organized as follows: 
In Section \ref{sec:preliminaries}, we describe the RL setting.
In Section \ref{sec:discounted-setting}, we describe the risk-sensitive MDP in the discounted setting and propose actor-critic algorithms for this setting in Section \ref{sec:discounted-alg}. In Section \ref{sec:average-setting}, we present the risk measure for the average setting and propose an actor-critic algorithm that optimizes this risk measure in Section \ref{sec:average-alg}. In Sections \ref{sec:SPSA-SF-proofs}--\ref{sec:average-analysis}, we present the convergence proofs for the algorithms in discounted and average reward settings, respectively. In Section \ref{sec:simulation}, we describe the experimental setup and present the results in both average and discounted cost settings. Finally, in Section \ref{sec:conclusions}, we provide the concluding remarks and outline a few future research directions.


\section{Preliminaries} 
\label{sec:preliminaries}

We consider sequential decision-making tasks that can be formulated as a reinforcement learning (RL) problem. In RL, an agent interacts with a dynamic, stochastic, and incompletely known environment, with the goal of optimizing some measure of its {\em long-term} performance. This interaction is often modeled as a Markov decision process (MDP). A MDP is a tuple $(\X,\A,R,P,x^0)$ where $\X$ and $\A$ are the state and action spaces; $R(x,a), x\in \X, a\in \A$ is the reward random variable whose expectation is denoted by $r(x,a)=\E\big[R(x,a)\big]$; $P(\cdot|x,a)$ is the transition probability distribution; and $x^0 \in \X$ is the initial state\footnote{Our algorithms can be easily extended to a setting where the initial state is determined by a distribution.}. We assume that both state and action spaces are finite. 

The rule according to which the agent acts in its environment (selects action at each state) is called a {\em policy}. A Markovian stationary policy $\mu(\cdot|x)$ is a probability distribution over actions, conditioned on the current state $x$. The goal in a RL problem is to find a policy that optimizes the long-term performance measure of interest, e.g.,~maximizes the {\em expected discounted sum of rewards} or the {\em average reward}.

In policy gradient and actor-critic methods, we define a class of parameterized stochastic policies $\big\{\mu(\cdot|x;\theta),x\in\X,\theta\in\Theta\subseteq\R^{\kappa_1}\big\}$, estimate the gradient of the performance measure w.r.t.~the policy parameters $\theta$ from the observed system trajectories, and then improve the policy by adjusting its parameters in the direction of the gradient. Since in this setting a policy $\mu$ is represented by its $\kappa_1$-dimensional parameter vector $\theta$, policy dependent functions can be written as a function of $\theta$ in place of $\mu$. So, we use $\mu$ and $\theta$ interchangeably in the paper. 

We make the following assumptions on the policy, parameterized by $\theta$:\\

\noindent
{\bf (A1)} {\em For any state-action pair $(x,a)\in\X\times\A$, the policy $\mu(a|x;\theta)$ is continuously differentiable in the parameter $\theta$.}

\noindent
{\bf (A2)} {\em The Markov chain induced by any policy $\theta$ is irreducible.} \\

The above assumptions are standard requirements in policy gradient and actor-critic methods.

Finally, we denote by $d^\mu(x)$ and $\pi^\mu(x,a)=d^\mu(x)\mu(a|x)$, the stationary distribution of state $x$ and state-action pair $(x,a)$ under policy $\mu$, respectively. The stationary distributions can be seen to exist because we consider a finite state-action space setting and irreducibility here implies positive recurrence. Similarly in the discounted formulation, we define the $\gamma$-discounted visiting distribution of state $x$ and state-action pair $(x,a)$ under policy $\mu$ as $d^\mu_\gamma(x|x^0)=(1-\gamma)\sum_{n=0}^\infty\gamma^n\Pr(x_n=x|x_0=x^0;\mu)$ and $\pi^\mu_\gamma(x,a|x^0)=d^\mu_\gamma(x|x^0)\mu(a|x)$.


\section{Discounted Reward Setting} 
\label{sec:discounted-setting}

For a given policy $\mu$, we define the return of a state $x$ (state-action pair $(x,a)$) as the sum of discounted rewards encountered by the agent when it starts at state $x$ (state-action pair $(x,a)$) and then follows policy $\mu$, i.e.,
\begin{align*}
D^\mu(x)&=\sum_{n=0}^\infty\gamma^nR(x_n,a_n)\mid x_0=x,\;\mu, \\
D^\mu(x,a)&=\sum_{n=0}^\infty\gamma^nR(x_n,a_n)\mid x_0=x,\;a_0=a,\;\mu.
\end{align*}
The expected value of these two random variables are the value and action-value functions of policy $\mu$, i.e.,
\begin{equation*}
V^\mu(x)=\E\big[D^\mu(x)\big] \quad\quad\quad \text{and} \quad\quad\quad Q^\mu(x,a)=\E\big[D^\mu(x,a)\big]. 
\end{equation*}
The goal in the standard (risk-neutral) discounted reward formulation is to find an optimal policy $\mu^*=\argmax_\mu V^\mu(x^0)$, where $x^0$ is the initial state of the system. 

The most common measure of the {\em variability} in the stream of rewards is the {\em variance of the return}, defined by
\begin{align}
\label{eq:V1}
\Lambda^\mu(x)&\stackrel{\triangle}{=}\E\big[D^\mu(x)^2\big]-V^\mu(x)^2=U^\mu(x)-V^\mu(x)^2. 
\end{align}
The above measure was first introduced by Sobel~\cite{Sobel82VD}. Note that 
\begin{equation*}
U^\mu(x) \stackrel{\triangle}{=} \E\left[D^\mu(x)^2\right]
\end{equation*}
is the {\em square reward value function} of state $x$ under policy $\mu$. On similar lines, we define the {\em square reward action-value function} of state-action pair $(x,a)$ under policy $\mu$ as 
\begin{equation*}
W^\mu(x,a) \stackrel{\triangle}{=} \E\left[D^\mu(x,a)^2\right].
\end{equation*}
From the Bellman equation of  $\Lambda^\mu(x)$, proposed by Sobel~\cite{Sobel82VD}, it is straightforward to derive the following Bellman equations for  $U^\mu(x)$ and $W^\mu(x,a)$:
\begin{align}
&U^\mu(x)=\sum_a\mu(a|x) r(x,a)^2+\gamma^2\sum_{a,x'}\mu(a|x)P(x'|x,a)U^\mu(x')+2\gamma\sum_{a,x'}\mu(a|x)P(x'|x,a)r(x,a)V^\mu(x'),  \label{eq:U-W-Bellman}\\
&W^\mu(x,a)=r(x,a)^2+\gamma^2\sum_{x'}P(x'|x,a)U^\mu(x') +2\gamma r(x,a)\sum_{x'}P(x'|x,a)V^\mu(x').\nonumber
\end{align}
Although $\Lambda^\mu$ of \eqref{eq:V1} satisfies a Bellman equation, unfortunately, it lacks the monotonicity property of dynamic programming (DP), and thus, it is not clear how the related risk measures can be optimized by standard DP algorithms~\cite{Sobel82VD}. Policy gradient and actor-critic algorithms are good candidates to deal with this risk  measure. 

We consider the following risk-sensitive measure for discounted MDPs: For a given $\alpha >0$,
\begin{equation}
\label{eq:discounted-risk-measure}
\max_\theta V^\theta(x^0)\quad\quad \text{subject to} \quad\quad \Lambda^\theta(x^0)\leq\alpha.
\end{equation}
Assuming that there is at least one policy (in the class of parameterized policies that we consider) that satisfies the variance constraint above, it can be inferred from Theorem 3.8 of \cite{altman1999constrained} that there exists an optimal policy that uses at most one randomization.

It is important to note that the algorithms proposed in this paper can be used for any risk-sensitive measure that is based on the variance of the return such as 
\begin{enumerate}
\item $\min_\theta \Lambda^\theta(x^0) \quad\quad$ subject to $\quad\quad V^\theta(x^0)\geq\alpha$,
\item $\max_\theta V^\theta(x^0)-\alpha\sqrt{\Lambda^\theta(x^0)}$, 
\item Maximizing the Sharpe Ratio, i.e.,~$\;\max_\theta V^\theta(x^0)/\sqrt{\Lambda^\theta(x^0)}$. Sharpe Ratio (SR) is a popular risk measure in financial decision-making~\cite{Sharpe66MF}. Section~\ref{subsec:discount-SR} presents extensions of our proposed discounted reward algorithms to optimize the Sharpe ration.
\end{enumerate}
To solve \eqref{eq:discounted-risk-measure}, we employ the Lagrangian relaxation procedure~\cite{bertsekas1999nonlinear} to convert it to the following unconstrained problem:  
\begin{equation}
\label{eq:unconstrained-discounted-risk-measure}
\max_\lambda\min_\theta\left(L(\theta,\lambda) \stackrel{\triangle}{=} -V^\theta(x^0)+\lambda\big(\Lambda^\theta(x^0)-\alpha\big)\right),
\end{equation}
where $\lambda$ is the Lagrange multiplier. The goal here is to find the saddle point of  $L(\theta,\lambda)$, i.e.,~a point  $(\theta^*,\lambda^*)$ that satisfies  
$$L(\theta, \lambda^*) \ge L(\theta^*, \lambda^*) \ge L(\theta^*, \lambda),\forall\theta\in\Theta,\forall \lambda>0.$$ For a standard convex optimization problem with mild regularity conditions, one can ensure the existence of a unique saddle point. Further, convergence to this point can be achieved by descending in  $\theta$ and ascending in  $\lambda$ using $\nabla_\theta L(\theta,\lambda)$ and $\nabla_\lambda L(\theta,\lambda)$, respectively.

However, we operate in a {\em simulation optimization} setting, where
\begin{inparaenum}[\bfseries (i)]
\item only sample estimates of the Lagrangian are observed; and
\item  the objective (Lagrangian) is not necessarily convex in $\theta$ (or there is no unique saddle point).
\end{inparaenum} 
Hence, performing primal descent and dual ascent, one can only get to a local saddle point, i.e., a tuple  $(\theta^*, \lambda^*)$ which is a local minima w.r.t. $\theta$ and local maxima w.r.t $\lambda$ of the Lagrangian.  

In our setting, the necessary gradients of the Lagrangian are as follows:
\begin{align*}
\nabla_\theta L(\theta,\lambda)=-\nabla_\theta V^\theta(x^0)+\lambda\nabla_\theta\Lambda^\theta(x^0)\quad\quad\text{and}\quad\quad\nabla_\lambda L(\theta, \lambda)= \Lambda^\theta(x^0)-\alpha.
\end{align*}
Since $\nabla_\theta\Lambda^\theta(x^0)=\nabla_\theta U^\theta(x^0)-2V^\theta(x^0)\nabla_\theta V^\theta(x^0)$, in order to compute $\nabla_\theta\Lambda^\theta(x^0)$ it would be enough to calculate $\nabla_\theta V^\theta(x^0)$ and $\nabla_\theta U^\theta(x^0)$. Using the above definitions, we are now ready to derive the expressions for the gradient of  $V^\theta(x^0)$ and  $U^\theta(x^0)$, which in turn constitute the main ingredients in calculating  $\nabla_\theta L(\theta,\lambda)$.
\begin{lemma}
\label{grad-V-U}
Under (A1) and (A2), we have
\begin{align*}
(1-\gamma)\nabla_\theta V^\theta(x^0)&=\sum_{x,a}\pi^\theta_\gamma(x,a|x^0)\nabla\log\mu(a|x;\theta)Q^\theta(x,a), \\
(1-\gamma^2)\nabla_\theta U^\theta(x^0)&=\sum_{x,a}\widetilde{\pi}^\theta_\gamma(x,a|x^0)\nabla\log\mu(a|x;\theta)W^\theta(x,a) +2\gamma\sum_{x,a,x'}\widetilde{\pi}^\theta_\gamma(x,a|x^0)P(x'|x,a)r(x,a)\nabla_\theta V^\theta(x'),
\end{align*}
where $\widetilde{d}^\theta_\gamma(x|x^0)$ and $\widetilde{\pi}^\theta_\gamma(x,a|x^0)$ are the $\gamma^2$-discounted visiting distributions of state $x$ and state-action pair $(x,a)$ under policy $\mu$, respectively, and are defined as
\begin{align*}
\widetilde{d}^\theta_\gamma(x|x^0)&=(1-\gamma^2)\sum_{n=0}^\infty\gamma^{2n}\Pr(x_n=x|x_0=x^0;\theta), \\
\widetilde{\pi}^\theta_\gamma(x,a|x^0)&=\widetilde{d}^\theta_\gamma(x|x^0)\mu(a|x).
\end{align*}
\end{lemma}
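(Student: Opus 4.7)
The plan is to derive both gradient expressions starting from their respective Bellman equations and then unrolling the resulting recursions, using the log-derivative identity $\nabla\mu(a|x;\theta)=\mu(a|x;\theta)\nabla\log\mu(a|x;\theta)$ throughout. For the first identity, I would follow the standard policy gradient derivation of Sutton et al.~\cite{Sutton00PG}: beginning from $V^\theta(x)=\sum_a\mu(a|x;\theta)Q^\theta(x,a)$ with $Q^\theta(x,a)=r(x,a)+\gamma\sum_{x'}P(x'|x,a)V^\theta(x')$, differentiating w.r.t.~$\theta$ gives the recursion
\[
\nabla V^\theta(x)=\sum_a\mu(a|x;\theta)\nabla\log\mu(a|x;\theta)\,Q^\theta(x,a)+\gamma\sum_{a,x'}\mu(a|x;\theta)P(x'|x,a)\nabla V^\theta(x').
\]
Iterating this recursion starting from $x^0$ produces $\sum_{n\geq 0}\gamma^n\sum_x\Pr(x_n=x\mid x_0=x^0;\theta)\sum_a\mu(a|x;\theta)\nabla\log\mu(a|x;\theta)Q^\theta(x,a)$, which after multiplication by $(1-\gamma)$ is exactly the first claim by the definition of $d^\theta_\gamma$ and $\pi^\theta_\gamma$.

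For the second identity, I would differentiate the Bellman equation \eqref{eq:U-W-Bellman} for $U^\mu$ term by term. Applying the log-derivative trick to each occurrence of $\nabla\mu(a|x;\theta)$, the three pieces where the policy is differentiated collect into $\sum_a\mu(a|x;\theta)\nabla\log\mu(a|x;\theta)\big[r(x,a)^2+\gamma^2\sum_{x'}P(x'|x,a)U^\theta(x')+2\gamma r(x,a)\sum_{x'}P(x'|x,a)V^\theta(x')\big]$, which is precisely $\sum_a\mu(a|x;\theta)\nabla\log\mu(a|x;\theta)W^\theta(x,a)$ by the Bellman equation for $W^\theta$ stated in~\eqref{eq:U-W-Bellman}. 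Differentiating $U^\theta(x')$ contributes a recursive term with factor $\gamma^2$, and differentiating $V^\theta(x')$ contributes the inhomogeneous term $2\gamma\sum_{a,x'}\mu(a|x;\theta)P(x'|x,a)r(x,a)\nabla V^\theta(x')$, which we do not unroll further (it will be handled downstream by the first identity). Thus $\nabla U^\theta(x)$ obeys
\[
\nabla U^\theta(x)=h(x)+\gamma^2\sum_{x'}\Big(\sum_a\mu(a|x;\theta)P(x'|x,a)\Big)\nabla U^\theta(x'),
\]
where $h(x)$ is the sum of the $W^\theta$-weighted score term and the $\nabla V^\theta$ inhomogeneity.

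Unrolling this recursion from $x^0$, the geometric prefactor $\gamma^{2n}$ against $\Pr(x_n=x\mid x_0=x^0;\theta)$ sums to $\widetilde d^\theta_\gamma(x\mid x^0)/(1-\gamma^2)$ by definition, and multiplying through by $(1-\gamma^2)$ yields the claimed identity after recognizing $\widetilde d^\theta_\gamma(x\mid x^0)\mu(a|x;\theta)=\widetilde\pi^\theta_\gamma(x,a\mid x^0)$. The main obstacle I anticipate is bookkeeping rather than conceptual: one must carefully track that the recursion for $U^\theta$ carries the discount $\gamma^2$ (forcing the use of $\widetilde d^\theta_\gamma$ rather than $d^\theta_\gamma$), while the inhomogeneous $\nabla V^\theta$ term carries a single $\gamma$ from the cross-term in~\eqref{eq:U-W-Bellman} and must not be folded into the recursion. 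Convergence of the geometric series and the interchange of limit, sum, and $\nabla_\theta$ are justified by (A1)--(A2) combined with finiteness of $\X\times\A$, which bounds the rewards and hence $V^\theta,U^\theta,W^\theta$ uniformly, allowing dominated convergence.
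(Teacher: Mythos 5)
Your proposal is correct and follows essentially the same route as the paper: differentiate the Bellman equation for $U^\theta$, collect the score-function terms into the $W^\theta$-weighted piece via the Bellman equation for $W^\theta$, isolate the inhomogeneous $2\gamma r\,\nabla V^\theta$ term into $h(x)$, and unroll the resulting $\gamma^2$-discounted recursion to produce $\widetilde{d}^\theta_\gamma$ (the paper handles the first identity by citation to the standard policy gradient derivation, which you reproduce explicitly). Your added remarks on the interchange of limits and the care needed to keep the single-$\gamma$ cross term out of the $\gamma^2$ recursion are consistent with, and slightly more explicit than, the paper's argument.
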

\begin{proof}
The proof of $\nabla V^\theta(x^0)$ is standard and can be found, for instance, in \cite{Peters05NA}. To prove $\nabla U^\theta(x^0)$, we start by the fact that from~\eqref{eq:U-W-Bellman} we have $U(x) = \sum_a\mu(x|a)W(x,a)$. If we take the derivative w.r.t.~$\theta$ from both sides of this equation and obtain

\begin{align}
\label{eq:proof-grad-U1}
\nabla U(x^0)&=\sum_a\nabla\mu(a|x^0)W(x^0,a)+\sum_a\mu(a|x^0)\nabla W(x^0,a) \nonumber \\
&=\sum_a\nabla\mu(a|x^0)W(x^0,a)+\sum_a\mu(a|x^0)\nabla\Big[r(x^0,a)^2+\gamma^2\sum_{x'}P(x'|x^0,a)U(x') \nonumber \\ 
&+2\gamma r(x^0,a)\sum_{x'}P(x'|x^0,a)V(x')\Big] \nonumber \\
&=\underbrace{\sum_a\nabla\mu(a|x^0)W(x^0,a)+2\gamma\sum_{a,x'}\mu(a|x^0)r(x^0,a)P(x'|x^0,a)\nabla V(x')}_{h(x^0)}\nonumber\\
&+\gamma^2\sum_{a,x'}\mu(a|x^0)P(x'|x^0,a)\nabla U(x') \nonumber \\
&=h(x^0)+\gamma^2\sum_{a,x'}\mu(a|x^0)P(x'|x^0,a)\nabla U(x') \\ 
&=h(x^0)+\gamma^2\sum_{a,x'}\mu(a|x^0)P(x'|x^0,a)\nabla\Big[h(x')+\gamma^2\sum_{a',x''}\mu(a'|x')P(x''|x',a')\nabla U(x'')\Big]. \nonumber
\end{align}

By unrolling the last equation using the definition of $\nabla U(x)$ from~\eqref{eq:proof-grad-U1}, we obtain 
\begin{align*}
\nabla U(x^0) &= \sum_{n=0}^\infty\gamma^{2n}\sum_x\Pr(x_n=x|x_0=x^0)h(x)=\frac{1}{1-\gamma^2}\sum_x\widetilde{d}_\gamma(x|x^0)h(x) \nonumber \\
&= \frac{1}{1-\gamma^2}\Big[\sum_{x,a}\widetilde{d}_\gamma(x|x^0)\mu(a|x)\nabla\log\mu(a|x)W(x,a)\\
&+2\gamma\sum_{x,a,x'}\widetilde{d}_\gamma(x|x^0)\mu(a|x)r(x,a)P(x'|x,a)\nabla V(x')\Big] \nonumber \\
&= \frac{1}{1-\gamma^2}\Big[\sum_{x,a}\widetilde{\pi}_\gamma(x,a|x^0)\nabla\log\mu(a|x)W(x,a)\\
&+2\gamma\sum_{x,a,x'}\widetilde{\pi}_\gamma(x,a|x^0)r(x,a)P(x'|x,a)\nabla V(x')\Big]. 
\end{align*}
$\hfill{\blacksquare}$
\end{proof}
In \cite{sutton1999policy}, a policy gradient result analogous to Lemma \ref{grad-V-U} is provided for the value function in the case of full-state representations. In the average reward setting, a similar result helps in extension to incorporate function approximation - see the actor-critic algorithms in \cite{bhatnagar2009natural}\footnote{We extend this to the case of variance-constrained MDP in Section \ref{sec:average-alg}.}. However, a similar approach is not viable for discounted setting and this motivates the use of stochastic optimization techniques like SPSA/SF (cf. \cite{bhatnagar2010actor}).  
The problem is further complicated in the variance-constrained setting that we consider because:
\begin{enumerate}
\item two different sampling distributions, $\pi^\theta_\gamma$ and $\widetilde{\pi}^\theta_\gamma$, are used for $\nabla V^\theta(x^0)$ and $\nabla U^\theta(x^0)$, and 
\item $\nabla V^\theta(x')$ appears in the second sum of $\nabla U^\theta(x^0)$ equation, which implies that we need to estimate the gradient of the value function $V^\theta$ at every state of the MDP, and not just at the initial state $x^0$.
\end{enumerate}
To alleviate the above mentioned problems, we borrow the principle of simultaneous perturbation for estimating the gradient $\nabla_\theta L(\theta,\lambda)$ and develop novel risk-sensitive actor-critic algorithms in the following section. 


\section{Discounted Reward Risk-Sensitive Actor-Critic Algorithms}
\label{sec:discounted-alg}

In this section, we present actor-critic algorithms for optimizing the risk-sensitive measure~\eqref{eq:discounted-risk-measure}. These algorithms are based on two simultaneous perturbation methods: {\em simultaneous perturbation stochastic approximation} (SPSA) and {\em smoothed functional} (SF).


\subsection{Algorithm Structure}
\label{sec:algo-structure}
For the purpose of finding an optimal risk-sensitive policy, a standard procedure would update the policy parameter $\theta$ and Lagrange multiplier $\lambda$ in two nested loops as follows:
\begin{itemize}[$\bullet$]
\item An inner loop that descends in $\theta$ using the gradient of the Lagrangian $L(\theta,\lambda)$ w.r.t. $\theta$, and
\item An outer loop that ascends in $\lambda$ using the gradient of the Lagrangian $L(\theta,\lambda)$ w.r.t. $\lambda$.
\end{itemize}

Using two-timescale stochastic approximation \cite[Chapter 6]{borkar2008stochastic}, the two loops above can run in parallel, as follows:
\begin{align}
\label{eq:theta_descent_det}
\theta_{n+1} &= \Gamma\big[\theta_n - \zeta_2(n) A_n^{-1} \nabla_\theta L(\theta_n,\lambda_n)\big],\\
\lambda_{n+1} &= \Gamma_\lambda\big[\lambda_n + \zeta_1(n) \nabla_\lambda L(\theta_n,\lambda_n)\big],
\end{align}
In the above, 
\begin{itemize}
\item $A_n$ is a positive definite matrix that fixes the order of the algorithm. For the first order methods, $A_n=I$ ($I$ is the identity matrix), while for the second order methods $A_n \rightarrow \nabla^2_\theta L(\theta_n,\lambda_n)$ as $n \rightarrow \infty$.
\item $\Gamma$ is a projection operator that keeps the iterate $\theta_n$ stable by projecting  onto a compact and convex set $\Theta:= \prod_{i=1}^{\kappa_1} [\theta^{(i)}_{\min},\theta^{(i)}_{\max}]$. In particular, for any $\theta \in \R^\kappa_1$,  $\Gamma(\theta) = (\Gamma^{(1)}(\theta^{(1)}),\ldots, \Gamma^{(\kappa_1)}(\theta^{(\kappa_1)}))^T$, with $\Gamma^{(i)}(\theta^{(i)}) := \min(\max(\theta^{(i)}_{\min},\theta^{(i)}),\theta^{(i)}_{\max})$.
\item $\Gamma_\lambda$ is a projection operator that keeps the Lagrange multiplier $\lambda_n$ within the interval $[0,\lambda_{\max}]$, for some large positive constant $\lambda_{\max} < \infty$ and can be defined in an analogous fashion as $\Gamma$.
\item $\zeta_1(n), \zeta_2(n)$ are step-sizes selected such that $\theta$ update is on the faster and $\lambda$ update is on the slower timescale.  Note that another timescale $\zeta_3(n)$ that is the fastest is used for the TD-critic, which provides the estimate of the Lagrangian for a given $(\theta,\lambda)$. 
\end{itemize}

We make the following assumptions on the step-size schedules: \\
  
\noindent
{\bf (A3)} The step size schedules $\{\zeta_3(n)\}$, $\{\zeta_2(n)\}$, and $\{\zeta_1(n)\}$ satisfy
\begin{align}
\label{eq:step1}
&\sum_n \zeta_1(n) = \sum_n \zeta_2(n) = \sum_n \zeta_3(n) = \infty, \\
\label{eq:step2}
&\sum_n \zeta_1(n)^2,\;\;\;\sum_n \zeta_2(n)^2,\;\;\;\sum_n \zeta_3(n)^2<\infty, \\
\label{eq:step3}
&\qquad\qquad\zeta_1(n) = o\big(\zeta_2(n)\big).
\end{align}
Equations~\ref{eq:step1} and~\ref{eq:step2} are standard step-size conditions in stochastic approximation algorithms, and Equation~\ref{eq:step3} ensures that
the policy parameter update is on the faster time-scale $\{\zeta_2(n)\}$, and the Lagrange multiplier update is on the slower time-scale $\{\zeta_1(n)\}$. 

\paragraph{\textbf{Simulation optimization.}}
We operate in a setting where we only observe simulated rewards of the underlying MDP. Thus, it is required to estimate the mean and varaince of the return (we use a TD-critic for this purpose) and then use these estimates to compute gradient of the Lagrangian. 
The gradient $\nabla_\lambda L(\theta,\lambda)$ has a particularly simple form of $(\Lambda^\theta(x^0)-\alpha)$, suggesting the usage of sample variance constraints to perform the dual ascent for Lagrange multiplier $\lambda$. On the other hand, the expression for $\nabla_\theta L(\theta,\lambda)$ is complicated (see Lemma \ref{grad-V-U}) and warrants the usage of a simulation optimization that can provide gradient estimates from sample observation. We employ simultaneous perturbation schemes for estimating the gradient (and in the case of second order methods, the Hessian) of the Lagrangian $L(\theta,\lambda)$. The idea in these methods is to estimate the gradients $\nabla_\theta V^{\theta}(x^0)$ and $\nabla_\theta U^{\theta}(x^0)$ (needed for estimating the gradient $\nabla_\theta L(\theta,\lambda)$) using two simulated trajectories of the system corresponding to policies with parameters $\theta_n$ and $\theta_n^+=\theta_n+p_n$. Here $p_n$ is a perturbation vector that is specific to the algorithm. 

Based on the order, our algorithms can be classified as:
\begin{enumerate}
\item \textbf{First order}: This corresponds to $A_n = I$ in \eqref{eq:theta_descent_det}. The proposed algorithms here include RS-SPSA-G and RS-SF-G, where the former estimates the gradient using SPSA, while the latter uses SF. These algorithms use the following choice for the perturbation vector: $p_n=\beta\Delta_n$. Here $\beta>0$ is a positive constant and $\Delta_n$ is a perturbation random variable, i.e.,~a $\kappa_1$-vector of independent Rademacher (for SPSA) and Gaussian $\N(0,1)$ (for SF) random variables. 
\item \textbf{Second order}: This corresponds to $A_n$ which converges to $\nabla^2 L(\theta_n,\lambda_n)$ as $n\rightarrow \infty$. The proposed algorithms here include RS-SPSA-N and RS-SF-N, where the former uses SPSA for gradient/Hessian estimates and the latter employs SF for the same. These algorithms use the following choice for perturbation vector: For RS-SPSA-N, $p_n=\beta\Delta_n + \beta\widehat\Delta_n$, $\beta>0$ is a positive constant and $\Delta_n$ and $\widehat\Delta_n$ are perturbation parameters that are $\kappa_1$-vectors of independent Rademacher random variables, respectively. For RS-SF-N, $p_n=\beta\Delta_n$, where $\Delta_n$ is a $\kappa_1$ vector of Gaussian $\N(0,1)$ random variables.
\end{enumerate}

\begin{figure}[t]
\centering
\includegraphics[width=4.85in]{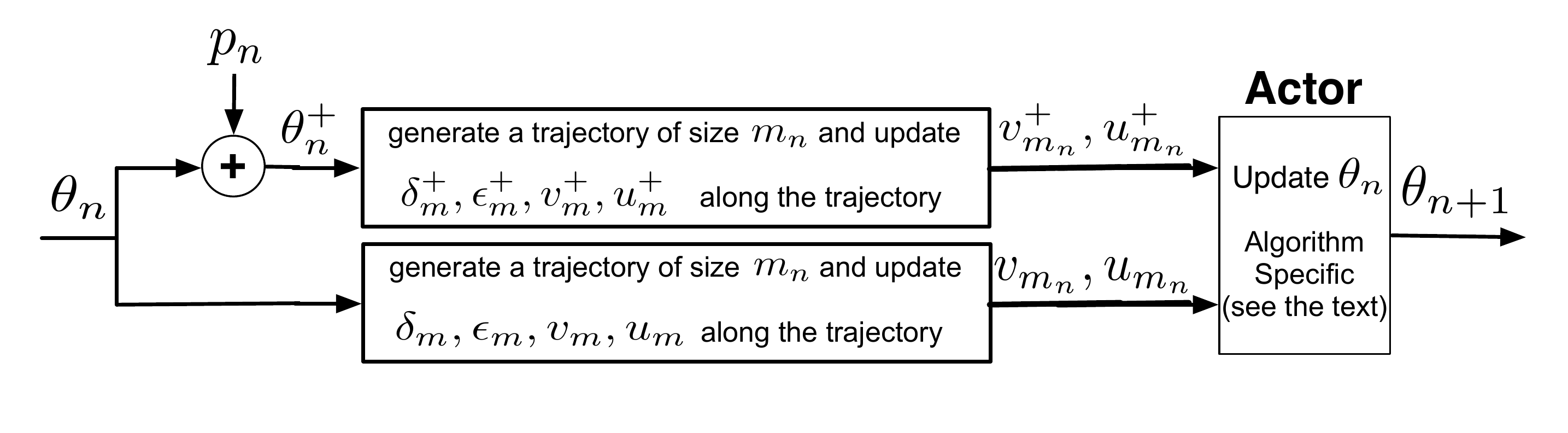}
\caption{The overall flow of our simultaneous perturbation based actor-critic algorithms.}
\label{fig:algorithm-flow}
\end{figure}

\begin{algorithm}
\begin{algorithmic}
\STATE {\bf Input:} parameterized policy $\mu(\cdot|\cdot;\theta)$ and value function feature vectors $\phi_v(\cdot)$ and $\phi_u(\cdot)$
\STATE {\bf Initialization:} policy parameter $\theta=\theta_0$; value function weight vectors $v=v_0$ and $v^+=v^+_0$; square value function weight vectors $u=u_0$ and $u^+=u^+_0$; initial state $x_0\sim P_0(x)$
\FOR{$n = 0,1,2,\ldots$}
\FOR{$m = 0,1,2,\ldots,m_n$}
\STATE Draw action $a_m \sim \mu(\cdot|x_m;\theta_n)$, observe next state $x_{m+1}$ and reward $R(x_m,a_m)$
\STATE Draw action  $a^+_m\sim\mu(\cdot|x^+_m;\theta_n^+)$, observe next state $x^+_{m+1}$ and reward  $R(x^+_m,a^+_m)$
\STATE {\bf Critic Update:} $\;\;$ see~\eqref{eq:critic-discounted} and~\eqref{eq:td-error-discounted} in the text
\ENDFOR  
\STATE {\bf Actor Update:} $\;\;$ Algorithm-Specific
\STATE {\bf Lagrange Multiplier Update:} $\;\;$ see \eqref{eq:lambda-update} in the text
\ENDFOR  
\STATE {\bf return} policy and value function parameters $\theta,\lambda,v,u$
\end{algorithmic}
\caption{Template of the Risk-Sensitive Discounted Reward Actor-Critic Algorithms}\label{algo:discounted-AC}
\end{algorithm}

\noindent
The overall flow of our proposed actor-critic algorithms is illustrated in Figure~\ref{fig:algorithm-flow} and Algorithm \ref{algo:discounted-AC}. The overall operation involves the following two loops: At each time instant $n$, \\

\begin{description}
\item[\textbf{Inner Loop (Critic Update):}] For a fixed policy (given as $\theta_n$), simulate two system trajectories, each of length $m_n$, as follows:\\ 
\begin{inparaenum}[\bfseries1)]
\item {\bf Unperturbed Simulation:} For $m=0,1,\ldots,m_n$, take action $a_m\sim\mu(\cdot|x_m;\theta_n)$, observe the reward $R(x_m,a_m)$, and the next state $x_{m+1}$ in the first trajectory. \\
\item {\bf Perturbed Simulation:} For $m=0,1,\ldots,m_n$,  take action $a^+_m\sim\mu(\cdot|x^+_m;\theta_n^+)$, observe the reward $R(x^+_m,a^+_m)$, and the next state  $x^+_{m+1}$ in the second trajectory. \\
\end{inparaenum}
Using the method of temporal differences (TD)~\cite{Sutton84TC}, estimate the value functions \\$\widehat V^{\theta_n}(x^0)$ and $\widehat V^{\theta_n^+}(x^0)$, and square value functions $\widehat U^{\theta_n}(x^0)$ and $\widehat U^{\theta_n^+}(x^0)$, corresponding to the policy parameter $\theta_n$ and $\theta_n^+$. 
\item[\textbf{Outer Loop (Actor Update):}] Estimate the gradient/Hessian of $\widehat V^{\theta}(x^0)$ and $\widehat U^{\theta}(x^0)$, and hence the gradient/Hessian of Lagrangian $L(\theta,\lambda)$, using either SPSA~\eqref{eq:SPSA-grad} or SF~\eqref{eq:SF-grad} methods. Using these estimates, update the policy parameter $\theta$ in the descent direction using either a gradient or a Newton decrement, and the Lagrange multiplier $\lambda$ in the ascent direction.
\end{description}

\begin{remark}(\textbf{Trajectory length $m_n$}) A simple setting is to have $m_n = C n^{\varsigma}$, where $C$ is a constant and $\varsigma >0$, i.e., have trajectories that increase in length as a function of outer loop index $n$. A constant trajectory length $m_n = \dfrac{C}{1-\gamma}$ is also possible, as after $\dfrac{1}{1-\gamma}$, the discount factor $\gamma$ would have decayed enough to ensure that the value estimate is close enough to the true value.
\end{remark}
In the next section, we describe the TD-critic and subsequently, in Sections \ref{sec:first-algos}--\ref{sec:second-algos}, present the first and second order actor critic algorithms, respectively.

\subsection{TD-Critic}
\label{sec:td}
In our actor-critic algorithms, the critic uses linear approximation for the value and square value functions, i.e.,~$\widehat{V}(x)\approx v\tr\phi_v(x)$ and $\widehat{U}(x)\approx u\tr\phi_u(x)$, where the features $\phi_v(\cdot)$ and $\phi_u(\cdot)$ are from low-dimensional spaces $\R^{\kappa_2}$ and $\R^{\kappa_3}$, respectively.  Let $\Phi_v$ and $\Phi_u$ denote $|\X|\times\kappa_2$ and $|\X|\times\kappa_3$ dimensional matrices, whose $i$th columns are $\phi_v^{(i)}=\big(\phi_v^{(i)}(x),\;x\in\X\big)\tr,\;i=1,\ldots,\kappa_2$ and $\phi_u^{(i)}=\big(\phi_u^{(i)}(x),\;x\in\X\big)\tr,\;i=1,\ldots,\kappa_3$. 
Let $S_v:= \{ \Phi_v v \mid v \in \R^\kappa_2\}$ and $S_u:= \{ \Phi_u u \mid u \in \R^\kappa_3\}$, denote the subspaces within which we approximate the value and square value functions. 
We make the following standard assumption as in~\cite{bhatnagar2009natural}: \\

\noindent
{\bf (A4)} {\em The basis functions $\{\phi_v^{(i)}\}_{i=1}^{\kappa_2}$ and $\{\phi_u^{(i)}\}_{i=1}^{\kappa_3}$ are linearly independent. In particular, $\kappa_2,\kappa_3\ll n$ and $\Phi_v$ and $\Phi_u$ are full rank. Moreover, for every $v\in\R^{\kappa_2}$ and $u\in\R^{\kappa_3}$, $\Phi_vv\neq e$ and $\Phi_uu\neq e$, where $e$ is the $n$-dimensional vector with all entries equal to one.} \\

Let $\Pi_u$ and $\Pi_v$ be operators that project onto $S_v$ and $S_u$, respectively and as a consequence of the above assumption, can be defined as follows:
\begin{align}
 \label{eq:pi-uv}
\Pi_v = \Phi_v (\Phi_v\tr D_\theta \Phi_v)^{-1} \Phi_v\tr D_\theta \text{ and } \Pi_u = \Phi_u (\Phi_u\tr D_\theta \Phi_u)^{-1} \Phi_u\tr D_\theta,
\end{align}
where $D^\theta$ is a diagonal $|\X|\times |\X|$ matrix with entries $d^\theta(x),$ for each $x\in \X$. 

Let $T^\theta = [T_v^\theta; T_u^\theta]$, where $T_v^\theta$ and $T_u^\theta$ denote the Bellman operators for value and square value functions of the policy governed by parameter $\theta$, respectively. These operators are defined as: For any $y\in\R^{2|\X|}$, let $y_v$ and $y_u$ denote the first and last $|\X|$ entries, respectively. Then
\begin{align}
\label{eq:Tv}
T^\theta y &= [ T_v^\theta y; T_u^\theta y], \text{ where }\\
T_v^\theta y&=\boldsymbol{r}^\theta+\gamma\boldsymbol{P}^\theta y_v,\\
T_u^\theta y&=\boldsymbol{R}^\theta\boldsymbol{r}^\theta+2\gamma\boldsymbol{R}^\theta\boldsymbol{P}^\theta y_v+\gamma^2\boldsymbol{P}^\theta y_u, \label{eq:Tu}
\end{align}
 where $\boldsymbol{r}^\theta$ and $\boldsymbol{P}^\theta$ are the reward vector and the transition probability matrix of policy $\theta$, and $\boldsymbol{R}^\theta=diag(\boldsymbol{r}^\theta)$. 

Let 
$\Pi = \left( \begin{array}{cc}                                                                                                                       \Pi_v & 0 \\
0 & \Pi_u     
\end{array} \right)$. Also, for any $y \in \R^{2|\X|}$, define its $\nu$-weighted norm as 
$$\| y \|_\nu = \nu \| y_v \|_{D^\theta} + (1-\nu) \| y_u \|_{D^\theta}.$$ 
We now claim that the projected Bellman operator $\Pi T$ is a contraction mapping w.r.t $\nu$-weighted norm, for any policy $\theta$.
\begin{lemma}
Under (A2) and (A4), there exists a $\nu \in (0,1)$ and $\bar \gamma <1$ such that 
$$\left\| \Pi T y - \Pi T \bar y \right\|_{\nu} \le \bar\gamma \left\| y - \bar y \right\|_{\nu}, \forall y, \bar y \in \R^{2|\X|}.$$
\end{lemma}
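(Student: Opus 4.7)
The plan is to decompose the contraction bound into separate $v$- and $u$-components, exploit the standard fact that $\boldsymbol{P}^\theta$ is non-expansive under the $D^\theta$-weighted norm, and then tune $\nu$ close to $1$ to absorb the cross term that couples the square-value block to the value block. The main obstacle I anticipate is precisely this coupling: the term $2\gamma\boldsymbol{R}^\theta\boldsymbol{P}^\theta y_v$ appearing in~\eqref{eq:Tu} prevents the $u$-block from contracting on its own, so any contraction result must be in a norm that weights the $v$-block heavily enough to control the leakage.

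First I would set $z_v := y_v - \bar y_v$ and $z_u := y_u - \bar y_u$ and read off directly from~\eqref{eq:Tv}--\eqref{eq:Tu} that
\begin{align*}
T_v^\theta y - T_v^\theta \bar y &= \gamma \boldsymbol{P}^\theta z_v, \\
T_u^\theta y - T_u^\theta \bar y &= 2\gamma \boldsymbol{R}^\theta \boldsymbol{P}^\theta z_v + \gamma^2 \boldsymbol{P}^\theta z_u.
\end{align*}
By (A2), $d^\theta$ is the positive stationary distribution of $\boldsymbol{P}^\theta$, and the usual Jensen (or Cauchy--Schwarz) argument yields $\| \boldsymbol{P}^\theta z \|_{D^\theta} \le \| z \|_{D^\theta}$. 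Combining this with the non-expansiveness of the $D^\theta$-orthogonal projections $\Pi_v$ and $\Pi_u$ (which is where (A4) is used, to guarantee these projections are well defined via~\eqref{eq:pi-uv}), I obtain
\begin{align*}
\| \Pi_v (T_v^\theta y - T_v^\theta \bar y) \|_{D^\theta} &\le \gamma \| z_v \|_{D^\theta}, \\
\| \Pi_u (T_u^\theta y - T_u^\theta \bar y) \|_{D^\theta} &\le 2\gamma R_{\max} \| z_v \|_{D^\theta} + \gamma^2 \| z_u \|_{D^\theta},
\end{align*}
where $R_{\max} := \max_x |r^\theta(x)|$ bounds the diagonal operator $\boldsymbol{R}^\theta$ on $L^2(D^\theta)$ and is finite because $\X\times\A$ is finite.

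Assembling the two inequalities through the definition of $\| \cdot \|_\nu$ gives
\begin{equation*}
\| \Pi T^\theta y - \Pi T^\theta \bar y \|_\nu \;\le\; \bigl[\nu \gamma + 2(1-\nu)\gamma R_{\max}\bigr] \| z_v \|_{D^\theta} + (1-\nu) \gamma^2 \| z_u \|_{D^\theta}.
\end{equation*}
To dominate the right-hand side by $\bar\gamma\,\| y-\bar y\|_\nu$ it is enough that $\bar\gamma \ge \gamma^2$ and $\nu \bar\gamma \ge \nu \gamma + 2(1-\nu)\gamma R_{\max}$, so I would set
\begin{equation*}
\bar\gamma \;:=\; \max\bigl\{\, \gamma^2,\; \gamma + 2\gamma R_{\max}(1-\nu)/\nu \,\bigr\}.
\end{equation*}
The only remaining point is to arrange $\bar\gamma < 1$: the first term is strictly below $1$ since $\gamma < 1$, while the second is strictly below $1$ precisely when $\nu > 2\gamma R_{\max}/(1 - \gamma + 2\gamma R_{\max})$, a threshold that lies strictly inside $(0,1)$. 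Choosing any such $\nu$ yields the required $\bar\gamma \in (0,1)$, uniformly in $y,\bar y$ and for each fixed policy $\theta$, completing the proof.
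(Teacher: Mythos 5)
Your proof is correct and follows essentially the same route as the paper: block-decompose $\Pi T^\theta$, use non-expansiveness of $\boldsymbol{P}^\theta$ and of the $D^\theta$-orthogonal projections, and tune $\nu$ so the weighted norm absorbs the $2\gamma\boldsymbol{R}^\theta\boldsymbol{P}^\theta z_v$ coupling term. The only (cosmetic) difference is that you bound that cross term explicitly by $2\gamma R_{\max}\|z_v\|_{D^\theta}$, whereas the paper hides the constant inside a generic $C_1$ obtained from ``equivalence of norms''; your version is, if anything, more self-contained.
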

\begin{proof}
First, it is well-known that $\Pi_v T_v^\theta$ is a contraction mapping (cf. Lemma 6 in \cite{tsitsiklis1997analysis}). This can be inferred as follows: For any $y, \bar y \in\R^{2|\X|}$,
 $$ \| T_v^\theta y - T_v^\theta \bar y \|_{D^\theta} = \gamma\| y_v - \bar y_v \|_{D^\theta}.$$ 
We have used the fact that $\| P^\theta v \|_{D^\theta} \le \| v \|_{D^\theta}$ for any $v \in \R^{|\X|}$ (For a proof, see Lemma 1 in \cite{tsitsiklis1997analysis}).
The claim that $\Pi_v T_v^\theta$ now follows from the fact that the projection operator $\Pi_v$ is non-expansive.

Now, for any $y, \bar y \in \R^{2|\X|}$, we have
\begin{align}
&\|\Pi_u T_u^\theta y - \Pi_u T_u^\theta \bar y\|_{D^\theta}\nonumber\\
 = & \|2\gamma \Pi_u  R^\theta P^\theta y_v - 2\gamma \Pi_u R^\theta P^\theta \bar y_v + \gamma^2 \Pi_u P^\theta y_u - \gamma^2 \Pi_u P^\theta \bar y_u  \|_{D^\theta}\nonumber\\
 \le & 2\gamma \|\Pi_u  R^\theta P^\theta y_v - \Pi_u R^\theta P^\theta \bar y_v \|_{D^\theta} + \gamma^2 \|y_u -  \bar y_u  \|_{D^\theta}\nonumber\\
 \le & \gamma C_1 \|y_v - \bar y_v \|_{D^\theta} + \gamma^2 \|y_u -  \bar y_u  \|_{D^\theta}.\label{eq:piu}
\end{align}
The first inequality above follows from the aforementioned facts that $P^\theta$ and $\Pi_u$ are non-expansive. The second inequality follows by using equivalence of norms (cf. the justification for Eq. (7) in the proof of Lemma 7 in \cite{Tamar13TD}).

Setting $\nu = \dfrac{\gamma C_1}{\epsilon + \gamma C_1}$, where $\epsilon$ is such that $\gamma + \epsilon < 1$ and plugging in \eqref{eq:piu}, we obtain
\begin{align*}
&\|\Pi T^\theta y - \Pi T^\theta \bar y\|_{\nu}\\
= & \nu \| T_v^\theta y - T_v^\theta \bar y \|_{D^\theta} + (1-\nu) \| \Pi_u T_u^\theta y - \Pi_u T_u^\theta \bar y \|_{D^\theta}\\
 \le & \nu \gamma \|y_v - \bar y_v \|_{D^\theta} +  (1- \nu) \gamma C_1 \|y_v - \bar y_v \|_{D^\theta} + (1-\nu) \gamma^2 \|y_u -  \bar y_u  \|_{D^\theta}\\
 \le & \nu (\gamma + \epsilon) \|y_v - \bar y_v \|_{D^\theta}  + (1-\nu) \gamma \|y_u -  \bar y_u  \|_{D^\theta}\\
 \le &  (\gamma + \epsilon) \|y - \bar y \|_{\nu}.
\end{align*}
The claim follows by setting $\bar \gamma = \gamma + \epsilon$.
$\hfill{\blacksquare}$
\end{proof}

Let $[\Phi_v\bar v;\Phi_u\bar u]$ denote the unique fixed-point of the projected Bellman operator $\Pi T$, i.e., 
\begin{align}
\label{eq:td-fixedpoint}
\Phi_v\bar v = \Pi_v\big(T_v (\Phi_v\bar v)\big), \text{ and } \Phi_u\bar u = \Pi_u\big(T_u(\Phi_u\bar u)\big),
\end{align}
where $\Pi_v$ and $\Pi_u$ project into the linear spaces spanned by the columns of $\Phi_v$ and $\Phi_u$, respectively.

We now describe the TD algorithm that updates the critic parameters corresponding to the value and square value functions (Note that we require critic estimates for both the unperturbed as well as the perturbed policy parameters). This algorithm is an extension of the algorithm proposed by \cite{Tamar13TD} to the discounted setting. Recall from Algorithm \ref{algo:discounted-AC} that, at any instant $n$, the TD-critic runs two $m_n$ length trajectories corresponding to policy parameters $\theta_n$ and $\theta_n + \delta \Delta_n$.

\noindent
{\bf Critic Update:} Calculate the temporal difference (TD)-errors $\delta_m,\delta_m^+$ for the value and $\epsilon_m,\epsilon_m^+$ for the square value functions using~\eqref{eq:td-error-discounted}, and update the critic parameters $v_m,v_m^+$ for the value and $u_m,u_m^+$ for the square value functions as follows:
\begin{align}
\text{\bf Unperturbed:}&\nonumber\\
\label{eq:critic-discounted} v_{m+1}=&v_m + \zeta_3(m) \delta_m \phi_v(x_m),\quad\quad u_{m+1}=u_m + \zeta_3(m) \epsilon_m \phi_u(x_m), \\
\text{\bf Perturbed:}&\nonumber\\
  v^+_{m+1}=&v^+_m + \zeta_3(m) \delta^+_m \phi_v(x^+_m),\quad u^+_{m+1}=u^+_m + \zeta_3(m) \epsilon^+_m \phi_u(x^+_m),
\end{align}
where the TD-errors $\delta_m,\delta_m^+,\epsilon_m,\epsilon_m^+$ in~\eqref{eq:critic-discounted} are computed as
\begin{align}
&\text{\bf Unperturbed:}\nonumber\\
&\delta_m  =  R(x_m, a_m) + \gamma v\tr_m \phi_v(x_{m+1}) - v_m\tr \phi_v(x_m), \label{eq:td-error-discounted}
 \\
&\epsilon_m =  R(x_m, a_m)^2 + 2\gamma R(x_m, a_m)v\tr_m \phi_v(x_{m+1})+\gamma^2 u\tr_m \phi_u(x_{m+1}) - u\tr_m \phi_u(x_m), \nonumber \\
&\text{\bf Perturbed:}\nonumber\\
&\delta^{+}_m  = R(x^+_m, a^+_m) + \gamma v^{+\top}_m \phi_v(x^+_{m+1}) - v^{+\top}_m \phi_v(x^+_m), \label{eq:td-error-discounted-perturb} \\
&\epsilon^+_m = R(x^+_m, a^+_m)^2 + 2\gamma R(x^+_m, a^+_m)v^{+\top}_m \phi_v(x^+_{m+1})+\gamma^2 u^{+\top}_m \phi_u(x^+_{m+1}) - u^{+\top}_m \phi_u(x^+_m). \nonumber
\end{align}
Note that the TD-error $\epsilon$ for the square value function $U$ comes directly from its Bellman equation~\eqref{eq:U-W-Bellman}. Theorem \ref{thm:td} in Section \ref{sec:SPSA-SF-proofs} establishes that the critic parameters $(v_n,u_n)$ governed by \eqref{eq:critic-discounted} converge to the solutions $(\bar v, \bar u)$ of the fixed point equation \eqref{eq:td-fixedpoint}.


\subsection{First-Order Algorithms: RS-SPSA-G and RS-SF-G}
\label{sec:first-algos}

{\bf SPSA}-based estimate for $\nabla V^\theta(x^0)$, and similarly for $\nabla U^\theta(x^0)$, is given by
\begin{align}
 \label{eq:SPSA-grad}
\nabla_i \widehat V^\theta(x^0)\quad\approx\quad \dfrac{\widehat V^{\theta+\beta\Delta}(x^0) - \widehat V^\theta(x^0)}{\beta \Delta^{(i)}},\quad\quad\quad i=1,\ldots,\kappa_1,
\end{align}
where $\Delta$ is a vector of independent Rademacher random variables.
The advantage of this estimator is that it perturbs all directions at the same time (the numerator is identical in all $\kappa_1$ components). So, the number of function measurements needed for this estimator is always two, independent of the dimension $\kappa_1$. However, unlike the SPSA estimates in~\cite{Spall92MS} that use two-sided balanced estimates (simulations with parameters $\theta-\beta\Delta$ and $\theta+\beta\Delta$), our gradient estimates are one-sided (simulations with parameters $\theta$ and $\theta+\beta\Delta$) and resemble those in~\cite{Chen99KW}. The use of one-sided estimates is primarily because the updates of the Lagrangian parameter $\lambda$ require a simulation with the running parameter $\theta$. Using a balanced gradient estimate would therefore come at the cost of an additional simulation (the resulting procedure would then require 
three simulations), which we avoid by using one-sided gradient estimates.

\noindent
{\bf SF}-based method estimates not the gradient of a function $H(\theta)$ itself, but rather the convolution of $\nabla H(\theta)$ with the Gaussian density function $\N(\boldsymbol{0},\beta^2\boldsymbol{I})$, i.e.,
\begin{align*}
C_\beta H(\theta) &= \int \G_\beta(\theta-z)\nabla_z H(z)dz= \int\nabla_z\G_\beta(z)H(\theta-z)dz \\
&= \frac{1}{\beta}\int-z'\G_1(z')H(\theta-\beta z')dz',
\end{align*}
where $\G_\beta$ is a $\kappa_1$-dimensional p.d.f. The first equality above follows by using integration by parts and the second one by using the fact that $\nabla_z\G_\beta(z)=\frac{-z}{\beta^2}\G_\beta(z)$ and by substituting $z'=z/\beta$. As $\beta\rightarrow 0$, it can be seen that $C_\beta H(\theta)$ converges to $\nabla_\theta H(\theta)$ (see Chapter~6 of~\cite{Bhatnagar13SR}). Thus, a one-sided SF estimate of $\nabla V^{\theta}(x^0)$ is given by 
\begin{align}
\label{eq:SF-grad}
\nabla_i\widehat V^\theta(x^0)\quad\approx\quad \frac{\Delta^{(i)}}{\beta} \left(\widehat V^{\theta+\beta\Delta}(x^0) - \widehat V^\theta(x^0)\right),\quad\quad\quad i=1,\ldots,\kappa_1,
\end{align}
where $\Delta$ is a vector of independent Gaussian $\N(0,1)$ random variables. \\

\noindent
{\bf Actor Update:} Estimate the gradients $\nabla V^{\theta}(x^0)$ and $\nabla U^{\theta}(x^0)$ using SPSA~\eqref{eq:SPSA-grad} or SF~\eqref{eq:SF-grad} and update the policy parameter $\theta$ as follows\footnote{By an abuse of notation, we use $v_n$ (resp. $v^+_n, u_n, u^+_n$) to denote the critic parameter $v_{m_n}$ (resp. $v^+_{m_n}, u_{m_n}, u^+_{m_n}$)  obtained at the end of a $m_n$ length trajectory.}: For $i=1,\ldots,\kappa_1$, 
\begin{align}
\text{\bf RS-SPSA-G:} \nonumber \\
\theta_{n+1}^{(i)} &= \Gamma_i\bigg[\theta_n^{(i)} + \frac{\zeta_2(n)}{\beta \Delta_n^{(i)}}\Big(\big(1+2\lambda_n v_n\tr \phi_v(x^0)\big)(v^+_n - v_n)\tr \phi_v(x^0) -\lambda_n(u^+_n - u_n)\tr \phi_u(x^0)\Big)\bigg],\label{eq:actor-spsa-update}\\
\text{\bf RS-SF-G:} \nonumber\\
\theta_{n+1}^{(i)} &= \Gamma_i\bigg[\theta_n^{(i)} + \frac{\zeta_2(n)\Delta_n^{(i)}}{\beta}\Big(\big(1+2\lambda_n v_n\tr \phi_v(x^0)\big)(v^+_n - v_n)\tr \phi_v(x^0) - \lambda_n (u^+_n - u_n)\tr \phi_u(x^0)\Big)\bigg].\label{eq:actor-sf-update}
\end{align}
%
For both SPSA and SF variants, the Lagrange multiplier $\lambda$ is updated as follows:
\begin{align}
\hspace{-3em}\lambda_{n+1} &= \Gamma_\lambda\bigg[\lambda_n + \zeta_1(n)\Big(u\tr_n \phi_u(x^0) - \big(v\tr_n \phi_v(x^0)\big)^2 - \alpha \Big)\bigg].
\label{eq:lambda-update}
\end{align}
In the above, note the following:
\begin{enumerate}[\bfseries1)]
\item $\beta>0$ is a small fixed constant and $\Delta_n^{(i)}$'s are independent Rademacher and Gaussian $\N(0,1)$ random variables in SPSA and SF updates, respectively, 
\item $\Gamma$ and $\Gamma_\lambda$ are projection operators that keep the iterates $(\theta_n,\lambda_n)$ stable and were defined in Section \ref{sec:algo-structure}. These projection operators are necessary to keep the iterates stable and hence, ensure convergence of the algorithms.
\end{enumerate}
We provide a proof of convergence of the first-order SPSA and SF algorithms to a tuple $(\theta^{\lambda^*},\lambda^*)$, which is a (local) saddle point of the risk-sensitive objective function $\widehat L(\theta,\lambda) \stackrel{\triangle}{=} -\widehat{V}^\theta(x^0) + \lambda(\widehat{\Lambda}^\theta(x^0) - \alpha)$. Further, the limit $\theta^{\lambda^*}$ satisfies the variance constraint, i.e.,  $\widehat{\Lambda}^{\theta^{\lambda^*}}(x^0) \le \alpha$. See Theorems \ref{thm:spsa-theta-convergence}--\ref{theorem:lambda} and Proposition \ref{prop:feasible} in Section~\ref{sec:SPSA-SF-proofs} for details. 

\begin{remark}\textbf{(On the bias in gradient estimates)}
 Recall that $\widehat V(\theta)$ is the approximate value function for policy $\theta$. Using a Taylor's expansion of $\widehat V(\cdot)$ around $\theta$, we obtain:
$$ \widehat V(\theta + \beta \Delta) = \widehat V(\theta) + \beta \Delta\tr \nabla_\theta \widehat V(\theta) + \frac{\beta^2}{2} \Delta\tr \nabla_\theta^2 \widehat V(\theta) \Delta +  O(\beta_n^3).$$
Assuming an uniform upper bound $C_2$ on $\nabla^2 \widehat V(\cdot)$ and noting that $\Delta$ are Rademacher, we obtain
\begin{align*}
  \E\left[\left.\left(\dfrac{\widehat V(\theta+\beta \Delta) - \widehat V(\theta)}{\beta \Delta^{(i)}}\right)\right| \theta \right]
= & \E\left[ \dfrac{\Delta\tr \nabla_\theta}{\Delta^{(i)}} \widehat V(\theta)  \left. \right| \theta\right] + O(\beta \kappa_1 C_2)\\
= & \nabla_i \widehat V(\theta) + \E\left[ \sum\limits_{j\ne i} \dfrac{\Delta^{(j)}}{\Delta^{(i)}} \nabla_j \widehat V(\theta) \left. \right| \theta\right] + O( \kappa_1 C_2 \beta )\\
= & \nabla_i \widehat V(\theta) + O(\kappa_1 C_2 \beta ).
\end{align*}
Using similar arguments as above, one can conclude that
\begin{align*}
  \E\left[\left.\left(\dfrac{\widehat U(\theta+\beta \Delta) - \widehat U(\theta)}{\beta \Delta^{(i)}}\right)\right| \theta \right]
= & \nabla_i \widehat U(\theta) + O(\kappa_1 C_3 \beta ),
\end{align*}
where $C_3$ upper bounds $\nabla^2 \widehat U(\cdot)$. From the foregoing along with gradient expression for the Lagrangian and the fact that the value function is upper-bounded since we operate in a finite state-action space, it is easy to infer that the bias of one-sided SPSA estimates of the gradient of the Lagrangian is $O(\beta)$.
Later (in Theorem \ref{thm:spsa-theta-convergence}) we establish that the $\theta$-recursion converges to an $\epsilon$-neighborhood of the set of local minima of the Lagrangian, provided $\beta$ is small enough.
\end{remark}

\begin{remark}
\label{subsec:discount-SR}
\textbf{(Extension to Sharpe Ratio Optimization)}

The gradient of Sharpe ratio (SR), $S(\theta)$, in the discounted setting is given by
\begin{equation*}
\nabla S(\theta)=\frac{1}{\sqrt{\Lambda^\theta(x^0)}}\left(\nabla V^\theta(x^0)-\frac{V^\theta(x^0)}{2\Lambda^\theta(x^0)}\nabla\Lambda^\theta(x^0)\right). 
\end{equation*}
The actor recursions for the variants of the RS-SPSA-G and RS-SF-G algorithms that optimize the SR objective are as follows: \\

{\bf RS-SPSA-G}
\begin{align}
\label{eq:discounted-spsa-Sharpe}
\theta^{(i)}_{n+1}&=\Gamma_i\Bigg(\theta^{(i)}_n+\frac{\zeta_2(n)}{\sqrt{u\tr_n \phi_u(x^0) - \big(v\tr_n \phi_v(x^0)\big)^2} \beta \Delta_n^{(i)}} \bigg((v^+_n - v_n)\tr \phi_v(x^0)\\
&-\frac{v_n\tr \phi_v(x^0)\big((u^+_n - u_n)\tr \phi_u(x^0)-2v\tr_n \phi_v(x^0)(v^+_n - v_n)\tr \phi_v(x^0)\big)}{2\Big(u\tr_n \phi_u(x^0) - \big(v\tr_n \phi_v(x^0)\big)^2\Big)}\bigg)\Bigg). \nonumber
\end{align}

{\bf RS-SF-G}
\begin{align}
\label{eq:discounted-sf-Sharpe}
\theta^{(i)}_{n+1} &= \Gamma_i\Bigg(\theta^{(i)}_n+\frac{\zeta_2(n)\Delta_n^{(i)}}{\beta\sqrt{u\tr_n \phi_u(x^0) - \big(v\tr_n \phi_v(x^0)\big)^2}} \bigg((v^+_n - v_n)\tr \phi_v(x^0)\\
& -\frac{v_n\tr \phi_v(x^0)\big((u^+_n - u_n)\tr \phi_u(x^0)-2v\tr_n \phi_v(x^0)(v^+_n - v_n)\tr \phi_v(x^0)\big)}{2\Big(u\tr_n \phi_u(x^0) - \big(v\tr_n \phi_v(x^0)\big)^2\Big)}\bigg)\Bigg). \nonumber
\end{align}

Note that only the actor recursion changes for SR optimization, while the rest of the updates that include the critic recursions for nominal and perturbed parameters remain the same as before in the SPSA and SF based algorithms. Further, SR optimization does not involve the Lagrange parameter $\lambda$, and thus, the proposed actor-critic algorithms are two time-scale (instead of three time-scale as in the described algorithms) stochastic approximation algorithms in this case.    
\end{remark}

\begin{remark} \textbf{(One-simulation SR variant.)}
For the SR objective, the proposed algorithms can be modified to work with only one simulated trajectory of the system. This is because in the SR case, we do not require the Lagrange multiplier $\lambda$, and thus, the simulated trajectory corresponding to the nominal policy parameter $\theta$ is not necessary. In this implementation, the gradient is estimated as $\nabla_iS(\theta) \approx S(\theta +\beta\Delta)/\beta\Delta^{(i)}$ for SPSA and as $\nabla_iS(\theta) \approx (\Delta^{(i)}/\beta)S(\theta +\beta\Delta)$ for SF.
\end{remark}

\begin{remark} \textbf{(Monte-Carlo Critic)}
In the above algorithms, the critic uses a TD method to evaluate the policies. These algorithms can be implemented with a Monte-Carlo critic that at each time instant $n$ computes a sample average of the total discounted rewards corresponding to the nominal $\theta_n$ and perturbed $\theta_n+\beta\Delta_n$ policy parameter. This implementation would be similar to that in~\citep{tamar2012policy}, except here we use simultaneous perturbation methods to estimate the gradient. 
\end{remark}


\subsection{Second-Order Algorithms: RS-SPSA-N and RS-SF-N}
\label{sec:second-algos}

Recall from Section \ref{sec:algo-structure} that a second-order scheme updates the policy parameter in the following manner:
\begin{align}
\label{eq:second-order-theta}
\theta_{n+1} &= \Gamma\big[\theta_n - \zeta_2(n) \nabla^2_\theta L(\theta,\lambda)^{-1} \nabla_\theta L(\theta,\lambda)\big].
\end{align}
From the above, it is evident that for any second-order method, an estimate of the Hessian $\nabla^2_\theta L(\theta,\lambda)$ of the Lagrangian is necessary, in addition to an estimate of the gradient $\nabla_\theta L(\theta,\lambda)$. As in the case of the gradient based schemes outlined earlier, we employ the simultaneous perturbation technique to develop these estimates. The first algorithm, henceforth referred to as RS-SPSA-N, uses SPSA for the gradient/Hessian estimates. On the other hand, the second algorithm, henceforth referred to as RS-SF-N, uses a smoothed functional (SF) approach for the gradient/Hessian estimates.  As confirmed by our numerical experiments, second order methods are in general more accurate, though at the cost of inverting the Hessian matrix in each step. 


\subsubsection{RS-SPSA-N Algorithm}

The Hessian w.r.t.~$\theta$ of $L(\theta,\lambda)$ can be written as follows:
  
\begin{align}
\label{eq:hd}
\nabla^2_\theta L(\theta,\lambda)&= -\nabla^2_\theta V^\theta(x^0) + \lambda \nabla^2_\theta \Lambda^\theta(x^0)\\ 
&=-\nabla^2 V^\theta(x^0) + \lambda \left(\nabla^2 U^\theta(x^0)-2V^\theta(x^0)\nabla^2 V^\theta(x^0) - 2 \nabla V^\theta(x^0)\nabla V^\theta(x^0)\tr\right). \nonumber 
\end{align}

\noindent
{\bf Critic Update:}
As in the case of the gradient based schemes, we run two simulations. However, perturbed simulation here corresponds to the policy parameter $\theta+\beta(\Delta+\widehat\Delta)$, where $\Delta$ and $\widehat\Delta$ represent vectors of independent $\kappa_1$-dimensional Rademacher random variables. 
The critic parameters $v_n, u_n$ from unperturbed simulation and $v^+_n, u^+_n$ from perturbed simulation are updated as described earlier in Section \ref{sec:td}.\\

\noindent
{\bf Gradient and Hessian Estimates:}
Using an SPSA-based estimation technique (see Chapter 7 of \cite{Bhatnagar13SR}), the gradient and Hessian of the value function $V$, and similarly of the square value function $U$, are estimated as follows: For $i=1,\ldots,\kappa_1,$
\begin{align*}
\nabla_i \widehat V^\theta(x^0)&\quad\approx\quad \dfrac{\widehat V^{\theta+\beta(\Delta+\widehat\Delta)}(x^0) - \widehat V^\theta(x^0)}{\beta \Delta^{(i)}} = \dfrac{(v^+_n-v_n)\tr \phi_v(x^0)}{\beta \Delta^{(i)}}, \\
\nabla^2_{i,j} \widehat V^\theta(x^0)&\quad\approx\quad \dfrac{\widehat V^{\theta+\beta(\Delta+\widehat\Delta)}(x^0) - \widehat V^\theta(x^0)}{\beta^2 \Delta^{(i)}\widehat\Delta^{(j)}} = \dfrac{(v^+_n-v_n)\tr \phi_v(x^0)}{\beta^2 \Delta^{(i)}\widehat\Delta^{(j)}}. \\
\end{align*}
The correctness of the above estimates in the limit as $\beta \rightarrow 0$ can be inferred from Lemma \ref{lemma:spsa-n} in the Appendix. The main idea is to expand using suitable Taylor expansions and observe that the bias terms vanish as $\Delta$, being Rademacher, are zero-mean. As in the case of RS-SPSA, this is an one-sided estimate with the unperturbed simulation required for updating the Lagrange multiplier.

\noindent
{\bf Hessian Update:} Using the critic values from the two simulations, we estimate the Hessian $\nabla^2_\theta L(\theta,\lambda)$ as follows: Let $H_n^{(i,j)}$ denote the $n$th estimate of the $(i,j)$th element of the Hessian. Then, for $i,j=1,\ldots, \kappa_1$, with $i\le j$, the update is
\begin{align}
H^{(i, j)}_{n+1}= H^{(i, j)}_n + \zeta'_2(n)\bigg[&\dfrac{\big(1 + \lambda_n (v_n + v_n^+)\tr\phi_v(x^0)\big)(v_n-v^+_n)\tr \phi_v(x^0)}{\beta^2 \Delta^{(i)}_n\widehat\Delta^{(j)}_n} + \dfrac{\lambda_n (u^+_n-u_n)\tr \phi_u(x^0)}{\beta^2 \Delta^{(i)}_n\widehat\Delta^{(j)}_n} - H^{(i, j)}_n \bigg],\label{eq:hessian-update-spsa}
\end{align}
and for $i > j$, we simply set $H^{(i, j)}_{n+1} = H^{(j, i)}_{n+1}$.  In the above, the step-size  $\zeta'_2(n)$ satisfies
$$\sum_{n} \zeta'_2(n) = \infty; \sum_n {\zeta'_2}^2(n) < \infty,
 \dfrac{\zeta_2(n)}{\zeta'_2(n)}\rightarrow 0 \text{ as } n \rightarrow
\infty.$$ 
The last condition above ensures that the Hessian update proceeds on a faster timescale in comparison to the $\theta$-recursion (see \eqref{eq:actor-spsa-n-update} below).
Finally, we set $H_{n+1} = \Upsilon\big([H^{(i,j)}_{n+1}]_{i,j = 1}^{|\kappa_1|}\big)$, where $\Upsilon(\cdot)$ denotes an operator that projects a square matrix onto the set of symmetric and positive definite matrices. This projection is a standard requirement to ensure convergence of $H_n$ to the Hessian $\nabla^2_\theta L(\theta,\lambda)$ and we state the following standard assumption (cf. \cite[Chapter 7]{Bhatnagar13SR}) on this operator:\\


\noindent
{\bf (A5)} {\em 
For any sequence of matrices  $\{A_n\}$ and $\{B_n\}$ in ${\cal R}^{\kappa_1\times \kappa_1}$
such that ${\displaystyle \lim_{n\rightarrow \infty} \parallel A_n-B_n \parallel}$ $= 0$,
the $\Upsilon$ operator satisfies ${\displaystyle \lim_{n\rightarrow \infty} \parallel \Upsilon(A_n)- \Upsilon(B_n) \parallel}$
$= 0$. Further, for any sequence of matrices $\{C_n\}$ in ${\cal R}^{\kappa_1\times \kappa_1}$, we have
$${\displaystyle \sup_n \parallel C_n\parallel}<\infty \quad \Rightarrow \quad \sup_n \parallel \Upsilon(C_n)\parallel < \infty \text{ and }\sup_n \parallel \{\Upsilon(M_n)\}^{-1} \parallel <\infty.$$}

As suggested in \cite{gill1981practical}, a possible definition of $\Upsilon$ is to perform an eigen-decomposition of $H_n$ and then make all eigenvalues positive. This avoids singularity of $H_n$ and also satisfies the above assumption. In our experiments, we use this scheme for projecting $H_n$.

\noindent
{\bf Actor Update:} Let $M_n \stackrel{\triangle}{=} H_n^{-1}$ denote the inverse of the the Hessian estimate $H_n$. We incorporate a Newton decrement to update the policy parameter $\theta$ as follows:
\begin{align}
\theta_{n+1}^{(i)}= \Gamma_i\bigg[\theta_n^{(i)} &+ \zeta_2(n)\sum\limits_{j = 1}^{\kappa_1} M^{(i, j)}_n\Big(\dfrac{\big(1+2\lambda_n v_n\tr \phi_v(x^0)\big)(v^+_n - v_n)\tr \phi_v(x^0)}{\beta \Delta_n^{(j)}} - \dfrac{\lambda_n(u^+_n - u_n)\tr \phi_u(x^0)}{\beta \Delta_n^{(j)}}\Big)\bigg]. \label{eq:actor-spsa-n-update}
\end{align}
In the long run, $M_n$ converges to $\nabla^2_\theta L(\theta,\lambda)^{-1}$, while the last term in the brackets in~\eqref{eq:actor-spsa-n-update} converges to $\nabla_\theta L(\theta,\lambda)$ and hence, the update~\eqref{eq:actor-spsa-n-update} can be seen to descend in $\theta$ using a Newton decrement. Note that the Lagrange multiplier update here is the same as that in RS-SPSA-G.


\subsubsection{RS-SF-N Algorithm}

\noindent
{\bf Gradient and Hessian Estimates:}
While the gradient estimate here is the same as that in the RS-SF-G algorithm, the Hessian is estimated as follows: Recall that $\Delta = \big(\Delta^{(1)},\ldots,\Delta^{(\kappa_1)}\big)\tr$ is a vector of mutually independent $\N(0,1)$ random variables. Let $\bar{H}(\Delta)$ be a $\kappa_1 \times \kappa_1$ matrix defined as
\begin{equation}
\label{H-bar}
\bar{H}(\Delta) \stackrel{\triangle}{=}
\left[
\begin{array}{cccc}
\big(\Delta^{(1)^2}-1\big) & \Delta^{(1)}\Delta^{(2)} & \cdots &
\Delta^{(1)}\Delta^{(\kappa_1)}\\
\Delta^{(2)}\Delta^{(1)}& \big(\Delta^{(2)^2}-1\big) & \cdots &
\Delta^{(2)}\Delta^{(\kappa_1)}\\
\cdots & \cdots & \cdots & \cdots \\
\Delta^{(\kappa_1)}\Delta^{(1)} & \Delta^{(\kappa_1)}\Delta^{(2)} & \cdots &
\big(\Delta^{(\kappa_1)^2}-1\big)
\end{array}
\right].
\end{equation}
Then, the Hessian $\nabla^2_\theta L(\theta,\lambda)$ is approximated as
\begin{align}
\label{p1}
\nabla^2_\theta L(\theta,\lambda)\approx \frac{1}{\beta^2}  \Big[\bar{H}(\Delta)\big(L(\theta +\beta \Delta,\lambda) - L(\theta,\lambda)\big)\Big].
\end{align}
%
The correctness of the above estimate in the limit as $\beta \rightarrow 0$ can be seen from Lemma \ref{lemma:sf-n} in the Appendix. The main idea involves convolving the Hessian with a Gaussian density function (similar to RS-SF) and then performing integration by parts twice.\\

\noindent
{\bf Critic Update:}
As in the case of the RS-SF-G algorithm, we run two simulations with unperturbed and perturbed policy parameters, respectively. Recall that the perturbed simulation corresponds to the policy parameter $\theta+\beta\Delta$, where $\Delta$ represent a vector of independent $\kappa_1$-dimensional Gaussian $\N(0,1)$ random variables. The critic parameters for both these simulations are updated  as described earlier in Section \ref{sec:td}.\\

\noindent
{\bf Hessian Update:} As in RS-SPSA-N, let $H^{(i,j)}_n$ denote the $(i,j)$th element of the Hessian estimate $H_n$ at time step $t$. Using~\eqref{p1}, we devise the following update rule for the Hessian estimate $H_n$: For $i,j,k=1,\ldots,\kappa_1$, $j< k$, the update is
\begin{align}
H^{(i, i)}_{t + 1}= H^{(i, i)}_n + \zeta'_2(n)\bigg[&\dfrac{\big(\Delta^{(i)^2}_n-1\big)}{\beta^2}\Big(\big(1 + \lambda_n (v_n+ v^+_n)\tr\phi_v(x^0)\big)(v_n-v^+_n)\tr \phi_v(x^0) \nonumber \\
&\qquad\qquad\qquad+ \lambda_n (u^+_n-u_n)\tr \phi_u(x^0)\Big) - H^{(i, i)}_n \bigg],\label{eq:hessian-update-sf1}\\
H^{(j, k)}_{t + 1}= H^{(j, k)}_n + \zeta'_2(n)\bigg[&\dfrac{\Delta^{(j)}_n\Delta^{(k)}_n}{\beta^2}\Big(\big(1 + \lambda_n (v_n+ v^+_n)\tr\phi_v(x^0)\big)(v_n-v^+_n)\tr \phi_v(x^0) \nonumber \\
&\qquad\qquad\qquad+ \lambda_n (u^+_n-u_n)\tr \phi_u(x^0)\Big) - H^{(j,k)}_n \bigg],\label{eq:hessian-update-sf2}
\end{align}
and for $j > k$, we set $H^{(j, k)}_{n+1} = H^{(k, j)}_{n+1}$. The step-size $\zeta'_2(n)$ is as in RS-SPSA-N.  Further, as in the latter algorithm, we set $H_{n+1} = \Upsilon\big([H^{(i,j)}_{n+1}]_{i,j = 1}^{|\kappa_1|}\big)$ and let $M_{n+1} \stackrel{\triangle}{=} H_{n+1}^{-1}$ denote its inverse. \\

\noindent
{\bf Actor Update:} Using the gradient and Hessian estimates from the above, we update the policy parameter $\theta$ as follows:
\begin{align}
\label{eq:actor-sf-n-update}
\theta_{n+1}^{(i)}= \Gamma_i\bigg[\theta_n^{(i)} + \zeta_2(n)\sum\limits_{j = 1}^{\kappa_1} M^{(i, j)}_n\frac{ \Delta_n^{(j)}}{\beta}\Big(&\big(1+2\lambda_n v_n\tr \phi_v(x^0)\big)(v^+_n - v_n)\tr \phi_v(x^0) - \lambda_n(u^+_n - u_n)\tr \phi_u(x^0)\Big)\bigg].
\end{align}
As in the case of RS-SPSA-N, it can be seen that the above update rule is equivalent to descent with a Newton decrement, since $M_n$ converges to $\nabla^2_\theta L(\theta,\lambda)^{-1}$, and the last term in the brackets in~\eqref{eq:actor-sf-n-update} converges to $\nabla_\theta L(\theta,\lambda)$. The Lagrange multiplier $\lambda$ update here is the same as that in RS-SF-G.


\begin{remark}
The second-order variants of the algorithms for SR optimization can be worked out along similar lines as outlined in Section~\ref{sec:second-algos} and the details are omitted here.
\end{remark}



\section{Average Reward Setting}
\label{sec:average-setting}

The average reward under policy $\mu$ is defined as 
\begin{align*}
\rho(\mu) \; = \; \lim_{T\rightarrow\infty}\frac{1}{T}\E\left[\sum_{n=0}^{T-1}R_n\mid \mu\right] \; = \; \sum_{x,a}d^\mu(x)\mu(a|x)r(x,a) \; = \; \sum_{x,a}\pi^\mu(x,a)r(x,a),
\end{align*}
where $d^\mu$ and $\pi^\mu$ are the stationary distributions of policy $\mu$ over states and state-action pairs, respectively (see Section~\ref{sec:preliminaries}).
The goal in the standard (risk-neutral) average reward formulation is to find an {\em average optimal} policy, i.e., $\mu^*=\argmax_\mu\rho(\mu)$.
For all states $x\in\X$ and actions $a\in\A$, the {\em differential} action-value and value functions of policy $\mu$ are defined respectively as
\begin{align*}
Q^\mu(x,a)&=\sum_{n=0}^\infty\E\big[R_n-\rho(\mu)\mid x_0=x,a_0=a,\mu\big], \\
V^\mu(x) &= \sum_a\mu(a|x)Q^\mu(x,a).
\end{align*}
These functions satisfy the following Poisson equations~\cite{puterman1994markov} 
\begin{align}
\label{eq:DifV-Q-Poisson}
\rho(\mu)+V^\mu(x) &= \sum_a\mu(a|x)\big[r(x,a)+\sum_{x'}P(x'|x,a)V^\mu(x')\big], \\
\rho(\mu)+Q^\mu(x,a)&=r(x,a)+\sum_{x'}P(x'|x,a)V^\mu(x').
\end{align}
In the context of risk-sensitive MDPs, different criteria have been proposed to define a measure of {\em variability} in the average reward setting, among which we consider the {\em long-run variance} of $\mu$~\cite{filar1989variance} defined as
\begin{equation}
\label{eq:V}
\Lambda(\mu) \; = \; \sum_{x,a}\pi^\mu(x,a)\big[r(x,a)-\rho(\mu)\big]^2 \; = \; \lim_{T\rightarrow\infty}\frac{1}{T}\E\left[\sum_{n=0}^{T-1}\big(R_n-\rho(\mu)\big)^2\mid \mu\right].
\end{equation}
%
This notion of variability is based on the observation that it is the frequency of occurrence of state-action pairs that determine the variability in the average reward. It is easy to show that 
\begin{equation*}
\Lambda(\mu) = \eta(\mu) - \rho(\mu)^2,\quad\quad \text{where} \quad\quad\eta(\mu)=\sum_{x,a}\pi^\mu(x,a)r(x,a)^2.
\end{equation*}
We consider the following risk-sensitive measure for average reward MDPs in this paper: 
\begin{equation}
\label{eq:average-risk-measure}
\max_\theta\rho(\theta)\quad\quad \text{subject to} \quad\quad \Lambda(\theta)\leq\alpha,
\end{equation}
for a given $\alpha >0$.\footnote{Similar to the discounted setting, the risk-sensitive average reward algorithm proposed in this paper can be easily extended to other risk measures based on the long-term variance of $\mu$, including the Sharpe Ratio (SR), i.e.,~$\max_\theta \rho(\theta)/\sqrt{\Lambda(\theta)}$. The extension to SR will be described in more details in Section~\ref{subsec:average-SR}.} As in the discounted setting, we employ the Lagrangian relaxation procedure to convert~\eqref{eq:average-risk-measure} to the unconstrained problem
\begin{equation*}
\max_\lambda\min_\theta\left(L(\theta,\lambda) \stackrel{\triangle}{=} -\rho(\theta)+\lambda\big(\Lambda(\theta)-\alpha\big)\right).
\end{equation*}
%
As in the discounted setting, we descend in $\theta$ using $\nabla_\theta L(\theta,\lambda)=-\nabla_\theta\rho(\theta)+\lambda\nabla_\theta\Lambda(\theta)$ and ascend in $\lambda$ using $\nabla_\lambda L(\theta, \lambda) = \Lambda(\theta)-\alpha$, to find the saddle point of $L(\theta,\lambda)$. Since $\nabla_\theta\Lambda(\theta)=\nabla_\theta\eta(\theta)-2\rho(\theta)\nabla_\theta\rho(\theta)$, in order to compute $\nabla_\theta\Lambda(\theta)$ it would be enough to calculate $\nabla_\theta\rho(\theta)$ and $\nabla_\theta\eta(\theta)$. Let $U^\mu$ and $W^\mu$ denote the differential value and action-value functions associated with the square reward under policy $\mu$, respectively. These two quantities satisfy the following Poisson equations:
\begin{align}
\label{eq:DifU-W-Poisson}
\eta(\mu)+U^\mu(x) &= \sum_a\mu(a|x)\big[r(x,a)^2+\sum_{x'}P(x'|x,a)U^\mu(x')\big], \nonumber \\
\eta(\mu)+W^\mu(x,a) &= r(x,a)^2+\sum_{x'}P(x'|x,a)U^\mu(x').
\end{align}
The gradients of $\rho(\theta)$ and $\eta(\theta)$ are given by the following lemma:
\begin{lemma}
\label{grad-rho-eta}
Under (A1) and (A2), we have
\begin{align}
\nabla_\theta\rho(\theta)&=\sum_{x,a}\pi^\theta(x,a)\nabla_\theta\log\mu(a|x;\theta)Q(x,a;\theta), \label{eq:grad-rho}\\
\nabla_\theta\eta(\theta)&=\sum_{x,a}\pi^\theta(x,a)\nabla_\theta\log\mu(a|x;\theta)W(x,a;\theta).
\label{eq:grad-eta}
\end{align}
\end{lemma}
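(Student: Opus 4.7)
The plan is to adapt the standard policy gradient theorem of Sutton et al.\ (for the risk-neutral average reward case) to both $\rho(\theta)$ and $\eta(\theta)$ in a unified way, exploiting the fact that both satisfy Poisson equations of the same structural form, with $r(x,a)$ replaced by $r(x,a)^2$ for the second identity.

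For the first identity, I would start by differentiating the Poisson equation \eqref{eq:DifV-Q-Poisson} w.r.t.\ $\theta$:
\begin{equation*}
\nabla_\theta\rho(\theta) + \nabla_\theta V^\theta(x) = \sum_a \nabla_\theta\mu(a|x;\theta)\bigl[r(x,a)+\sum_{x'}P(x'|x,a)V^\theta(x')\bigr] + \sum_a \mu(a|x;\theta)\sum_{x'}P(x'|x,a)\nabla_\theta V^\theta(x').
\end{equation*}
Recognising the bracketed expression as $\rho(\theta) + Q^\theta(x,a)$ (by the action-value Poisson equation) and noting that $\sum_a \nabla_\theta\mu(a|x;\theta) \cdot \rho(\theta) = \rho(\theta)\nabla_\theta \sum_a \mu(a|x;\theta) = 0$, I can rewrite the equation as
\begin{equation*}
\nabla_\theta\rho(\theta) = \sum_a \nabla_\theta\mu(a|x;\theta)\, Q^\theta(x,a) + \sum_a \mu(a|x;\theta)\sum_{x'}P(x'|x,a)\nabla_\theta V^\theta(x') - \nabla_\theta V^\theta(x).
\end{equation*}
Next I multiply both sides by $d^\theta(x)$ and sum over $x$. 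On the left, the sum contributes $\nabla_\theta\rho(\theta)$ since $\sum_x d^\theta(x) = 1$. On the right, the stationarity identity $\sum_x d^\theta(x)\sum_a \mu(a|x;\theta)P(x'|x,a) = d^\theta(x')$ (which follows from (A2)) makes the second term equal to $\sum_{x'} d^\theta(x') \nabla_\theta V^\theta(x')$, cancelling the third term. Finally, using $\nabla_\theta \mu(a|x;\theta) = \mu(a|x;\theta)\nabla_\theta \log\mu(a|x;\theta)$ (valid under (A1)) and $\pi^\theta(x,a)=d^\theta(x)\mu(a|x;\theta)$ yields \eqref{eq:grad-rho}.

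For the second identity, I would apply exactly the same manipulation to the Poisson equation \eqref{eq:DifU-W-Poisson} for the square reward. Differentiating w.r.t.\ $\theta$, invoking $\eta(\theta) + W^\theta(x,a) = r(x,a)^2 + \sum_{x'} P(x'|x,a) U^\theta(x')$ to replace the bracket, using $\sum_a \nabla_\theta \mu(a|x;\theta) \cdot \eta(\theta) = 0$, multiplying by $d^\theta(x)$, summing over $x$, and applying stationarity to cancel the $\nabla_\theta U^\theta$ terms produces \eqref{eq:grad-eta}. The proof is essentially identical, with $(V^\theta, Q^\theta, \rho, r)$ replaced by $(U^\theta, W^\theta, \eta, r^2)$.

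I do not anticipate a serious obstacle: the main subtlety is ensuring that the telescoping trick (cancellation of the $\nabla_\theta V^\theta$ and $\nabla_\theta U^\theta$ terms after weighting by the stationary distribution) is rigorously justified, which requires (A2) so that $d^\theta$ exists and satisfies the invariance $d^\theta P^\theta = d^\theta$, and (A1) so that the differentiation inside the sum is valid and $\nabla_\theta \log \mu(a|x;\theta)$ is well-defined. Given the finite state--action assumption, all sums are finite and interchange of gradient and summation is automatic, so no additional regularity arguments are needed.
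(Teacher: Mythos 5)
Your proposal is correct and follows essentially the same route as the paper: the paper cites the literature for \eqref{eq:grad-rho} and proves \eqref{eq:grad-eta} by differentiating the Poisson equation for $U$, substituting the Poisson equation for $W$, weighting by the stationary distribution $d^\theta$, and cancelling the $\nabla_\theta U$ terms via stationarity --- exactly your telescoping argument. Your explicit derivation of the first identity is the standard average-reward policy gradient proof and is a harmless expansion of what the paper delegates to a citation.
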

\begin{proof}
The proof of $\nabla_\theta\rho(\theta)$ can be found in the literature (e.g.,~\cite{Sutton00PG,Konda00AA}). To prove $\nabla_\theta\eta(\theta)$, we start by the fact that from~\eqref{eq:DifU-W-Poisson}, we have $U(x) = \sum_a\mu(x|a)W(x,a)$. If we take the derivative w.r.t.~$\theta$ from both sides of this equation, we obtain
\begin{align}
\label{eq:proof-grad-eta1}
\nabla U(x) &= \sum_a\nabla\mu(x|a)W(x,a)+\sum_a\mu(x|a)\nabla W(x,a) \nonumber \\
&= \sum_a\nabla\mu(x|a)W(x,a) +\sum_a\mu(x|a)\nabla \big(r(x,a)^2-\eta+\sum_{x'}P(x'|x,a)U(x')\big) \nonumber \\
&= \sum_a\nabla\mu(x|a)W(x,a) - \nabla\eta + \sum_{a,x'}\mu(a|x)P(x'|x,a)\nabla U(x').
\end{align}
The second equality is by replacing $W(x,a)$ from~\eqref{eq:DifU-W-Poisson}. Now if we take the weighted sum, weighted by $d^\mu(x)=d^\theta(x)$, from both sides of~\eqref{eq:proof-grad-eta1}, we have
\begin{align}
\label{eq:proof-grad-eta2}
\sum_xd^\mu(x)\nabla U(x) &= \sum_{x,a}d^\mu(x)\nabla\mu(a|x)W(x,a)-\nabla\eta \nonumber \\ 
&+\sum_{x,a,x'}d^\mu(x)\mu(a|x)P(x'|x,a)\nabla U(x').
\end{align}
The claim follows from the fact that the last sum on the RHS of~\eqref{eq:proof-grad-eta2} is equal to \\$\sum_xd^\mu(x)\nabla U(x)$.
$\hfill{\blacksquare}$
\end{proof}

Note that~\eqref{eq:grad-eta} for calculating $\nabla\eta(\theta)$ has close resemblance to~\eqref{eq:grad-rho} for $\nabla\rho(\theta)$, and thus, similar to what we have for~\eqref{eq:grad-rho}, any function $b:\X\rightarrow\R$ can be added or subtracted to $W(x,a;\theta)$ on the RHS of~\eqref{eq:grad-eta} without changing the result of the integral (see e.g.,~\cite{bhatnagar2009natural}). So, we can replace $W(x,a;\theta)$ with the square reward advantage function $B(x,a;\theta)=W(x,a;\theta)-U(x;\theta)$ on the RHS of~\eqref{eq:grad-eta} in the same manner as we can replace $Q(x,a;\theta)$ with the advantage function $A(x,a;\theta)=Q(x,a;\theta)-V(x;\theta)$ on the RHS of~\eqref{eq:grad-rho} without changing the result of the integral. We define the temporal difference (TD) errors $\delta_n$ and $\epsilon_n$ for the differential value and square value functions as
\begin{align*}
\delta_n &= R(x_n,a_n) - \widehat{\rho}_{n+1} + \widehat{V}(x_{n+1}) - \widehat{V}(x_n), \\
\epsilon_n &= R(x_n,a_n)^2 - \widehat{\eta}_{n+1} + \widehat{U}(x_{n+1}) - \widehat{U}(x_n).
\end{align*}
If $\widehat{V}$, $\widehat{U}$, $\widehat{\rho}$, and $\widehat{\eta}$ are unbiased estimators of $V^\mu$, $U^\mu$, $\rho(\mu)$, and $\eta(\mu)$, respectively, then we show in Lemma~\ref{TD-error-Advantage} that $\delta_n$ and $\epsilon_n$ are unbiased estimates of the advantage functions $A^\mu$ and $B^\mu$, i.e.,~$\E[\left. \delta_n \right|x_n, a_n,\mu] = A^\mu(x_n, a_n)$ and $\E[\left. \epsilon_n \right|x_n, a_n,\mu] = B^\mu(x_n, a_n)$.

\begin{lemma} 
\label{TD-error-Advantage}
For any given policy $\mu$, we have
\begin{align*}
\E[\left. \delta_n \right|x_n, a_n,\mu] = A^\mu(x_n, a_n), \quad\quad\quad\quad\quad \E[\left. \epsilon_n \right|x_n, a_n,\mu] = B^\mu(x_n, a_n).
\end{align*}
\end{lemma}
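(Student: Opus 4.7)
The plan is to compute both conditional expectations directly by substituting the definitions of $\delta_n$ and $\epsilon_n$, invoking the unbiasedness of the estimators, and then matching the resulting expressions against the Poisson equations \eqref{eq:DifV-Q-Poisson} and \eqref{eq:DifU-W-Poisson}.

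For the first claim, I would take $\E[\,\cdot \mid x_n, a_n, \mu]$ inside the definition of $\delta_n$ and use linearity together with the unbiasedness hypothesis on $\widehat{\rho}$ and $\widehat{V}$. The reward term is deterministic given $(x_n,a_n)$ and contributes $r(x_n,a_n)$; the estimator $\widehat{\rho}_{n+1}$ contributes $\rho(\mu)$; and $\widehat{V}(x_{n+1})$ contributes $\sum_{x'} P(x'\mid x_n,a_n) V^\mu(x')$ after conditioning on the transition and invoking unbiasedness. Collecting terms yields
\begin{align*}
\E[\delta_n \mid x_n, a_n, \mu] = r(x_n, a_n) - \rho(\mu) + \sum_{x'} P(x' \mid x_n, a_n) V^\mu(x') - V^\mu(x_n).
\end{align*}
The Poisson equation for the action-value function in \eqref{eq:DifV-Q-Poisson} identifies the first three terms on the right-hand side as $Q^\mu(x_n,a_n)$, and subtracting $V^\mu(x_n)$ then gives $A^\mu(x_n,a_n)$ by definition.

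The argument for $\epsilon_n$ is structurally identical. Conditioning on $(x_n, a_n, \mu)$ and applying unbiasedness of $\widehat{\eta}$ and $\widehat{U}$ produces
\begin{align*}
\E[\epsilon_n \mid x_n, a_n, \mu] = r(x_n, a_n)^2 - \eta(\mu) + \sum_{x'} P(x' \mid x_n, a_n) U^\mu(x') - U^\mu(x_n),
\end{align*}
and the Poisson equation for $W^\mu$ in \eqref{eq:DifU-W-Poisson} rewrites the first three terms as $W^\mu(x_n,a_n)$, so the whole expression equals $W^\mu(x_n,a_n) - U^\mu(x_n) = B^\mu(x_n,a_n)$.

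I do not anticipate a genuine obstacle, since everything is mechanical once the unbiasedness of $(\widehat\rho, \widehat V, \widehat\eta, \widehat U)$ and the Poisson equations are in hand. The only minor subtlety worth flagging is that one must be careful about the conditioning: $\widehat{\rho}_{n+1}$ and $\widehat{V}(x_{n+1})$ are functions of the future trajectory under $\mu$, and "unbiased" here is meant in the sense that their conditional means given $(x_n,a_n)$ (and the next state, where appropriate) coincide with the true $\rho(\mu)$ and $V^\mu(\cdot)$; the tower property then handles the chained conditioning cleanly, with the analogous statement for $\widehat\eta$ and $\widehat U$.
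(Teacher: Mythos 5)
Your proposal is correct and follows essentially the same route as the paper: condition on $(x_n,a_n)$, use the tower property and unbiasedness of the estimators, and identify the resulting expression with $W^\mu(x_n,a_n)-U^\mu(x_n)=B^\mu(x_n,a_n)$ via the Poisson equation \eqref{eq:DifU-W-Poisson}. The only cosmetic difference is that the paper cites the first identity (for $\delta_n$) to Lemma~3 of \cite{bhatnagar2009natural} and writes out only the $\epsilon_n$ computation, whereas you carry out both; the computation is identical.
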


\begin{proof}
The first statement $\E[\left. \delta_n \right|x_n, a_n,\mu] = A^\mu(x_n, a_n)$ has been proved in Lemma~3 of~\cite{bhatnagar2009natural}, so here we only prove the second statement $\E[\left. \epsilon_n \right|x_n, a_n,\mu] = B^\mu(x_n, a_n)$. we may write
\begin{align*}
\E[\left. \epsilon_n \right|x_n, a_n,\mu] &= \E\big[R(x_n,a_n)^2 - \widehat{\eta}_{n+1} + \widehat{U}(x_{n+1}) - \widehat{U}(x_n)\;|\;x_n, a_n,\mu\big] \\
&= r(x_n,a_n)^2 - \eta(\mu) + \E\big[\widehat{U}(x_{n+1})\;|\;x_n, a_n,\mu\big] - U^\mu(x_n) \\
&= r(x_n,a_n)^2 - \eta(\mu) + \E\Big[\E\big[\widehat{U}(x_{n+1})\;|\;x_{n+1},\mu\big]\;|\;x_n, a_n\Big] - U^\mu(x_n) \\
&= r(x_n,a_n)^2 - \eta(\mu) + \E\big[\widehat{U}(x_{n+1})\;|\;x_n, a_n\big] - U^\mu(x_n) \\
&= \underbrace{r(x_n,a_n)^2 - \eta(\mu) + \sum_{x_{n+1}\in\X}P(x_{n+1}|x_n,a_n)U^\mu(x_{n+1})}_{W^\mu(x,a)} - U^\mu(x_n) \\
&= B^\mu(x,a). 
\end{align*}
$\hfill{\blacksquare}$
\end{proof}

\noindent
From Lemma~\ref{TD-error-Advantage}, we notice that $\delta_n\psi_n$ and $\epsilon_n\psi_n$ are unbiased estimates of $\nabla\rho(\mu)$ and $\nabla\eta(\mu)$, respectively, where $\psi_n=\psi(x_n,a_n)=\nabla\log\mu(a_n|x_n)$ is the {\em compatible} feature (see e.g.,~\cite{Sutton00PG,Peters05NA}). 


\section{Average Reward Risk-Sensitive Actor-Critic Algorithm}
\label{sec:average-alg}

We now present our risk-sensitive actor-critic algorithm for average reward MDPs. Algorithm~\ref{algo:average-AC} presents the complete structure of the algorithm along with the update rules for the average rewards $\widehat{\rho}_n,\widehat{\eta}_n$; TD errors $\delta_n,\epsilon_n$; critic $v_n,u_n$; and actor $\theta_n,\lambda_n$ parameters. The projection operators $\Gamma$ and $\Gamma_\lambda$ are as defined in Section~\ref{sec:discounted-alg}, and similar to the discounted setting, are necessary for the convergence proof of the algorithm. The step-size schedules satisfy (A3) defined in Section \ref{sec:discounted-alg}, plus the step size schedule $\{\zeta_4(n)\}$ satisfies $\zeta_4(n)=k\zeta_3(n)$, for some positive constant $k$. This is to ensure that the average and critic updates are on the (same) fastest time-scale $\{\zeta_4(n)\}$ and $\{\zeta_3(n)\}$, the policy parameter update is on the intermediate time-scale $\{\zeta_2(n)\}$, and the Lagrange multiplier update is on the slowest time-scale $\{\zeta_1(n)\}$. This results in a three time-scale stochastic approximation algorithm. 

\begin{algorithm}
\begin{algorithmic}
\STATE {\bf Input:} parameterized policy $\mu(\cdot|\cdot;\theta)$ and value function feature vectors $\phi_v(\cdot)$ and $\phi_u(\cdot)$
\STATE {\bf Initialization:} policy parameters $\theta=\theta_0$; value function weight vectors $v=v_0$ and $u=u_0$; initial state $x_0\sim P_0(x)$
\FOR{$t = 0,1,2,\ldots$}
\STATE Draw action $a_n \sim \mu(\cdot|x_n;\theta_n)$ and observe the next state $x_{n+1}\sim P(\cdot|x_n,a_n)$ and the reward $R(x_n,a_n)$
\begin{align}
\textrm{\bf Average Updates:} \quad&  \widehat{\rho}_{n+1}=\big(1-\zeta_4(n)\big)\widehat{\rho}_n+\zeta_4(n)R(x_n, a_n),\nonumber\\ &\widehat{\eta}_{n+1}=\big(1-\zeta_4(n)\big)\widehat{\eta}_n +\zeta_4(n)R(x_n, a_n)^2 \nonumber \\
\textrm{{\bf TD Errors:}} \quad& \delta_n=R(x_n, a_n) - \widehat{\rho}_{n+1}+v\tr_n \phi_v(x_{n+1}) - v_n\tr \phi_v(x_n) \nonumber\\ 
&\epsilon_n =R(x_n, a_n)^2-\widehat{\eta}_{n+1}+u\tr_n \phi_u(x_{n+1}) - u\tr_n \phi_u(x_n)\nonumber\\
\textrm{{\bf Critic Update:}}\quad& \label{eq:average-critic-update}
v_{n+1}=v_n + \zeta_3(n) \delta_n \phi_v(x_n), \quad\quad\quad\quad u_{n+1}= u_n + \zeta_3(n) \epsilon_n \phi_u(x_n) \\
\textrm{{\bf Actor Update:}}\quad&  \label{eq:average-actor-update-theta-L}
\theta_{n+1}=\Gamma\Big(\theta_n-\zeta_2(n)\big(-\delta_n\psi_n+\lambda_n(\epsilon_n\psi_n-2\widehat{\rho}_{n+1}\delta_n\psi_n)\big)\Big) \\
\label{eq:average-actor-update-lambda}
&\lambda_{n+1}=\Gamma_\lambda\Big(\lambda_n+\zeta_1(n)(\widehat{\eta}_{n+1}-\widehat{\rho}_{n+1}^2-\alpha)\Big) 
\end{align}
\ENDFOR  
\STATE {\bf return} policy and value function parameters $\theta,\lambda,v,u$
\end{algorithmic}
\caption{Template of the Average Reward Risk-Sensitive Actor-Critic Algorithm}\label{algo:average-AC}
\end{algorithm}

As 
in the discounted setting, the critic uses linear approximation for the differential value and square value functions, i.e.,~$\widehat{V}(x)=v\tr \phi_v(x)$ and $\widehat{U}(x)=u\tr \phi_u(x)$, where $\phi_v(\cdot)$ and $\phi_u(\cdot)$ are feature vectors of size $\kappa_2$ and $\kappa_3$, respectively. 
Although our estimates of $\rho(\theta)$ and $\eta(\theta)$ are unbiased, since we use biased estimates for $V^\theta$ and $U^\theta$ (linear approximations in the critic), our gradient estimates $\nabla_\theta\rho(\theta)$ and $\nabla_\theta\eta(\theta)$, and as a result $\nabla_\theta L(\theta,\lambda)$, are biased. The following lemma shows the bias in our estimate of $\nabla_\theta L(\theta,\lambda)$. 

\begin{lemma} 
\label{bias-average}
The bias of our actor-critic algorithm in estimating $\nabla_\theta L(\theta,\lambda)$ for fixed $\theta$ and $\lambda$ is
\begin{align*}
\B(\theta,\lambda)=\sum_xd^\theta(x)\Big(&-\big(1+2\lambda\rho(\theta)\big)\big[\nabla\bar{V}^\theta(x)-\nabla v^{\theta\top}\phi_v(x)\big] +\lambda\big[\nabla\bar{U}^\theta(x) - \nabla u^{\theta\top}\phi_u(x)\big]\Big),
\end{align*}
where $v^{\theta\top}\phi_v(\cdot)$ and $u^{\theta\top}\phi_u(\cdot)$ are estimates of $V^\theta(\cdot)$ and $U^\theta(\cdot)$ upon convergence of the TD recursion, and 
\begin{align*}
\bar{V}^\theta(x) &= \sum_a\mu(a|x)\big[r(x,a) - \rho(\theta) + \sum_{x'}P(x'|x,a)v^{\theta\top}\phi_v(x')\big], \\
\bar{U}^\theta(x) &= \sum_a\mu(a|x)\big[r(x,a)^2 - \eta(\theta) + \sum_{x'}P(x'|x,a)u^{\theta\top}\phi_u(x')\big].
\end{align*}
\end{lemma}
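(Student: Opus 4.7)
The bias is defined as the difference between the expected actor update direction and the true gradient $\nabla_\theta L(\theta,\lambda)$. Since the actor update in \eqref{eq:average-actor-update-theta-L} uses the vector $g_n \stackrel{\triangle}{=} -(1+2\lambda_n\widehat{\rho}_{n+1})\delta_n\psi_n + \lambda_n\epsilon_n\psi_n$, my plan is to compute $\E[g_n]$ in the stationary regime (i.e., under the assumption that the averages $\widehat{\rho}_{n+1}, \widehat{\eta}_{n+1}$ have equilibrated to $\rho(\theta),\eta(\theta)$ and the critic parameters $v_n,u_n$ have reached their limits $v^\theta,u^\theta$), then subtract $\nabla_\theta L(\theta,\lambda) = -(1+2\lambda\rho(\theta))\nabla\rho(\theta)+\lambda\nabla\eta(\theta)$ and verify that what remains equals $\B(\theta,\lambda)$.

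The heart of the proof is to evaluate $\E[\delta_n\psi_n]$ under stationarity; the analogous argument for $\E[\epsilon_n\psi_n]$ then follows verbatim with $(v^\theta,\phi_v,\bar V^\theta,\rho)$ replaced by $(u^\theta,\phi_u,\bar U^\theta,\eta)$. First I would write
\begin{align*}
\E[\delta_n\psi_n] = \sum_{x,a} d^\theta(x)\mu(a|x)\psi(x,a)\bigl[r(x,a)-\rho(\theta)+\textstyle\sum_{x'}P(x'|x,a)v^{\theta\top}\phi_v(x')-v^{\theta\top}\phi_v(x)\bigr].
\end{align*}
Then I would exploit the standard baseline identity $\sum_a\mu(a|x)\psi(x,a)=\sum_a\nabla\mu(a|x)=0$ to eliminate both the $\rho(\theta)$ term and the $v^{\theta\top}\phi_v(x)$ baseline, reducing the expression to $\sum_{x,a} d^\theta(x)\nabla\mu(a|x)\bigl[r(x,a)+\sum_{x'}P(x'|x,a)v^{\theta\top}\phi_v(x')\bigr]$.

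Next I would differentiate the defining equation $\bar V^\theta(x)=\sum_a\mu(a|x)[r(x,a)-\rho(\theta)+\sum_{x'}P(x'|x,a)v^{\theta\top}\phi_v(x')]$ and rearrange (again using that baselines depending on $x$ vanish against $\nabla\mu(a|x)$) to obtain
\begin{align*}
\sum_a\nabla\mu(a|x)\bigl[r(x,a)+\textstyle\sum_{x'}P(x'|x,a)v^{\theta\top}\phi_v(x')\bigr] = \nabla\bar V^\theta(x) + \nabla\rho(\theta) - \sum_{a,x'}\mu(a|x)P(x'|x,a)\nabla\bigl(v^{\theta\top}\phi_v(x')\bigr).
\end{align*}
Summing over $x$ weighted by $d^\theta(x)$, the crucial observation is that stationarity of the induced Markov chain (assumption (A2)) gives $\sum_{x,a}d^\theta(x)\mu(a|x)P(x'|x,a)=d^\theta(x')$, so the final transition-kernel term collapses to $\sum_{x'}d^\theta(x')\nabla(v^{\theta\top}\phi_v(x'))$. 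Combining,
\begin{align*}
\E[\delta_n\psi_n] = \nabla\rho(\theta) + \sum_x d^\theta(x)\bigl[\nabla\bar V^\theta(x)-\nabla v^{\theta\top}\phi_v(x)\bigr],
\end{align*}
and the analogous identity for $\E[\epsilon_n\psi_n]$ with $\eta, \bar U^\theta, u^\theta$ in place of $\rho, \bar V^\theta, v^\theta$ follows by repeating the argument on \eqref{eq:DifU-W-Poisson}.

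Finally, I would substitute both identities into $\E[g_n] = -(1+2\lambda\rho(\theta))\E[\delta_n\psi_n]+\lambda\E[\epsilon_n\psi_n]$, recognize that the leading terms assemble into $\nabla_\theta L(\theta,\lambda)$, and read off that the remaining terms are precisely $\B(\theta,\lambda)$ as stated. The main obstacle is the bookkeeping around which terms survive as true gradient contributions versus which persist as bias; the key levers are (i) the vanishing of $x$-baselines against $\nabla\mu(a|x)$ and (ii) the stationarity of $d^\theta$, both of which are already granted by assumptions (A1)--(A2). Everything else is algebraic manipulation of the Poisson equations and their derivatives.
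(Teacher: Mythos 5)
Your proposal is correct and follows essentially the same route as the paper: the paper decomposes the bias into the $\nabla\rho$ and $\nabla\eta$ contributions, invokes Lemma~4 of \cite{bhatnagar2009natural} for the identity $\E[\delta_n^\theta\psi_n|\theta]=\nabla\rho(\theta)+\sum_x d^\theta(x)[\nabla\bar V^\theta(x)-\nabla v^{\theta\top}\phi_v(x)]$, asserts the analogous identity for $\epsilon_n^\theta$, and combines them via $\nabla L(\theta,\lambda)=-(1+2\lambda\rho(\theta))\nabla\rho(\theta)+\lambda\nabla\eta(\theta)$, exactly as you do. The only difference is that you derive the cited identity from scratch (baseline cancellation against $\nabla\mu$, differentiation of the Poisson-type equation for $\bar V^\theta$, and stationarity of $d^\theta$), which is a correct and more self-contained rendering of the same argument.
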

\begin{proof}
The bias in estimating $\nabla L(\theta,\lambda)$ consists of the bias in estimating $\nabla\rho(\theta)$ and $\nabla\eta(\theta)$. Lemma~4 in~\citet{bhatnagar2009natural} shows the bias in estimating $\nabla\rho(\theta)$ as
\begin{align*}
\E[\delta_n^\theta\psi_n|\theta]=\nabla\rho(\theta)+\sum_{x\in\X}d^\theta(x)\big[\nabla\bar{V}^\theta(x)-\nabla v^{\theta\top}\phi_v(x)\big],
\end{align*}
where $\delta_n^\theta=R(x_n,a_n) - \widehat{\rho}_{n+1} + v^{\theta\top}\phi_v(x_{n+1}) - v^{\theta\top}\phi_v(x_n)$. Similarly we can prove that the bias in estimating $\nabla\eta(\theta)$ is 
\begin{align*}
\E[\epsilon_n^\theta\psi_n|\theta]=\nabla\eta(\theta)+\sum_{x\in\X}d^\theta(x)\big[\nabla\bar{U}^\theta(x)-\nabla u^{\theta\top}\phi_u(x)\big],
\end{align*}
where $\epsilon_n^\theta=R(x_n,a_n) - \widehat{\eta}_{n+1} + u^{\theta\top}\phi_u(x_{n+1}) - u^{\theta\top}\phi_u(x_n)$. The claim follows by putting these two results together and given the fact that $\nabla\Lambda(\theta)=\nabla\eta(\theta)-2\rho(\theta)\nabla\rho(\theta)$ and $\nabla L(\theta,\lambda)=-\nabla\rho(\theta)+\lambda\nabla\Lambda(\theta)$. Note that the following fact holds for the bias in estimating $\nabla\rho(\theta)$ and $\nabla\eta(\theta)$:
\begin{align*}
\sum_xd^\theta(x)\big[\bar{V}^\theta(x) - v^{\theta\top}\phi_v(x)\big]=0, \quad\quad\quad \sum_xd^\theta(x)\big[\bar{U}^\theta(x) - u^{\theta\top}\phi_u(x)\big]=0. \hspace{0.825in} 
\end{align*}
$\hfill{\blacksquare}$
\end{proof}

\begin{remark}\label{subsec:average-SR}
(\textbf{Extension to Sharpe Ratio Optimization})

The gradient of the Sharpe Ratio (SR) in the average setting is given by
\begin{align*}
\nabla S(\theta)&=\frac{1}{\sqrt{\Lambda(\theta)}}\big(\nabla\rho(\theta)-\frac{\rho(\theta)}{2\Lambda(\theta)}\nabla\Lambda(\theta)\big),
\end{align*}
and thus, the actor recursion for the SR-variant of our average reward risk-sensitive actor-critic algorithm is as follows:
\begin{align}
\label{eq:average-actor-update-Sharpe}
\theta_{n+1}&=\Gamma\Big(\theta_n+\frac{\zeta_2(n)}{\sqrt{\widehat{\eta}_{n+1}-\widehat{\rho}_{n+1}^2}}\big(\delta_n\psi_n-\frac{\widehat{\rho}_{n+1}(\epsilon_n\psi_n-2\widehat{\rho}_{n+1}\delta_n\psi_n)}{2(\widehat{\eta}_{n+1}-\widehat{\rho}_{n+1}^2)}\big)\Big).
\end{align}
Note that the rest of the updates, including the average reward, TD errors, and critic recursions are as in the risk-sensitive actor-critic algorithm presented in Algorithm~\ref{algo:average-AC}. Similar to the discounted setting, since there is no Lagrange multiplier in the SR optimization, the resulting actor-critic algorithm is a two time-scale stochastic approximation algorithm.
\end{remark}

\begin{remark}
In the discounted setting, another popular variability measure is the {\em discounted normalized variance}~\citep{filar1989variance}
\begin{equation}
\label{eq:V3}
\Lambda(\mu) = \E\left[\sum_{n=0}^\infty\gamma^n\big(R_n-\rho_\gamma(\mu)\big)^2\right],
\end{equation}
where $\rho_\gamma(\mu)=\sum_{x,a}d^\mu_\gamma(x|x^0)\mu(a|x)r(x,a)$ and $d^\mu_\gamma(x|x^0)$ is the $\gamma$-discounted visiting distribution of state $x$ under policy $\mu$, defined in Section~\ref{sec:preliminaries}. 
%
%
The variability measure~\eqref{eq:V3} has close resemblance to the average reward variability measure~\eqref{eq:V}, and thus, any (discounted) risk measure based on~\eqref{eq:V3} can be optimized similar to the corresponding average reward risk measure~\eqref{eq:V}. 
\end{remark}

\begin{remark}
\textbf{(Simultaneous perturbation analogues)} In the average reward setting, a simultaneous perturbation algorithm would estimate the average reward $\rho$ and the square reward $\eta$ on the faster timescale and use these to estimate the gradient of the performance objective. However, a drawback with this approach, compared to the algorithm proposed above is the necessity for having two simulated trajectories (instead of one) for each policy update.
\end{remark}

In the following section, we establish the convergence of our average reward actor-critic algorithm to a (local) saddle point of the risk-sensitive objective function $L(\theta,\lambda)$. 


\section{Convergence Analysis of the Discounted Reward Risk-Sensitive Actor-Critic Algorithms}
\label{sec:SPSA-SF-proofs}

Our proposed actor-critic algorithms use multi-timescale stochastic approximation and we use the ordinary differential equation (ODE) approach (see Chapter 6 of~\cite{borkar2008stochastic}) to analyze their convergence. We first provide the analysis for the SPSA based first-order algorithm RS-SPSA-G in Section~\ref{subsec:first-proofs} and later provide the necessary modifications to the proof of SF based first-order algorithm and SPSA/SF based second-order algorithms.


\subsection{Convergence of the First-Order Algorithm: RS-SPSA-G}
\label{subsec:first-proofs}

Recall that RS-SPSA-G is a two-loop scheme where the inner loop is a TD critic that evaluates the value/square value functions for both unperturbed as well as perturbed policy parameter. On the other hand, the outer loop is a two-timescale stochastic approximation algorithm, where the faster timescale updates policy parameter $\theta$ in the descent direction using SPSA estimates of the gradient of the Lagrangian and the slower timescale performs dual ascent for the Lagrange multiplier $\lambda$ using sample constraint values. The faster timescale $\theta$-recursion sees the $\lambda$-updates on the slower timescales as quasi-static, while the slower timescale $\lambda$-recursion sees the $\theta$-updates as equilibrated.

 The proof of convergence of the RS-SPSA-G algorithm to a (local) saddle point of the risk-sensitive objective function $\widehat L(\theta,\lambda) \stackrel{\triangle}{=} -\widehat{V}^\theta(x^0) + \lambda(\widehat{\Lambda}^\theta(x^0) - \alpha) {=} -\widehat{V}^\theta(x^0) + \lambda\big(\widehat{U}^\theta(x^0) - \widehat{V}^\theta(x^0)^2  - \alpha\big)$ contains the following three main steps:
\begin{description}
\item[\textbf{Step 1: Critic's Convergence.}] We establish that, for any given values of $\theta$ and $\lambda$ that are updated on slower timescales, the TD critic converges to a fixed point of the projected Bellman operator for value and square value functions.
\item[\textbf{Step 2: Convergence of $\theta$-recursion.}] We utilize the fact that owing to projection, the $\theta$ parameter is stable. Using a Lyapunov argument, we show that the $\theta$-recursion tracks the ODE \eqref{eq:theta-ode} in the asymptotic limit, for any given value of $\lambda$ on the slowest timescale.
\item[\textbf{Step 3: Convergence of $\lambda$-recursion.}] This step is similar to earlier analysis for constrained MDPs . In particular, we show that $\lambda$-recursion in \eqref{eq:actor-spsa-update} converges and the overall convergence of $(\theta_n,\lambda_n)$  is to a local saddle point $(\theta^{\lambda^*},\lambda^*)$ of  $\widehat L(\theta,\lambda)$, with $\theta^{\lambda^*}$  satisfying the variance constraint in \eqref{eq:discounted-risk-measure}.
\end{description} 


\noindent
{\bf Step 1: (Critic's Convergence)} 
Since the critic's update is in the inner loop, we can assume in this analysis that $\theta$ and $\lambda$ are time-invariant quantities. 
The following theorem shows that the TD critic estimates for the value and square value function converge to the fixed point given by \eqref{eq:td-fixedpoint}, for any given policy $\theta$.
\begin{theorem}
\label{thm:td}
Under (A1)-(A4), for any given policy parameter $\theta$ and Lagrange multiplier $\lambda$, the critic parameters $\{v_m\}$ and $\{u_m\}$ governed by the recursions of \eqref{eq:critic-discounted} converge almost surely, i.e., 
$$\text{As } m\rightarrow \infty, v_m \rightarrow \bar v \text{ and }u_m \rightarrow \bar u \text{ a.s.} $$ 
In the above $\bar v$ and $\bar u$ are the solutions to the TD fixed point equations for policy $\theta$ (see \eqref{eq:td-fixedpoint} in Section \ref{sec:td}.%
\end{theorem}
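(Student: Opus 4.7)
The plan is to invoke the ordinary differential equation (ODE) method for stochastic approximation (Chapter 6 of \cite{borkar2008stochastic}), exploiting the triangular structure of the coupled $(v_m, u_m)$ recursion: the $v$-update does not depend on $u$, while the $u$-update depends on $v$ but the dependence preserves linearity in $u$. Both recursions run on the same (fastest) timescale $\zeta_3(\cdot)$ with Markov noise driven by the chain induced by $\mu(\cdot|\cdot;\theta)$, which has a unique stationary distribution $d^\theta$ by (A2).

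First I would handle the $v$-recursion in isolation. It is precisely the classical discounted linear TD(0) scheme analyzed in~\cite{tsitsiklis1997analysis}: under (A2) and (A4), and boundedness of rewards and features (immediate from the finite state-action setting), their convergence theorem gives $v_m \to \bar v$ almost surely, where $\bar v$ is the unique solution of $\Phi_v \bar v = \Pi_v T_v^\theta(\Phi_v \bar v)$. No novel argument is needed at this step.

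For the $u$-recursion, the plan is to treat $v_m$ as $\bar v$ plus an asymptotically vanishing perturbation. Rewriting
\begin{equation*}
u_{m+1} = u_m + \zeta_3(m)\,\phi_u(x_m)\bigl[R(x_m,a_m)^2 + 2\gamma R(x_m,a_m)\bar v\tr \phi_v(x_{m+1}) + \gamma^2 u_m\tr \phi_u(x_{m+1}) - u_m\tr \phi_u(x_m) + \xi_m\bigr],
\end{equation*}
with $\xi_m = 2\gamma R(x_m,a_m)(v_m - \bar v)\tr \phi_v(x_{m+1}) \to 0$ a.s., turns the update into a standard linear TD scheme for the operator $y \mapsto \boldsymbol{R}^\theta \boldsymbol{r}^\theta + 2\gamma\boldsymbol{R}^\theta\boldsymbol{P}^\theta\Phi_v\bar v + \gamma^2\boldsymbol{P}^\theta y$ with effective discount $\gamma^2 < 1$, plus an asymptotically negligible drift $\xi_m$. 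Since $\boldsymbol{P}^\theta$ is non-expansive in the $D^\theta$-weighted norm and $\Pi_u$ is an orthogonal projection, this operator composed with $\Pi_u$ is a $\gamma^2$-contraction in that norm (a direct corollary of the preceding contraction lemma, specialized to the $u$-block). The associated mean-field ODE
\begin{equation*}
\dot u(t) = \Phi_u\tr D^\theta\bigl(T_u^\theta(\Phi_v \bar v,\,\Phi_u u(t)) - \Phi_u u(t)\bigr)
\end{equation*}
therefore admits $\bar u$ as its unique globally asymptotically stable equilibrium, certified by the Lyapunov function $L(u) = \|\Phi_u(u - \bar u)\|_{D^\theta}^2$ whose derivative along trajectories is strictly negative off $\bar u$. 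Applying the ODE method with an asymptotically vanishing perturbation (standard extension, e.g.\ Chapter 2 of~\cite{borkar2008stochastic}) and using (A3) on the step sizes yields $u_m \to \bar u$ almost surely.

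The main obstacle is the a priori almost sure boundedness of the iterates, needed before the ODE method can be invoked. The cleanest route is the Borkar--Meyn stability criterion: the scaled limiting ODE is driven by the linear map $\Phi_u\tr D^\theta(\gamma^2 \boldsymbol{P}^\theta \Phi_u - \Phi_u)$, which is negative definite as a consequence of the $\gamma^2$-contraction together with the full-rank assumption on $\Phi_u$ in (A4); the analogous fact for the $v$-block is already in~\cite{tsitsiklis1997analysis}. Alternatively one can restrict to a large compact set without affecting the fixed point, since $\bar v,\bar u$ are bounded. The same analysis applied verbatim to the perturbed trajectory run under $\theta_n^+$ gives convergence of $(v_m^+, u_m^+)$ to the corresponding fixed point, completing the claim.
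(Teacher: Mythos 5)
Your argument is correct in substance and reaches the same conclusion by the same underlying machinery (the ODE method plus a Borkar--Meyn stability check, with the $v$-part delegated to Tsitsiklis and Van Roy), but the decomposition is genuinely different from the paper's. The paper stacks $w_m=(v_m,u_m)\tr$ and writes the whole critic as a single linear stochastic approximation $w_{m+1}=w_m+\zeta_3(m)(Mw_m+\xi+\Delta M_{m+1})$ with a block lower-triangular driving matrix $M$; negative definiteness of the two diagonal blocks $\Phi_v\tr D^\theta(\gamma P^\theta-I)\Phi_v$ and $\Phi_u\tr D^\theta(\gamma^2 P^\theta-I)\Phi_u$ then gives stability and convergence of the joint iterate in one application of the Borkar--Meyn theorem. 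You instead run a cascade: converge $v_m$ first, then absorb the cross term $2\gamma R(x_m,a_m)(v_m-\bar v)\tr\phi_v(x_{m+1})$ into an asymptotically vanishing, a.s.\ bounded perturbation of a $\gamma^2$-discounted linear TD recursion for $u$. This is legitimate (the perturbation is bounded because $v_m$ is a.s.\ bounded from the first step, and the standard vanishing-perturbation extension of the ODE method applies), and it makes the ``square-value TD is just TD with discount $\gamma^2$'' structure more transparent; what the paper's joint treatment buys is that it never needs a perturbed-ODE lemma and handles boundedness of both iterates simultaneously. One local repair you should make: your Lyapunov function $L(u)=\|\Phi_u(u-\bar u)\|_{D^\theta}^2$ is not obviously decreasing along the flow, since $\tfrac{d}{dt}L=2(u-\bar u)\tr A B (u-\bar u)$ with $A=\Phi_u\tr D^\theta\Phi_u$ and $B=\Phi_u\tr D^\theta(\gamma^2P^\theta-I)\Phi_u$, and $AB+B\tr A$ need not be negative definite even though $A\succ 0$ and $B+B\tr\prec 0$. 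The Euclidean choice $\|u-\bar u\|^2$ works directly, because $(u-\bar u)\tr B(u-\bar u)<0$ is exactly the negative definiteness you have already established; with that substitution the argument goes through.
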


\begin{remark}
 It is easy to conclude from the above theorem that the TD critic parameters for the perturbed policy parameter also converge almost surely, i.e.,  
$v^+_m \rightarrow \bar v^+$ and $u^+_m \rightarrow \bar u^+$ a.s., where $\bar v^+$ and $\bar u^+$ are the unique solutions to TD fixed point relations for perturbed policy $\theta + \delta \Delta$. Here $\Delta$ is a fixed realization of the perturbation random variable that is updated on the outer loop.
\end{remark}

\begin{proof}
%
The $v$-recursion in \eqref{eq:critic-discounted} is performing TD) with function approximation for the value function,  while the $u$-recursion is doing the same for the square value function.  The convergence of $v$-recursion to the fixed point in \eqref{eq:td-fixedpoint} can be inferred from \cite{tsitsiklis1997analysis}. 

Using an approach similar to \cite{tamar2013policy}, we club both $v$ and $u$ recursions and establish convergence using a stability argument in the following:
Let $w_m = (v_m, u_m)\tr$. Then, \eqref{eq:critic-discounted} can be seen to be equivalent to
\begin{align}
\label{eq:critic-equiv}
 w_{m+1} =& w_m + \zeta_3(m) ( M w_m + \xi + \Delta M_{m+1}), \text{ where }\\
M =& \left(\begin{array}{cc}
                  \Phi_v\tr D^\theta(\gamma P^\theta-I)\Phi_v & 0\\
		  2\gamma\Phi_u\tr D^\theta R^\theta P^\theta \Phi_v & \Phi_u\tr D^\theta(\gamma^2 P^\theta-I)\Phi_u
                 \end{array}
 \right) \text{ and }\nonumber\\
\xi =& \left(\begin{array}{c}
         \Phi_v\tr D^\theta r^\theta\\
	 \Phi_u\tr D^\theta R^\theta r^\theta
        \end{array}\right). \nonumber
\end{align}
Further, $\Delta M_{m+1}$ is a martingale difference, i.e., $\E[\Delta M_{m+1} \mid \F_m] = 0$, where $\F_m$ is the sigma field generated by $w_l, \Delta M_l, l\le m$. 


Let $h(w)=Mw+\xi$. Then, the ODE associated with \eqref{eq:critic-equiv} is
\begin{align}
 \dot w_t = h(w_t).
\end{align}
The above ODE has a unique globally asymptotically stable equilibrium, since $M$ is a negative definite. To see the latter fact, observe that $M$ is block triangular and hence its eigenvalues are that of $\Phi_v\tr D^\theta(\gamma P^\theta-I)\Phi_v$ and $\Phi_u\tr D^\theta(\gamma^2 P^\theta-I)\Phi_u$. It can be inferred from Theorem 2 of \cite{tsitsiklis1997analysis} that the aforementioned matrices are negative definite. For the sake of completeness, we provide a brief sketch in the following:
For any $V \in \R^{|\X|}$, it can be shown that $\left\| P^\theta V\right\|_{D^\theta} \le \left\| V\right\|_{D^\theta}$ (see Lemma 1 in \cite{tsitsiklis1997analysis} for a proof).
Now, 
\begin{align*}
V\tr D^\theta \gamma P^\theta V \le& \gamma \left\|(D^\theta)^{1/2} V \right\| \left|(D^\theta)^{1/2} PV \right\| \\
 =& \gamma \left\|V \right\|_{D^\theta} \left| PV \right\|_{D^\theta} \\
  \le& \gamma \left\|V \right\|^2_{D^\theta}.
\end{align*}
Hence, $V\tr D^\theta (\gamma P^\theta-I) V \le (\gamma - 1)\left\|V \right\|^2_{D^\theta} <0$. By (A4), we know that $\Phi_v$ is full rank implying the negative definiteness of $\Phi_v\tr D^\theta(\gamma P^\theta-I)\Phi_v$. Using the same argument as above and replacing $\Phi_v$ with $\Phi_u$ and $\gamma$ with $\gamma^2$,  one can conclude that $\Phi_u\tr D^\theta(\gamma^2 P^\theta-I)\Phi_u$.

The final claim now follows by applying Theorems 2.1-2.2(i) of \cite{borkar2000ode}, provided we verify assumptions (A1)-(A2) there. The latter assumptions are given as follows:

\noindent \textbf{(A1)} The function $h$ is Lipschitz. For any $c$, define $h_c(w) = h(cw)/c$. Then, there exists a continuous function $h_\infty$ such that $h_c \rightarrow h_\infty$ as $c \rightarrow \infty$ uniformly on compacts. Furthermore, origin is an asymptotically stable equilibrium for the ODE 
\begin{align}
\label{eq:hinfty}
\dot w_t= h(w_t).
\end{align}

\noindent \textbf{(A2)} The martingale difference $\{\Delta M_{m}, m\ge 1\}$ is square-integrable with 
$$\E [\left\| \Delta M_{m+1} \right\|^2 \mid \F_m] \le C_0 (1 + \left\| w_m \right\|^2), m\ge 0,$$
where $C_0 < \infty$.

It is straightforward to verify (A1), as $h_c(w) = Mw + \xi/c$ converges to $h_\infty(w) = Mw$ as $c\rightarrow \infty$. Given that $M$ is negative definite, it is easy to see that origin is a asymptotically stable equilibrium for the ODE \eqref{eq:hinfty}.
(A2) can also be verified by using the same arguments that were used to show that the martingale difference associated with the regular TD algorithm with function approximation satisfies a bound on the second moment (cf. \cite{tsitsiklis1997analysis}).
$\hfill{\blacksquare}$
\end{proof} 


\noindent
{\bf Step 2: (Analysis of $\theta$-recursion)} 
Since $m_n \rightarrow \infty$ as $n \rightarrow \infty$, we can assume that the inner TD critic loop has converged for the purpose of analysing the $\theta$-recursion in \eqref{eq:actor-spsa-update}. 
Due to timescale separation, the value of $\lambda$ (updated on a slower timescale) is assumed to be constant for the analysis of the $\theta$-update. 
 To see this in rigorous terms, first rewrite the $\lambda$-recursion as
 $$
\lambda_{n+1} = \Gamma_\lambda\bigg[\lambda_n + \zeta_2(n) \hat H(n)\bigg].$$
 where $\hat H(n) = \frac{\zeta_1(n)}{\zeta_2(n)} \Big(u\tr_n \phi_u(x^0) - \big(v\tr_n \phi_v(x^0)\big)^2 - \alpha \Big)$. Since the critic recursions converge, it is easy to see that $\sup_n \hat H(n)$ is finite. Combining with the observation that $\frac{\zeta_1(n)}{\zeta_2(n)} = o(1)$ due to the assumption (A3) on step-sizes, we see that the $\lambda$-recursion above tracks the ODE $\dot \lambda = 0$.

In the following, we show that the update of $\theta$ is equivalent to gradient descent for the function $\widehat L(\theta,\lambda)$ and converges to a limiting set that depends on $\lambda$. 

Consider the following ODE
\begin{align}
\label{eq:theta-ode}
\dot{\theta}_t = \check{\Gamma}\left ( \nabla_\theta \widehat L(\theta_t, \lambda)\right),
\end{align}
with the limiting set $\Z_\lambda=\big\{\theta\in C:\check\Gamma\big(\nabla \widehat L(\theta_t,\lambda)\big)=0\big\}$. 
In the above, $\check{\Gamma}(\cdot)$ is a projection operator  that ensures the evolution of $\theta$ via the ODE (\ref{eq:theta-ode}) stays within the set $\Theta:= \prod_{i=1}^{\kappa_1} [\theta^{(i)}_{\min},\theta^{(i)}_{\max}]$ and is defined as follows: For any bounded continuous function $f(\cdot)$,
\begin{align}
\label{eq:Pi-bar-operator}
\check{\Gamma}\big(f(\theta)\big) = \lim\limits_{\tau \rightarrow 0}
\dfrac{\Gamma\big(\theta + \tau f(\theta)\big) - \theta}{\tau}.
\end{align}
 Notice that the limit above may not exist and in that case, as pointed out on pp. 191 of \cite{kushner-clark}, one can define $\check{\Gamma}(f(\theta))$ to be the set of all possible limit points. 
From the definition above, it can be inferred that for $\theta$ in the interior of $\Theta$, $\check{\Gamma}(f(\theta)) = f(\theta)$, while for $\theta$ on the boundary of $\Theta$, $\check{\Gamma}(f(\theta))$ is the projection of $f(\theta)$ onto the tangent space of the boundary of $\Theta$ at $\theta$.

The main result regarding the convergence of the policy parameter $\theta$ for both the RS-SPSA-G and RS-SF-G algorithms is as follows:
\begin{theorem}
\label{thm:spsa-theta-convergence}
Under (A1)-(A4), for any given Lagrange multiplier $\lambda$ and $\varepsilon > 0$, there exists $\beta_0 >0$ such that for all $\beta \in (0, \beta_0)$, $\theta_n \rightarrow \theta^* \in \Z^\varepsilon_{\lambda}$ almost surely. Here $\Z_\lambda^\varepsilon=\big\{\theta\in C:||\theta-\theta_0||<\varepsilon,\theta_0\in \Z_\lambda\big\}$ denotes the set of points in the $\varepsilon$-neighborhood of $\Z_\lambda$.  
\end{theorem}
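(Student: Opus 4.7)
The plan is to analyze the $\theta$-recursion \eqref{eq:actor-spsa-update} as a projected stochastic approximation scheme tracking the ODE \eqref{eq:theta-ode}, corrupted by three sources of error: the one-sided SPSA bias (of order $O(\beta)$), the TD critic's function-approximation error, and a martingale-difference noise term. Kushner--Clark theory for projected schemes then delivers convergence to an $O(\beta)$-neighborhood of $\Z_\lambda$, and shrinking $\beta$ below a threshold $\beta_0(\varepsilon)$ forces the limit into $\Z_\lambda^\varepsilon$.

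First I would invoke timescale separation. By (A3) we have $\zeta_1(n)=o(\zeta_2(n))$, so, by the same argument used in the excerpt to justify treating the $\lambda$-update as tracking $\dot\lambda=0$, I may view $\lambda_n\equiv\lambda$ as fixed for the purposes of analysing $\theta_n$. Since $m_n\to\infty$ and the inner TD loop converges by Theorem~\ref{thm:td}, I can replace $v_n,u_n,v_n^+,u_n^+$ by the TD fixed points $\bar v^{\theta_n},\bar u^{\theta_n},\bar v^{\theta_n+\beta\Delta_n},\bar u^{\theta_n+\beta\Delta_n}$ up to an asymptotically vanishing error, so the bracketed term in \eqref{eq:actor-spsa-update} becomes a one-sided SPSA ratio built from the approximated value and square-value functions $\widehat V^{\theta}(x^0):=\bar v^{\theta\,\top}\phi_v(x^0)$ and $\widehat U^{\theta}(x^0):=\bar u^{\theta\,\top}\phi_u(x^0)$.

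Next I would compute the conditional mean of this SPSA ratio. A second-order Taylor expansion of $\widehat V^\theta(x^0)$ and $\widehat U^\theta(x^0)$ about $\theta$, combined with the Rademacher identity $\E[\Delta^{(j)}/\Delta^{(i)}\mid\theta]=\mathbf{1}\{i=j\}$, yields exactly the $O(\beta)$ one-sided bias derived in the remark following \eqref{eq:actor-sf-update}. Because $\Theta$ is compact and the maps $\theta\mapsto\bar v^\theta,\bar u^\theta$ are continuous (a consequence of (A1), (A4), and standard implicit-function reasoning on the TD fixed-point equation~\eqref{eq:td-fixedpoint}), the second derivatives of $\widehat V^\theta,\widehat U^\theta$ are uniformly bounded on $\Theta$, so the bias bound $\|b_n(\beta)\|\le K\beta$ holds uniformly in $n$. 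Subtracting this conditional mean, the $\theta$-recursion can be rewritten as
\begin{equation*}
\theta_{n+1}=\Gamma\Bigl[\theta_n-\zeta_2(n)\bigl(\nabla_\theta\widehat L(\theta_n,\lambda)+b_n(\beta)+M_{n+1}\bigr)\Bigr],
\end{equation*}
where $\{M_{n+1}\}$ is a martingale-difference sequence with uniformly bounded conditional second moment (using $|\Delta^{(i)}|=1$, compactness of $\Theta$, and boundedness of the features).

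Finally I would apply the Kushner--Clark lemma for projected stochastic approximation (cf.~\cite{kushner-clark,borkar2008stochastic}), which, under (A3) and the boundedness just established, shows that $\{\theta_n\}$ almost surely converges to the internally chain-transitive invariant set of the $\beta$-perturbed projected ODE
\begin{equation*}
\dot\theta_t=\check\Gamma\bigl(-\nabla_\theta\widehat L(\theta_t,\lambda)+O(\beta)\bigr).
\end{equation*}
By upper semicontinuity of this invariant set in the perturbation, it is contained in $\Z_\lambda^\varepsilon$ once $\beta<\beta_0(\varepsilon)$. I expect the main obstacle to be the simultaneous control of two distinct biases---the SPSA perturbation bias and the TD function-approximation bias from Lemma~\ref{bias-average}-style reasoning---uniformly along the trajectory; compactness of $\Theta$, (A4), and the contraction property of $\Pi T^\theta$ proved earlier are exactly the ingredients needed to make this uniformity rigorous.
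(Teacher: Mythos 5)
Your proposal is correct and follows essentially the same route as the paper's proof: timescale separation to freeze $\lambda$, substitution of the converged TD fixed points for the critic iterates, a Taylor/Rademacher computation showing the one-sided SPSA ratio has conditional mean $\nabla_\theta\widehat L(\theta,\lambda)+O(\beta)$, and a projected stochastic-approximation limit theorem. The only cosmetic difference is the final tool — you invoke Kushner--Clark together with upper semicontinuity of the perturbed invariant set, while the paper exhibits $\widehat L(\cdot,\lambda)$ as a strict Lyapunov function for the projected ODE and applies the Hirsch lemma to the interpolated trajectory viewed as a $(T,\eta)$-perturbation — and your worry about separately controlling the TD function-approximation bias is moot, since (as in the paper) the limiting set $\Z_\lambda$ is defined through the approximate Lagrangian $\widehat L$ itself.
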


In order to the prove the above claim, we require the well-known Hirsch lemma (see \cite[pp. 339]{hirsch}). For the sake of completeness, we recall this result below.

Consider the ODE:
\begin{align}
\label{appendix:ode1}
\dot{\theta}_t = h(\theta_t). 
\end{align}
Let $K$ be an asymptotically stable attractor for the above ODE and let $K^\epsilon$ denote its $\epsilon$-neighbourhood.
Given $T$, $\eta >0$, we call a bounded, measurable 
$y(\cdot): \R^{+}\cup \{0\}
\rightarrow \R^{N}$, a $(T,\eta)$-perturbation of (\ref{appendix:ode1}) if
there exist $0=T_0 <T_1 <T_2 <\cdots <T_r \uparrow \infty$ with
$T_{r+1}-T_r \geq T$ $\forall r$ and solutions $\theta^r(t)$, $t\in [T_r,T_{r+1}]$
of (\ref{appendix:ode1}) for $r \geq 0$, such that
\[ \sup_{t\in [T_r, T_{r+1}]} \parallel \theta^r(t) - y(t)\parallel <\eta.\]

\begin{lemma}[Hirsch Lemma]
\label{hirsch-lemma}
Given $\epsilon$, $T >0$, $\exists \bar{\eta} >0$ such that for
all $\Delta \in (0, \bar{\eta})$, every $(T,\eta)$-perturbation of (\ref{appendix:ode1})
converges to $K^\epsilon$. 
\end{lemma}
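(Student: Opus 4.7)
The plan is to exploit the two defining features of an asymptotically stable attractor $K$: the Lyapunov trapping property (trajectories starting in a small neighborhood of $K$ stay in any prescribed slightly larger neighborhood for all future time) and uniform attraction (trajectories from any fixed bounded set enter any prescribed neighborhood of $K$ in a common finite time). The idea is to choose a nested pair of neighborhoods $K^{\epsilon_1}\subset K^{\epsilon/2}\subset K^\epsilon$, then show by induction over the blocks $[T_r,T_{r+1}]$ that the perturbation $y(\cdot)$ eventually enters and never leaves $K^\epsilon$, provided the per-block error $\eta$ is small enough that it cannot destroy the progress made by each true ODE piece $\theta^r$.

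Concretely, I would first use Lyapunov stability to pick $\epsilon_1\in(0,\epsilon/4)$ such that every solution of \eqref{appendix:ode1} starting in $K^{2\epsilon_1}$ remains in $K^{\epsilon/2}$ for all positive times. Next, using the attracting property of $K$ together with continuous dependence on initial conditions, I would extract a finite $T^{*}>0$ such that every solution of \eqref{appendix:ode1} whose initial condition lies in the (bounded) closure $B$ of the set where $y(\cdot)$ takes values enters $K^{\epsilon_1/2}$ by time $T^{*}$ and stays there. Without loss of generality (by coalescing consecutive subintervals of the partition $\{T_r\}$) we may assume $T_{r+1}-T_r\ge T^{*}$ for every $r$. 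Finally, set $\bar\eta := \epsilon_1/2$.

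The induction then runs as follows. For the base case on $[T_0,T_1]$: $\theta^0(T_0)$ lies within $\eta$ of $y(T_0)\in B$, so $\theta^0(T_1)\in K^{\epsilon_1/2}$ by the choice of $T^{*}$, whence $\|\theta^0(T_1)-y(T_1)\|<\eta$ gives $y(T_1)\in K^{\epsilon_1/2+\eta}\subseteq K^{\epsilon_1}$. For the inductive step, assume $y(T_r)\in K^{\epsilon_1}$. Then $\theta^{r}(T_r)\in K^{\epsilon_1+\eta}\subseteq K^{2\epsilon_1}$, so by the trapping property $\theta^{r}(t)\in K^{\epsilon/2}$ for all $t\in[T_r,T_{r+1}]$; hence $y(t)\in K^{\epsilon/2+\eta}\subseteq K^{\epsilon}$ on the whole block, and by uniform attraction $\theta^{r}(T_{r+1})\in K^{\epsilon_1/2}$, giving $y(T_{r+1})\in K^{\epsilon_1/2+\eta}\subseteq K^{\epsilon_1}$. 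Thus once $r\ge 1$, the perturbation $y(\cdot)$ is trapped in $K^\epsilon$, which is the desired conclusion.

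The main obstacle is to justify the existence of the uniform attraction time $T^{*}$. For this, one needs to know that the ODE piece $\theta^r$ on each block stays in a single bounded set independent of $r$, so that compactness plus pointwise attraction yields the uniform bound. My proof handles this in two stages: for $r\ge 1$ the induction itself forces $\theta^r$ into $K^{2\epsilon_1}\subset K^{\epsilon/2}$, which is bounded; for $r=0$ one uses that $y$ is bounded by hypothesis, so $\theta^0(T_0)$ lies in the bounded set $B^{\bar\eta}:=\{z:\mathrm{dist}(z,B)\le\bar\eta\}$, and local Lipschitzness of $h$ plus asymptotic stability of $K$ on a neighborhood containing $B^{\bar\eta}$ keep the trajectory in a common bounded region up to time $T^{*}$. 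Making these regularity hypotheses explicit (they are implicit in the paper's use of the lemma) is the only delicate point; the rest is a straightforward nested-neighborhood argument.
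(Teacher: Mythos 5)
The paper does not prove this lemma at all: it is quoted from Hirsch's work (the citation to p.~339) precisely so that it can be invoked as a black box in the proofs of Theorems~\ref{thm:spsa-theta-convergence} and \ref{thm:spsa-sf-n-theta-convergence}, so there is no in-paper argument to compare yours against. Judged on its own merits, your nested-neighborhood induction is the standard route to this result, and the overall architecture --- a Lyapunov trapping neighborhood $K^{2\epsilon_1}\subset K^{\epsilon/2}$, a uniform attraction time $T^*$ over a compact subset of the basin, and an induction over the blocks $[T_r,T_{r+1}]$ --- is sound.

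One step, however, does not go through as written: the claim that ``without loss of generality, by coalescing consecutive subintervals of the partition $\{T_r\}$, we may assume $T_{r+1}-T_r\ge T^*$.'' The lemma quantifies over a \emph{given} $T>0$, which may be far smaller than your $T^*$, and the ODE pieces $\theta^r$ attached to consecutive blocks are solutions with different initial conditions --- in general $\theta^{r}(T_{r+1})\ne\theta^{r+1}(T_{r+1})$ --- so merging blocks does not by itself produce a single solution of \eqref{appendix:ode1} on the merged interval that is $\eta$-close to $y$. To repair this you must show that a $(T,\eta)$-perturbation is also a $(T^*,C\eta)$-perturbation for a constant $C$ depending only on $T$, $T^*$ and the Lipschitz constant $L$ of $h$ on the relevant bounded region: on each group of at most $\lceil T^*/T\rceil+1$ consecutive blocks, take the solution started at $y$ at the left endpoint of the group and compare it with the given pieces $\theta^r$ block by block via Gronwall's inequality; the discrepancy grows by at most a factor of order $e^{LT^*}$ per block and there are boundedly many blocks per group. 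This forces $\bar\eta=\epsilon_1/(2C)$ rather than $\epsilon_1/2$, but otherwise leaves your induction intact. Two smaller points: ``enters $K^{\epsilon_1/2}$ by time $T^*$ and stays there'' conflates attraction with trapping --- you should define $T^*$ so that solutions started in $B^{\bar\eta}$ lie in $K^{\epsilon_1/2}$ for all $t\ge T^*$, which follows from uniform attraction on compacts in the basin together with the trapping neighborhood you already fixed --- and you need $B^{\bar\eta}$ to be contained in the domain of attraction of $K$, a hypothesis that is implicit here because the lemma is only ever applied to projected (hence bounded) iterates known to evolve inside that domain.
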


\begin{proof} (\textbf{Theorem \ref{thm:spsa-theta-convergence}})
The $\theta$-update in~\eqref{eq:actor-spsa-update} can be rewritten using the converged TD-parameters $(\bar v, \bar u)$ and $(\bar v^+, \bar u^+)$ as 

\begin{align}
\theta_{n+1}^{(i)} = &\Gamma_i \bigg( \theta_n^{(i)} + \zeta_2(n) \Big(\big(1+2\lambda \bar{v}\tr \phi_v(x^0)\big)\dfrac{(\bar v^+ - \bar v)\tr \phi_v(x^0)}{\beta \Delta_n^{(i)}}- \lambda\dfrac{(\bar u^+ - \bar u)\tr \phi_u(x^0)}{\beta \Delta_n^{(i)}} + \xi_{1,n} \Big)\bigg)\label{eq:theta-equiv},
\end{align}

\noindent
where 
\begin{align*}
\xi_{1,n}:=&  \bigg(\big(1+2\lambda v_n\tr \phi_v(x^0)\big)\dfrac{( v_n^+ - v_n)\tr \phi_v(x^0)}{\beta \Delta_n^{(i)}}- \lambda\dfrac{( u_n^+ - u_n)\tr \phi_u(x^0)}{\beta \Delta_n^{(i)}}\bigg)\\
& - \bigg(\big(1+2\lambda \bar{v}\tr \phi_v(x^0)\big)\dfrac{(\bar v^+ - \bar v)\tr \phi_v(x^0)}{\beta \Delta_n^{(i)}}- \lambda\dfrac{(\bar u^+ - \bar u)\tr \phi_u(x^0)}{\beta \Delta_n^{(i)}}\bigg).
\end{align*}
Since the trajectory length $m_n \rightarrow \infty$ as $n\rightarrow \infty$, the TD-critic converges in the inner loop (see Theorem \ref{thm:td}) and hence, $\xi_{1,n} = o(1)$. Thus, $\xi_{1,n}$ term can be ignored in the asymptotic analysis of $\theta$-recursion.

\noindent
Recall that $\bar v^+$ and $\bar v$ are converged critic parameters corresponding to policies $\theta$ and $\theta + \beta \Delta$. Letting
$\widehat V(\theta) = \bar v\tr \phi_v(x^0)$, we obtain\footnote{The conditional expectation is taken with respect to the common distribution of the perturbations $\Delta^{(i)}$.}
\begin{align*}
 \E\left[\dfrac{(\bar v^+ - \bar v)\tr\phi_v(x^0)}{\beta \Delta^{(i)}} \left. \right| \theta,\lambda\right] 
=& \E\left[\dfrac{\widehat V(\theta) - \widehat V(\theta + \beta \Delta_n)}{\beta \Delta^{(i)}} \left. \right| \theta,\lambda\right]\\
= & \E\left[ \beta \Delta_n\tr \nabla_\theta \widehat V(\theta) + \xi(\beta) \left. \right| \theta,\lambda\right]\\
= & \nabla_i \widehat V(\theta) + \E\left[ \sum\limits_{j\ne i} \dfrac{\Delta^{(j)}}{\Delta^{(i)}} \nabla_j \widehat V(\theta) \left. \right| \theta,\lambda\right] + \xi(\beta)\\
\rightarrow & \nabla_i \widehat V(\theta) \text{ as } \beta \rightarrow 0.
\end{align*}
The second equality above follows by expanding using Taylor's expansion of $\hat V(\cdot)$ around $\theta$, whereas the third equality follows by using the fact that $\Delta^{(i)}_n$'s are independent Rademacher random variables. Note that $\xi(\beta)$ in the second equality above can be seen to converge to zero as $\beta \rightarrow 0$.

On similar lines, letting $\widehat U(\theta) = \bar u\tr \phi_u(x^0)$, it can be seen that
\begin{align*}
\E\left[ \dfrac{(\bar u^+ - \bar u)\tr \phi_u(x^0)}{\beta \Delta^{(i)}} \left. \right| \theta,\lambda\right] 
\stackrel{\beta \rightarrow 0}{\longrightarrow} \nabla_i \hat U(\theta).
\end{align*}
Plugging the above in \eqref{eq:theta-equiv}, we obtain
\begin{align*}
\theta_{n+1}^{(i)} = &\Gamma_i \bigg( \theta_n^{(i)} + \zeta_2(n) \Big(\big(1+2\lambda \hat V(\theta_n)\big)\nabla_i \hat V(\theta_n)- \lambda \nabla_i U(\theta_n)  \Big)\bigg),\\
= & \Gamma_i \bigg( \theta_n^{(i)} + \zeta_2(n) \Big(\nabla_i \hat L(\theta_n,\lambda) \Big)\bigg),
\end{align*}
as $\beta \rightarrow 0$.

Thus,~\eqref{eq:actor-spsa-update} can be seen to be a discretization of the ODE~\eqref{eq:theta-ode}. Further, $\Z_\lambda$ is an asymptotically stable attractor for the ODE~\eqref{eq:theta-ode}, with $\widehat L(\theta,\lambda)$ itself serving as a strict Lyapunov function. This can be inferred as follows:
\begin{align*}
\dfrac{d \widehat L(\theta,\lambda)}{d t}  
= \nabla_\theta \widehat L(\theta,\lambda) \dot \theta
= \nabla_\theta \widehat L(\theta,\lambda) \check\Gamma\big(-\nabla_\theta \widehat L(\theta,\lambda)\big) < 0.  
\end{align*}
Define a linear interpolated trajectory for the $\theta$-recursion in \eqref{eq:actor-spsa-update} as follows:
Let $s(n) = \sum_{i=0}^{n-1} \zeta_2(i)$. $\bar \theta_t$ is a piecewise linear interpolation defined according to
$\bar \theta_{t(n)} = \theta_n$ with linear interpolation on $[s(n), s(n+1)].$
Now, using standard stochastic approximation arguments (cf. \cite[Theorem 5.12]{Bhatnagar13SR}), $\bar \theta_t$ can be seen to be a $(T,\eta)$-perturbation of the ODE \eqref{eq:theta-ode}.
The claim now follows from Hirsch lemma. 
$\hfill{\blacksquare}$
\end{proof}

\noindent
{\bf Step 3: (Analysis of $\lambda$-recursion and Convergence to a Local Saddle Point)} We first show that the $\lambda$-recursion converges and then prove that the whole algorithm converges to a local saddle point of $\widehat L(\theta,\lambda)$. 

We define the following ODE governing the evolution of $\lambda$: 
\begin{align}
\label{eq:lambda-ode}
 \dot \lambda_t \;\;=\;\; \check\Gamma_\lambda\big[\widehat \Lambda^{\theta^{\lambda_t}}(x^0) - \alpha\big] \;\;=\;\; \check\Gamma_\lambda\big[\widehat U^{\theta^{\lambda_t}}(x^0) - \widehat V^{\theta^{\lambda_t}}(x^0)^2 - \alpha\big],
\end{align}
where $\theta^{\lambda_t}$ is the limiting point of the $\theta$-recursion corresponding to ${\lambda_t}$. Further, $\check{\Gamma}_\lambda$ is
an operator similar to the operator $\check{\Gamma}$ defined in \eqref{eq:Pi-bar-operator} and is defined as follows: For any bounded continuous function $f(\cdot)$,
\begin{align}
\label{eq:Pi-bar-operator-lambda}
\check{\Gamma}_\lambda\big(f(\lambda)\big) = \lim\limits_{\tau \rightarrow 0}
\dfrac{\Gamma_\lambda\big(\lambda + \tau f(\lambda)\big) - \lambda}{\tau}.
\end{align}

%
%
%
\begin{theorem}
\label{theorem:lambda}
$\lambda_n \rightarrow \F$ almost surely as $n \rightarrow \infty$, where $\F \stackrel{\triangle}{=}\big\{\lambda\mid \lambda \in [0,\lambda_{\max}],\;\check\Gamma_\lambda\big[\widehat \Lambda^{\theta^\lambda}(x^0)-\alpha\big]=0,\;\theta^\lambda \in \Z_\lambda \big\}$.
\end{theorem}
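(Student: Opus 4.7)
The plan is to exploit the three–timescale structure of the algorithm and reduce the $\lambda$-iteration to a standard one–dimensional projected stochastic approximation recursion, whose ODE limit is exactly \eqref{eq:lambda-ode}. Since $\zeta_1(n)=o(\zeta_2(n))$ by (A3), from the vantage point of the $\lambda$-update both the TD critic (fastest timescale, Theorem \ref{thm:td}) and the policy parameter (intermediate timescale, Theorem \ref{thm:spsa-theta-convergence}) appear equilibrated. Concretely, I would argue that along the $\lambda$-timescale one may replace $(v_n,u_n)$ by the TD fixed point $(\bar v^{\theta_n},\bar u^{\theta_n})$ and $\theta_n$ by some $\theta^{\lambda_n}\in \Z_{\lambda_n}^{\varepsilon}$, with the substitution errors being $o(1)$.

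With this identification, the increment in \eqref{eq:lambda-update} becomes
\begin{align*}
u_n\tr\phi_u(x^0)-\big(v_n\tr\phi_v(x^0)\big)^2-\alpha
\;=\; \widehat U^{\theta^{\lambda_n}}(x^0)-\widehat V^{\theta^{\lambda_n}}(x^0)^2-\alpha+\xi_n
\;=\; \widehat\Lambda^{\theta^{\lambda_n}}(x^0)-\alpha+\xi_n,
\end{align*}
with $\xi_n\to 0$ almost surely. Hence the $\lambda$-iteration is a Robbins--Monro type projected recursion with bounded, vanishing-bias drift. I would then invoke the standard projected ODE argument (Kushner--Clark lemma, or Chapter 5--6 of \cite{borkar2008stochastic}) to conclude that the linearly interpolated trajectory of $\{\lambda_n\}$ is almost surely an asymptotic pseudotrajectory of the ODE \eqref{eq:lambda-ode}. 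The stable attractor of that ODE inside $[0,\lambda_{\max}]$ is by definition the zero set of $\check\Gamma_\lambda[\widehat\Lambda^{\theta^{\lambda}}(x^0)-\alpha]$, i.e., the set $\F$, which gives the claim.

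The main obstacle is rigorously handling the set-valuedness of $\theta^{\lambda}$: Theorem \ref{thm:spsa-theta-convergence} only guarantees convergence of $\theta$ into an $\varepsilon$-neighbourhood of $\Z_\lambda$, and $\Z_\lambda$ itself need not be a singleton nor vary smoothly in $\lambda$. To close this gap I would establish upper semicontinuity of the map $\lambda\mapsto \Z_\lambda$ (which follows from continuous differentiability of $\widehat L$ in $\theta$ under (A1)) and then appeal to a differential–inclusion version of the ODE method (e.g.~Benaim--Hofbauer--Sorin) so that the slow iterate $\{\lambda_n\}$ is an APT of the differential inclusion $\dot\lambda_t\in \check\Gamma_\lambda\bigl[\widehat\Lambda^{\Z_{\lambda_t}}(x^0)-\alpha\bigr]$, whose equilibria coincide with $\F$. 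A secondary, routine step is to check that the martingale-difference and asymptotically-negligible noise terms produced by the TD error and by the gap $\theta_n-\theta^{\lambda_n}$ satisfy the Kushner--Clark noise conditions on the $\zeta_1(n)$-timescale; this follows from square-summability of $\zeta_1(n)$ and the uniform boundedness of the critic parameters guaranteed by Theorem \ref{thm:td} combined with the compactness of $\Theta$ and $[0,\lambda_{\max}]$.
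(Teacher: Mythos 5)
Your proposal follows essentially the same route as the paper: by timescale separation the critic and $\theta$-iterates are treated as equilibrated at the TD fixed point and at $\theta^{\lambda_n}$, the substitution error is argued to vanish asymptotically, and a standard projected stochastic-approximation/ODE result is invoked to conclude that $\lambda_n$ tracks \eqref{eq:lambda-ode} and converges to $\F$. The only notable difference is that you are more careful than the paper about the set-valuedness of $\lambda\mapsto\Z_\lambda$ and the fact that Theorem \ref{thm:spsa-theta-convergence} only guarantees convergence to an $\varepsilon$-neighbourhood of $\Z_\lambda$ --- the paper simply asserts $\xi_{2,n}=o(1)$ and cites Theorem 2 of \cite{borkar2008stochastic}, whereas your upper-semicontinuity and differential-inclusion refinement is precisely what would be needed to make that step fully rigorous.
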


\begin{proof}
The proof follows using standard stochastic approximation arguments. The first step is to rewrite the $\lambda$-recursion as follows:
\begin{align*}
\lambda_{n+1} &= \Gamma_\lambda\bigg[\lambda_n + \zeta_1(n)\Big(\bar u\tr \phi_u(x^0) - \big(\bar v\tr \phi_v(x^0)\big)^2 - \alpha + \xi_{2,n} \Big)\bigg],
\end{align*}
where $\xi_{2,n}:= \Big(u_n\tr \phi_u(x^0) - \big( v_n\tr \phi_v(x^0)\big)^2\Big) - \Big(\bar u\tr \phi_u(x^0) - \big(\bar v\tr \phi_v(x^0)\big)^2 \Big)$. Note that the converged critic parameters $\bar v$ and $\bar u$ are for the policy $\theta^{\lambda_n}$. The latter is a limiting point of the $\theta$-recursion, with the Lagrange multiplier $\lambda_n$. Owing to convergence of $\theta$-recursion and also TD-critic in the inner loop, we can conclude that $\xi_{2,n} = o(1)$. Thus, $\xi_{2,n}$ adds an asymptotically vanishing bias term to the $\lambda$-recursion above.
The claim follows by applying the standard result in Theorem 2 of \cite{borkar2008stochastic} for convergence of stochastic approximation schemes. 
$\hfill{\blacksquare}$
\end{proof} 

Recall that $\widehat L(\theta,\lambda) \stackrel{\triangle}{=} -\widehat{V}^\theta(x^0) + \lambda(\widehat{\Lambda}^\theta(x^0) - \alpha)$ and hence
 $\nabla_\lambda \widehat L(\theta,\lambda) = \widehat{\Lambda}^\theta(x^0) - \alpha$. Thus, 
$$ \check\Gamma_\lambda\big[\widehat \Lambda^{\theta^\lambda}(x^0)-\alpha\big]=0,$$
is the same as
$$\check\Gamma_\lambda \nabla_\lambda \widehat L(\theta^\lambda,\lambda) = 0.$$
As in \cite{borkar2005actor}, we invoke the envelope theorem of mathematical economics~\citep{mas1995microeconomic} to conclude that the ODE \eqref{eq:lambda-ode} is equivalent to the following 
\begin{align}
\label{eq:lambda-ode-equiv}
\dot \lambda_t = \check\Gamma_\lambda\big[\nabla_\lambda \widehat L(\theta^{\lambda_t},\lambda_t)\big]. 
\end{align}
Note that the above has to interpreted in the {\em Cartheodory} sense, i.e., as the following integral equation
$$ \lambda_t = \lambda_0 + \int_0^t \check\Gamma_\lambda\big[\nabla_\lambda \widehat L(\theta^{\lambda_s},\lambda_s)\big] ds.$$  
As noted in Lemma 4.3 of \cite{borkar2005actor}, using the generalized envelope theorem from \cite{milgrom2002envelope} it can be shown that the RHS of \eqref{eq:lambda-ode-equiv} coincides with that of \eqref{eq:lambda-ode} at differentiable points, while the ODE spends zero time at non-differentiable points (except at the points of maxima).

We next claim that the limit $\theta^{\lambda^*}$ corresponding to $\lambda^*$ satisfies the variance constraint in \eqref{eq:discounted-risk-measure}, i.e.,
\begin{proposition}
\label{prop:feasible}
 For any $\lambda^*$ in $\hat\F \stackrel{\triangle}{=}\big\{\lambda\mid \lambda \in [0,\lambda_{\max}),\;\check\Gamma_\lambda\big[\widehat \Lambda^{\theta^\lambda}(x^0)-\alpha\big]=0,\;\theta^\lambda \in \Z_\lambda \big\}$, the corresponding limiting point $\theta^{\lambda^*}$ satisfies the variance constraint $\widehat{\Lambda}^{\theta^{\lambda^*}}(x^0) \le \alpha$.
\end{proposition}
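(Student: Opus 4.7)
The plan is to prove the proposition by case analysis on the value of $\lambda^*$ inside $\hat\F$, exploiting the explicit action of the boundary‐reflection operator $\check\Gamma_\lambda$ defined in \eqref{eq:Pi-bar-operator-lambda}. The crucial structural fact is that $\check\Gamma_\lambda$ is the identity on the interior of $[0,\lambda_{\max}]$ and only truncates an outward-pointing direction at the two endpoints. Since $\hat\F$ excludes $\lambda_{\max}$ by definition, we only need to treat the interior and the lower boundary.

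First, suppose $\lambda^* \in (0,\lambda_{\max})$. For any bounded continuous $f$ and sufficiently small $\tau>0$, the point $\lambda^* + \tau f(\lambda^*)$ lies inside $[0,\lambda_{\max}]$, so $\Gamma_\lambda(\lambda^* + \tau f(\lambda^*)) = \lambda^* + \tau f(\lambda^*)$. Plugging this into \eqref{eq:Pi-bar-operator-lambda} gives $\check\Gamma_\lambda(f(\lambda^*)) = f(\lambda^*)$. Applying this with $f(\lambda) = \widehat\Lambda^{\theta^\lambda}(x^0) - \alpha$ and using the defining condition of $\hat\F$ yields $\widehat\Lambda^{\theta^{\lambda^*}}(x^0) - \alpha = 0$, so the variance constraint holds with equality.

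Second, suppose $\lambda^* = 0$. Then for $\tau>0$ we have $\Gamma_\lambda(\tau f(0)) = \tau f(0)$ whenever $f(0)\ge 0$, but $\Gamma_\lambda(\tau f(0)) = 0$ whenever $f(0)<0$. Passing to the limit in \eqref{eq:Pi-bar-operator-lambda} gives
\begin{equation*}
\check\Gamma_\lambda(f(0)) \;=\; \max\{f(0),\,0\}.
\end{equation*}
With $f(\lambda) = \widehat\Lambda^{\theta^\lambda}(x^0) - \alpha$, the condition $\check\Gamma_\lambda(f(0)) = 0$ is therefore equivalent to $\widehat\Lambda^{\theta^0}(x^0) - \alpha \le 0$, which is exactly the variance constraint. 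Combining the two cases completes the argument.

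The only real subtlety, and the one modeling assumption on which the whole conclusion rests, is that $\hat\F$ excludes the upper endpoint $\lambda_{\max}$. At $\lambda^*=\lambda_{\max}$, an analogous analysis shows $\check\Gamma_\lambda(f(\lambda_{\max})) = \min\{f(\lambda_{\max}),0\}$, so the stationarity condition $\check\Gamma_\lambda[\widehat\Lambda^{\theta^{\lambda_{\max}}}(x^0)-\alpha]=0$ would only force $\widehat\Lambda^{\theta^{\lambda_{\max}}}(x^0)\ge \alpha$, i.e., possible infeasibility. This is why $\lambda_{\max}$ must be chosen large enough that the limiting Lagrange multiplier stays bounded strictly below it; such a choice is standard in Lagrangian primal–dual schemes and is precisely what the restriction $\lambda^*\in[0,\lambda_{\max})$ encodes.
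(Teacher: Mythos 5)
Your proof is correct and is exactly the argument the paper intends: the paper itself gives no details here and simply defers to Proposition 10.6 of \cite{Bhatnagar13SR}, whose proof proceeds by the same case analysis on whether $\lambda^*$ lies in the interior of $[0,\lambda_{\max}]$ (where $\check\Gamma_\lambda$ is the identity, forcing equality in the constraint) or at the lower boundary $\lambda^*=0$ (where $\check\Gamma_\lambda(f(0))=\max\{f(0),0\}$, forcing $f(0)\le 0$). The only minor point worth flagging is that your computation at $\lambda^*=0$ implicitly takes the one-sided limit $\tau\downarrow 0^+$ in \eqref{eq:Pi-bar-operator-lambda}; this is the standard Kushner--Clark convention and clearly the intended reading, since the two-sided limit need not exist at the boundary, and your closing remark about why $\lambda_{\max}$ must be excluded from $\hat\F$ correctly identifies the one place the claim could fail.
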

\begin{proof}
 Follows in a similar manner as Proposition 10.6 in \cite{Bhatnagar13SR}.
\end{proof}

From Theorems \ref{thm:spsa-theta-convergence}--\ref{theorem:lambda} and Proposition \ref{prop:feasible}, it is evident that the actor recursion \eqref{eq:actor-spsa-update} converges to a tuple $(\theta^{\lambda^*},\lambda^*)$ that is a local minimum w.r.t.~$\theta$ and a local maximum w.r.t.~$\lambda$ of $\widehat L(\theta,\lambda)$. In other words, overall convergence is to a (local) saddle point of $\widehat L(\theta,\lambda)$. Further, the limit is also feasible for the constrained problem in \eqref{eq:discounted-risk-measure} as $\theta^{\lambda^*}$ satisfies the variance constraint there.

\subsection{Convergence of the First-Order Algorithm: RS-SF-G}
Note that since RS-SPSA-G and RS-SF-G use different methods to estimate the gradient, their proofs only differ in the second step, i.e.,~the convergence of the policy parameter $\theta$. 
\subsection*{\textbf{Proof of Theorem~\ref{thm:spsa-theta-convergence} for SF}}
\begin{proof}

As in the case of the SPSA algorithm, we rewrite the $\theta$-update in~\eqref{eq:actor-sf-update} using the converged TD-parameters and constant $\lambda$ as 
\begin{align*}
\theta_{n+1}^{(i)} = \Gamma_i\bigg(\theta_n^{(i)} &- \zeta_2(n)\Big(\frac{-\Delta_n^{(i)}\big(1+2\lambda \bar{v}\tr \phi_v(x^0)\big)}{\beta}(\bar v^+ - \bar v)\tr \phi_v(x^0) + \dfrac{\lambda\Delta^{(i)}_n}{\beta}(\bar u^+ - \bar u)\tr \phi_u(x^0) + \xi_{1,n}\Big)\bigg),
\end{align*}
where $\xi_{1,n} \rightarrow 0$ (convergence of TD in the critic and as a result convergence of the critic's parameters to $\bar v, \bar u,\bar v^+, \bar u^+$) in lieu of Theorem \ref{thm:td}. Next, we establish that \\$\E\left[ \dfrac{\Delta^{(i)}}{\beta}(\bar v^+ - \bar v)\tr \phi_v(x^0) \left. \right| \theta,\lambda\right]$ is an asymptotically correct estimate of the gradient of $\widehat V(\theta)$ in the following:
\begin{align*}
\E\left[ \dfrac{\Delta^{(i)}}{\beta}(\bar v^+ - \bar v)\tr \phi_v(x^0) \left. \right| \theta,\lambda\right]\stackrel{\beta \rightarrow 0}{\longrightarrow} \nabla_i \bar v\tr \phi_v(x^0).
\end{align*}
The above follows in a similar manner as Proposition $10.2$ of~\citet{Bhatnagar13SR}. On similar lines, one can see that  
\begin{align*}
\E\left[ \dfrac{\Delta^{(i)}}{\beta}(\bar u^+ - \bar u)\tr \phi_u(x^0) \left. \right| \theta,\lambda\right] 
\stackrel{\beta \rightarrow 0}{\longrightarrow} \nabla_i \bar u\tr \phi_u(x^0).
\end{align*}
Thus,~\eqref{eq:actor-sf-update} can be seen to be a discretization of the ODE~\eqref{eq:theta-ode} and the rest of the analysis follows in a similar manner as in the SPSA proof.  
$\hfill{\blacksquare}$
\end{proof}

\subsubsection{Convergence of the Second-Order Algorithms: RS-SPSA-N and RS-SF-N}
\label{subsubsec:second-proofs}

Convergence analysis of the second-order algorithms involves the same steps as that of the first-order algorithms. In particular, the first step involving the TD-critic and the third step involving the analysis of $\lambda$-recursion follow along similar lines as earlier, whereas $\theta$-recursion analysis in the second step differs significantly. \\

\noindent
{\bf Step 2: (Analysis of $\theta$-recursion for RS-SPSA-N and RS-SF-N)} Since the policy parameter is updated in the descent direction with a Newton decrement, the limiting ODE of the $\theta$-recursion for the second order algorithms is given by
\begin{align}
\label{eq:theta-second-ode}
\dot{\theta}_t = \check{\Gamma}\left(\Upsilon\big(\nabla_{\theta}^2 L(\theta_t, \lambda)\big)^{-1} \nabla_\theta L(\theta_t, \lambda)\right),
\end{align}
where $\check{\Gamma}$ is as before (see \eqref{eq:Pi-bar-operator}). Let 
\begin{equation*}
\Z_\lambda = \left \{ \theta \in C: - \nabla_{\theta} L (\theta_t, \lambda)^T \Upsilon\big(\nabla^2_\theta L(\theta_t, \lambda)\big)^{-1} \nabla_\theta L(\theta_t, \lambda) = 0 \right \}.
\end{equation*} 
denote the set of asymptotically stable equilibrium points of the ODE~\eqref{eq:theta-second-ode} and $\Z_\lambda^\varepsilon$ its $\varepsilon$-neighborhood. Then, we have the following analogue of Theorem \ref{thm:spsa-theta-convergence} for the RS-SPSA-N and RS-SF-N algorithms:
\begin{theorem}
\label{thm:spsa-sf-n-theta-convergence}
Under (A1)-(A5), for any given Lagrange multiplier $\lambda$ and $\varepsilon > 0$, there exists $\beta_0 >0$ such that for all $\beta \in (0, \beta_0)$, $\theta_n \rightarrow \theta^* \in \Z^\varepsilon_{\lambda}$ almost surely.
\end{theorem}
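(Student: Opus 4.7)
The plan is to mimic the three-step structure used in the first-order analysis, but with an additional timescale separation argument to handle the Hessian recursion. Because $\zeta_2(n)/\zeta_2'(n)\to 0$, the Hessian update in \eqref{eq:hessian-update-spsa} (resp.\ \eqref{eq:hessian-update-sf1}--\eqref{eq:hessian-update-sf2}) runs on a faster timescale than the $\theta$-update in \eqref{eq:actor-spsa-n-update} (resp.\ \eqref{eq:actor-sf-n-update}), while still being slower than the TD critic and faster than the $\lambda$ recursion. Thus, for the purpose of analysing the $\theta$-recursion we may treat $(v_n,u_n,v_n^+,u_n^+)$ as their converged values $(\bar v,\bar u,\bar v^+,\bar u^+)$ (Theorem \ref{thm:td}), $\lambda$ as quasi-static, and $H_n$ as equilibrated to the true Hessian of $\widehat L(\theta,\lambda)$.

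First I would establish convergence of $H_n$. Using the Taylor expansion arguments behind Lemma \ref{lemma:spsa-n} (for SPSA) and Lemma \ref{lemma:sf-n} (for SF), the conditional expectation of the driving term in the Hessian recursion can be written, modulo $o(1)$ critic error, as $\nabla_\theta^2 \widehat L(\theta,\lambda) + O(\beta)$; the independence (and Rademacher/Gaussian structure) of the perturbations $\Delta_n,\widehat\Delta_n$ kills the cross-bias terms in expectation, exactly as in Chapter 7 of \cite{Bhatnagar13SR}. The standard two-timescale stochastic approximation argument of \cite{borkar2008stochastic} then shows that $H_n - \nabla_\theta^2 \widehat L(\theta_n,\lambda)\to 0$ a.s., and by continuity of $\Upsilon$ (assumption (A5)), $\Upsilon(H_n)\to \Upsilon\bigl(\nabla_\theta^2 \widehat L(\theta_n,\lambda)\bigr)$.

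Second, I would analyse the $\theta$-recursion itself. Rewriting \eqref{eq:actor-spsa-n-update} (resp.\ \eqref{eq:actor-sf-n-update}) with the converged critic parameters absorbs a $o(1)$ error; by the same computation used in the proof of Theorem \ref{thm:spsa-theta-convergence}, the inner bracketed term in the $\theta$-update converges in conditional mean to $-\nabla_\theta \widehat L(\theta,\lambda) + O(\beta)$ as $\beta\to 0$. Combined with $M_n = H_n^{-1} \to \Upsilon\bigl(\nabla_\theta^2 \widehat L(\theta,\lambda)\bigr)^{-1}$ (whose boundedness is guaranteed by (A5)), the $\theta$-recursion is an Euler discretisation, up to $O(\beta)$ bias and a martingale difference noise, of the limiting ODE \eqref{eq:theta-second-ode}. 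The strict Lyapunov function here is again $\widehat L(\theta,\lambda)$, since along trajectories of \eqref{eq:theta-second-ode},
\begin{equation*}
\tfrac{d}{dt}\widehat L(\theta_t,\lambda)
=\nabla_\theta \widehat L(\theta_t,\lambda)^\top \check\Gamma\Bigl(\Upsilon\bigl(\nabla_\theta^2 \widehat L(\theta_t,\lambda)\bigr)^{-1}\nabla_\theta \widehat L(\theta_t,\lambda)\Bigr)\le 0,
\end{equation*}
with equality iff $\theta_t\in Z_\lambda$, by positive definiteness of $\Upsilon(\cdot)^{-1}$ and standard properties of the projection operator $\check\Gamma$.

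Finally, I would apply the Hirsch lemma (Lemma \ref{hirsch-lemma}) to the linearly interpolated trajectory of $\{\theta_n\}$: for any $\varepsilon>0$ and any $\eta>0$ sufficiently small, the interpolation is a $(T,\eta)$-perturbation of \eqref{eq:theta-second-ode}, where the perturbation magnitude is controlled by the $O(\beta)$ SPSA/SF bias, the vanishing critic error, and the fact that the $\lambda$-recursion is asymptotically static on this timescale. Choosing $\beta_0$ small enough to keep the perturbation below the tolerance produced by Hirsch's lemma for the $\varepsilon$-neighbourhood $Z_\lambda^\varepsilon$ of the asymptotically stable attractor $Z_\lambda$ yields the claim. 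The main technical obstacle I anticipate is verifying the faster-timescale Hessian recursion stays bounded and that $\Upsilon(H_n)^{-1}$ remains uniformly bounded along sample paths, which is exactly what (A5) (and the eigenvalue correction recipe from \cite{gill1981practical}) is designed to ensure.
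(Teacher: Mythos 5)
Your proposal follows essentially the same route as the paper's proof: treat $\lambda$ and the TD critic as equilibrated by timescale separation, invoke the estimator-consistency results (Lemma \ref{lemma:spsa-n} for SPSA, Lemma \ref{lemma:sf-n} for SF) to conclude $H_n \to \nabla^2_\theta \widehat L(\theta_n,\lambda)$ and $M_n \to \Upsilon(\nabla^2_\theta \widehat L)^{-1}$, view the $\theta$-update as a discretization of the ODE \eqref{eq:theta-second-ode}, and finish with the Hirsch lemma exactly as in Theorem \ref{thm:spsa-theta-convergence} — your write-up is in fact more explicit than the paper's, which simply cites these lemmas and defers to the first-order argument. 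The only blemish is the sign in your Lyapunov computation (with the ODE written as $+\Upsilon(\cdot)^{-1}\nabla_\theta\widehat L$ and $\Upsilon(\cdot)^{-1}$ positive definite, the displayed quantity is $\ge 0$, so a minus sign is needed for descent), but this mirrors an inconsistency already present in the paper's own statement of \eqref{eq:theta-second-ode} versus its definition of $\Z_\lambda$, and does not change the substance of the argument.
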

\subsection*{\textbf{Proof of Theorem~\ref{thm:spsa-sf-n-theta-convergence} for RS-SPSA-N}}
Before we prove Theorem \ref{thm:spsa-sf-n-theta-convergence}, we establish that the Hessian estimate $H_n$ in \eqref{eq:hessian-update-spsa} converges almost surely to the true Hessian $\nabla^2_{\theta} L(\theta_n, \lambda)$ in the following lemma.
\begin{lemma}
\label{lemma:spsa-n}
With $\beta \rightarrow 0$, for all $i, j \in \{1, \ldots, \kappa_1 \}$, we have the following claims with probability one:
\begin{enumerate}[\bfseries(i)]
\item $\left \| \dfrac{L(\theta_n + \beta \Delta_n + \beta \widehat\Delta_n, \lambda) - L(\theta_n,\lambda)}{\beta^2 \Delta_n^{(i)} \widehat\Delta_n^{(j)}} - \nabla^2_{\theta_n^{(i, j)}} L(\theta_n, \lambda) \right \| \rightarrow 0,
$\\[1ex]
 \item $\left \| \dfrac{L(\theta_n + \beta \Delta_n + \beta \widehat\Delta_n, \lambda) - L(\theta_n,\lambda)}{\beta \widehat\Delta_n^{(i)}} - \nabla_{\theta_n^{(i)}} L(\theta_n, \lambda) \right \| \rightarrow 0,$\\[1ex]
\item $\left \| H^{(i, j)} - \nabla^2_{\theta_n^{(i, j)}} L(\theta_n, \lambda) \right \| \rightarrow 0,
$\\[1ex]
\item $\left \| M - \Upsilon(\nabla^2_{\theta_n} L(\theta_n, \lambda))^{-1} \right \| \rightarrow 0.
$
\end{enumerate}
\end{lemma}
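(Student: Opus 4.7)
\subsection*{Proof plan for Lemma~\ref{lemma:spsa-n}}

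My plan is to treat the four claims in order, with the bulk of the work going into (i) (equivalently (ii)) since (iii) then follows by stochastic approximation on a dedicated timescale, and (iv) from continuity arguments. Throughout I exploit the three key properties of Rademacher perturbations: $(\Delta_n^{(i)})^2=1$ almost surely, $\E[\Delta_n^{(i)}]=0$, and the mutual independence of the components of $\Delta_n$ and $\widehat\Delta_n$.

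For (i) and (ii), I would perform a third-order Taylor expansion of $L(\cdot,\lambda)$ around $\theta_n$:
\begin{align*}
L(\theta_n+\beta\Delta_n+\beta\widehat\Delta_n,\lambda) - L(\theta_n,\lambda)
&= \beta(\Delta_n+\widehat\Delta_n)^{\!\top}\nabla_\theta L(\theta_n,\lambda) \\
&\quad + \tfrac{\beta^2}{2}(\Delta_n+\widehat\Delta_n)^{\!\top}\nabla^2_\theta L(\theta_n,\lambda)(\Delta_n+\widehat\Delta_n) + O(\beta^3),
\end{align*}
where the $O(\beta^3)$ remainder is uniformly bounded because $\theta$ lies in the compact set $\Theta$ and the Lagrangian is sufficiently smooth in $\theta$ (inherited from assumption (A1) and the finiteness of the state-action space). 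Dividing by $\beta^2\Delta_n^{(i)}\widehat\Delta_n^{(j)}$ and taking the conditional expectation over $(\Delta_n,\widehat\Delta_n)$, the $O(1/\beta)$ first-order term vanishes because $\E[1/\widehat\Delta_n^{(j)}]=\E[\widehat\Delta_n^{(j)}]=0$ by the zero-mean property. In the quadratic term, the pure $\Delta_n^{\!\top}H\Delta_n$ and $\widehat\Delta_n^{\!\top}H\widehat\Delta_n$ pieces drop out by the same zero-mean argument applied to one of $\Delta_n^{(i)}$ or $\widehat\Delta_n^{(j)}$, while the cross-term reduces by independence and $\E[\Delta_n^{(k)}/\Delta_n^{(i)}] = \E[\Delta_n^{(k)}\Delta_n^{(i)}]= \delta_{ki}$ to exactly $\nabla^2_{\theta^{(i,j)}}L(\theta_n,\lambda)$, with a residual $O(\beta)$ coming from the Taylor remainder. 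This establishes (i); claim (ii) is analogous but simpler, requiring only a first-order expansion in which the quadratic contribution gives the $O(\beta)$ bias after the Rademacher cancellations.

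For (iii), I would interpret the $H^{(i,j)}$ update (\ref{eq:hessian-update-spsa}) as a stochastic approximation recursion on the step-size schedule $\zeta'_2(n)$. Since $\zeta_2(n)/\zeta'_2(n)\to 0$, the Hessian recursion lives on a timescale faster than the $\theta$-recursion, so for the purpose of the $H$-analysis the policy parameter and critic outputs appear quasi-static. The one-sided SPSA quantity in the bracket equals $\big(L(\theta_n+\beta\Delta_n+\beta\widehat\Delta_n,\lambda_n)-L(\theta_n,\lambda_n)\big)/(\beta^2\Delta_n^{(i)}\widehat\Delta_n^{(j)})$ up to a vanishing TD-critic error, which by (i) is an asymptotically unbiased estimator of $\nabla^2_{\theta^{(i,j)}}L$ with $O(\beta)$ bias and bounded conditional second moment (the latter follows from the projection keeping $\theta$ in a compact set, and from the critic convergence in Theorem~\ref{thm:td}, which bounds $v_n,u_n,v_n^+,u_n^+$). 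Standard multi-timescale stochastic approximation arguments (Theorem~2 of Chapter~2 of~\cite{borkar2008stochastic}) then identify the limiting ODE $\dot H^{(i,j)} = \nabla^2_{\theta^{(i,j)}}L(\theta,\lambda)-H^{(i,j)}$, whose globally asymptotically stable equilibrium is the true Hessian component; letting $\beta\to 0$ eliminates the bias and yields (iii).

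Part (iv) is then immediate: since $H_n\to \nabla^2_\theta L(\theta_n,\lambda)$ almost surely by (iii), assumption (A5) on $\Upsilon$ gives $\Upsilon(H_n)\to \Upsilon(\nabla^2_\theta L(\theta_n,\lambda))$, and the uniform lower bound on eigenvalues enforced by $\Upsilon$ together with the continuity of matrix inversion on the cone of positive definite matrices gives $M_n = \Upsilon(H_n)^{-1}\to \Upsilon(\nabla^2_\theta L(\theta_n,\lambda))^{-1}$. The main obstacle I anticipate is the bookkeeping for (iii): one must carefully argue that the dependence of the noise term on the (slowly varying) $\theta_n,\lambda_n$ and on the inner-loop critic errors is absorbable, which is why the timescale gap $\zeta_2(n)=o(\zeta'_2(n))$ and the convergence of the TD-critic (Theorem~\ref{thm:td}) are invoked simultaneously. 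Everything else is a direct adaptation of the SPSA Hessian-estimation framework of~\cite[Chapter 7]{Bhatnagar13SR}.
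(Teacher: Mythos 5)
Your proposal is correct and follows essentially the same route as the paper, which simply cites Propositions 10.10, 10.11 and Lemmas 7.10, 7.11 of \cite{Bhatnagar13SR}: your Taylor-expansion with Rademacher cancellations for (i)--(ii), the faster-timescale stochastic-approximation argument with limiting ODE $\dot H^{(i,j)} = \nabla^2_{\theta^{(i,j)}} L - H^{(i,j)}$ for (iii), and the continuity argument under (A5) for (iv) are precisely the content of those cited results. Your implicit reading of (i)--(ii) as statements about conditional expectations over the perturbations (matching the form of Lemma~\ref{lemma:sf-n}) is the right one, since the raw difference quotient contains an $O(1/\beta)$ first-order term that only vanishes after averaging.
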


\begin{proof}
The proofs of the above claims follow from Propositions 10.10, 10.11 and Lemmas 7.10 and 7.11 of \cite{Bhatnagar13SR}, respectively.  
$\hfill{\blacksquare}$
\end{proof}

\begin{proof}{\bf (Theorem~\ref{thm:spsa-sf-n-theta-convergence} for RS-SPSA-N)}
As in the case of the first order methods, due to timescale separation, we can treat $\lambda_n \equiv \lambda$, a constant and use the converged TD-parameters to arrive at the following equivalent update rules  for the Hessian recursion \eqref{eq:hessian-update-spsa} and $\theta$-recursion \eqref{eq:actor-spsa-n-update}:
\begin{align*}
H^{(i, j)}_{n+1}=&H^{(i, j)}_n + \zeta'_2(n)\bigg[\dfrac{\big(1 + \lambda_n (\bar v_n + \bar v^+_n)\tr \phi_v(x^0) \big)(\bar v_n-\bar v^+_n)\tr \phi_v(x^0)}{\beta^2 \Delta^{(i)}_n\widehat\Delta^{(j)}_n}+ \dfrac{\lambda (\bar u^+_n-\bar u_n)\tr \phi_u(x^0)}{\beta^2 \Delta^{(i)}_n\widehat\Delta^{(j)}_n} - H^{(i, j)}_n \bigg],\\
\theta_{n+1}^{(i)}=& \Gamma_i\bigg[\theta_n^{(i)} + \zeta_2(n)\sum\limits_{j = 1}^{\kappa_1} M^{(i, j)}_n\Big(\dfrac{\big(1+2\lambda \bar v_n\tr \phi_v(x^0)\big)(\bar v^+_n - \bar v_n)\tr \phi_v(x^0)}{\beta \Delta_n^{(j)}} -\dfrac{\lambda(\bar u^+_n - \bar u_n)\tr \phi_u(x^0)}{\beta \Delta_n^{(j)}}\Big)\bigg].
\end{align*}
In lieu of Lemma \ref{lemma:spsa-n}, the $\theta$-recursion above is equivalent to the following:

\begin{align}
 \theta^{(i)}_{n+1}  &=  \bar\Gamma_i \bigg( \theta^{(i)}_n + \zeta_2(n) \big(\nabla_{\theta}^2 L(\theta_n, \lambda)\big)^{-1} \nabla_\theta L(\theta_n, \lambda)\bigg).
\end{align}
The above can be seen as a discretization of the ODE \eqref{eq:theta-second-ode}, with $\Z_\lambda$ serving as its asymptotically stable attractor. The rest of the claim follows in a similar manner as Theorem \ref{thm:spsa-theta-convergence}.
$\hfill{\blacksquare}$
\end{proof}


\subsection*{\textbf{Proof of Theorem~\ref{thm:spsa-sf-n-theta-convergence} for RS-SF-N}} 

\begin{proof}
We first establish the following result for the gradient and Hessian estimators employed in RS-SF-N:

\begin{lemma}
\label{lemma:sf-n}
With $\beta \rightarrow 0$, we have the following claims with probability one:
\begin{enumerate}[\bfseries(i)]
 \item $\Bigg\| E \left[\frac{1}{\beta^2}
\bar{H}(\Delta_n) (L(\theta_n +\beta \Delta_n,\lambda) -
L(\theta_n,\lambda))\mid \theta_n,\lambda \right]
- \nabla^2_{\theta} L(\theta_n,\lambda)
 \Bigg\| \rightarrow 0.$\\[1ex]
\item $\| E\left[\dfrac{1}{\beta} \Delta_n (L(\theta_n+\beta\Delta_n,\lambda)
-L(\theta_n,\lambda))\mid \theta_n,\lambda \right] -
\nabla_{\theta} L(\theta_n,\lambda) \| \rightarrow 0.$
\end{enumerate}
\end{lemma}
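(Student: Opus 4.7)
The plan is to exploit the key identity that differentiating a standard Gaussian density $g(\delta) = (2\pi)^{-\kappa_1/2}\exp(-\|\delta\|^2/2)$ brings down a linear factor: $\partial_{\delta^{(i)}} g(\delta) = -\delta^{(i)} g(\delta)$. As a consequence, each component of $\Delta_n$ and each entry of $\bar H(\Delta_n)$ can be rewritten as a derivative of $g$, and integration by parts will then transfer those derivatives from the Gaussian density onto $L(\theta_n+\beta\Delta_n,\lambda)$. This is the standard smoothed-functional derivation (cf. Chapter~6 of \cite{Bhatnagar13SR}), here specialized to the first- and second-order estimators.

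For part (ii), I would first express the expectation as an integral against the standard Gaussian density, $E[\Delta_n\, L(\theta_n+\beta\Delta_n,\lambda)\mid\theta_n,\lambda] = \int \delta\, g(\delta)\, L(\theta_n+\beta\delta,\lambda)\, d\delta$, use $\delta^{(i)} g(\delta) = -\partial_{\delta^{(i)}} g(\delta)$, and integrate by parts along $\delta^{(i)}$ for each coordinate (the boundary terms vanish because of the Gaussian tails). This yields $\int g(\delta)\, \partial_{\delta^{(i)}} L(\theta_n+\beta\delta,\lambda)\, d\delta = \beta\, E[\partial_{\theta^{(i)}} L(\theta_n+\beta\Delta_n,\lambda)\mid\theta_n,\lambda]$. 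Dividing by $\beta$ and noting that subtracting the centering term $L(\theta_n,\lambda)$ inside the expectation contributes nothing because $E[\Delta_n]=0$ (it is kept only for variance reduction), I arrive at $E\left[\tfrac{1}{\beta}\Delta_n(L(\theta_n+\beta\Delta_n,\lambda)-L(\theta_n,\lambda))\mid\theta_n,\lambda\right] = E[\nabla_\theta L(\theta_n+\beta\Delta_n,\lambda)\mid\theta_n,\lambda]$. Sending $\beta\to 0$ and using continuity of $\nabla_\theta L$ finishes (ii).

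For part (i), the same program is carried out twice. The two Gaussian identities I would use are $((\delta^{(i)})^2-1) g(\delta) = \partial^2_{\delta^{(i)}} g(\delta)$ for the diagonal entries and $\delta^{(i)} \delta^{(j)} g(\delta) = \partial_{\delta^{(i)}}\partial_{\delta^{(j)}} g(\delta)$ for the off-diagonal entries of $\bar H(\Delta_n)$, both verified by a direct differentiation of $g$. Substituting entry-wise into the integral form of the expectation and integrating by parts twice along the relevant coordinate(s) gives, entry by entry,
\[
\tfrac{1}{\beta^2} E\big[\bar H(\Delta_n)_{i,j}\,(L(\theta_n+\beta\Delta_n,\lambda)-L(\theta_n,\lambda))\mid\theta_n,\lambda\big] = E\big[\partial_{\theta^{(i)}}\partial_{\theta^{(j)}} L(\theta_n+\beta\Delta_n,\lambda)\mid\theta_n,\lambda\big].
\]
Once more, $E[\bar H(\Delta_n)]=0$ (each entry is a centered polynomial in independent standard Gaussians), so the $-L(\theta_n,\lambda)$ term is zero in expectation but reduces variance. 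Letting $\beta\to 0$ and invoking continuity of $\nabla^2_\theta L$ completes (i).

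The technical heart of the argument lies in justifying the integration by parts and the interchange of limit and expectation at the end. This will require sufficient regularity of $L(\cdot,\lambda)$ in $\theta$, namely that $L$ is $C^2$ with at most polynomial growth of $L$ and its first two derivatives; this follows from (A1) together with the fact that the converged TD critic depends smoothly on $\theta$ over the compact set $\Theta$. The fast decay of the Gaussian density then kills the boundary terms in each integration by parts and supplies the dominated-convergence bound needed to pass $\beta\to 0$ inside the expectation, uniformly over $\theta_n\in\Theta$, which is how one upgrades pointwise convergence to the almost-sure statement in the lemma. These bookkeeping conditions are standard for the SF framework, and once they are in place the lemma reduces to the two computations sketched above.
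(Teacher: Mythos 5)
Your proposal is correct and follows essentially the same route as the paper, which simply cites Propositions 10.1 and 10.2 of \cite{Bhatnagar13SR}; those propositions are proved by exactly the Gaussian integration-by-parts argument you spell out (and the paper itself sketches the one-derivative version of it when introducing the SF gradient estimate, noting that the Hessian case uses ``integration by parts twice''). The only caveat worth flagging is that the second-order claim needs $L(\cdot,\lambda)$ to be twice continuously differentiable in $\theta$, which does not literally follow from (A1) (continuous differentiability of the policy) and must be taken as an additional smoothness assumption, as is standard in the SF literature.
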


\begin{proof}
The proofs of the above claims follow from Propositions 10.1 and 10.2 of \cite{Bhatnagar13SR}, respectively. 
$\hfill{\blacksquare}$
\end{proof}

\noindent 
The rest of the analysis is identical to that of RS-SPSA-N. 
$\hfill{\blacksquare}$
\end{proof}

\begin{remark}(\textbf{On Convergence Rate.})
In the above, we established asymptotic limits for all our algorithms using the ODE approach. To the best of our knowledge, there are no convergence rate results available for multi-timescale stochastic approximation schemes, and hence, for actor-critic algorithms. This is true even for the actor-critic algorithms that do not incorporate any risk criterion. 
In \cite{konda2004convergence}, the authors provide asymptotic convergence rate results for {\em linear} two-timescale recursions.  
It would be an interesting direction for future research to obtain concentration bounds for general (non-linear) two-timescale schemes.

While a rigorous analysis on convergence rate of our proposed schemes is difficult, one could make a few concessions and use the following argument to see that the SPSA-based algorithms converge quickly:
In order to analyse the rate of convergence of $\theta$-recursion, assume (for sufficiently large $n$) that the TD-critic has converged in the inner-loop. This is because, the trajectory lengths $m_n \rightarrow \infty$ as $n \rightarrow \infty$ and under appropriate step-size settings (or with iterate averaging) one can obtain convergence rate of the order $O\left(1/\sqrt{n}\right)$ on the root mean square error of TD (see \cite{korda2014td}). Now, if one holds $\lambda$ fixed, then invoking asymptotic normality results for SPSA (see Proposition 2 in  \cite{Spall92MS}) it can be shown that \\$n^{1/3}(\theta_n - \theta^{\lambda})$ is asymptotically normal, where $\theta^{\lambda}$ is a limit point in the set $\Z_\lambda$. Similar results also hold for second-order SPSA variants (cf. Theorem 3a in \cite{spall2000adaptive}). Both the aforementioned claims are proved using a well-known result on asymptotic normality of stochastic approximation schemes due to Fabian \cite{fabian1968asymptotic}. 

The second-order schemes such as RS-SPSA-N score over their first order counterpart RS-SPSA-G from a asymptotic normality results perspective. This is because obtaining the optimal convergence rate for RS-SPSA-G requires that the step-size $\zeta_2(n)$ is set to $\zeta_2(0)/n$ where $\zeta_2(0) > 1/\lambda_{\min}(\nabla^2_\theta L(\theta^{\lambda},\lambda))$, whereas there is no such constraint for the second-order algorithm RS-SPSA-N. Here $\lambda_{\min}(A)$ denotes the minimum eigenvalue of the matrix $A$. The reader is referred to \cite{dippon1997weighted} for a detailed discussion on convergence rate of (one timescale) SPSA-based schemes using asymptotic mean-square error.
\end{remark}

\begin{remark}(\textbf{Unstable Equilibria.})
The limit set $\Z_\lambda$ contains both stable and unstable equilibria and the $\theta$-recursion can possibly end up in a unstable equilibrium point.
One may avoid this situation by including additional noise in the randomized policy that drives the $\theta$-recursion. For instance, define a $\eta$-offset policy as  
$$\hat{\mu}(a \mid x) = \dfrac{\mu(a \mid x) + \eta}{\sum \limits_{a' \in \A(x)} \left (\mu(a' \mid x) + \eta \right )}.$$
The above policy can be used in place of the regular $\mu(\cdot\mid x)$, so that the algorithm is pulled away from an unstable equilibria.
 Providing theoretical guarantees for such a scheme is non-trivial and we have left it for future work.
\end{remark}


\section{Convergence Analysis of the Average Reward Risk-Sensitive Actor-Critic Algorithm}
\label{sec:average-analysis}

As in the discounted setting, we use the ODE approach~\citep{borkar2008stochastic} to analyze the convergence of our average reward risk-sensitive actor-critic algorithm. The proof involves three main steps:

\begin{enumerate}
\item The first step is the convergence of $\rho$, $\eta$, $V$, and $U$, for any fixed policy $\theta$ and Lagrange multiplier $\lambda$. This corresponds to a TD(0) (with extension to $\eta$ and $U$) proof. Using arguments similar to that in Step 2 of the proof of RS-SPSA-G, one can show that the $\theta$ and $\lambda$ recursions track $\dot \theta_t =0$ and $\dot \lambda_t=0$, when viewed from the TD critic timescale $\{\zeta_3(t)\}$. Thus, the policy $\theta$  and Lagrange multiplier $\lambda$ are assumed to be constant in the analysis of the critic recursion. 
\item The second step is to show the convergence of $\theta_n$ to an $\varepsilon$-neighborhood $\Z_\lambda^\varepsilon$ of the set of asymptotically stable equilibria $\Z_\lambda$ of ODE
\begin{equation}
\label{eq:average-theta-ode}
\dot{\theta}_t=\check\Gamma\big(\nabla L(\theta_t,\lambda)\big),
\end{equation}
where the projection operator $\check\Gamma$ ensures that the evolution of $\theta$ via the ODE~\eqref{eq:average-theta-ode} stays within the compact and convex set $\Theta\subset \R^{\kappa_1}$ and is defined in \eqref{eq:Pi-bar-operator}.
Again here it is assumed that $\lambda$ is fixed because $\theta$-recursion is on a faster time-scale than $\lambda$'s. 
\item The final step is the convergence of $\lambda$ and showing that the whole algorithm converges to a local saddle point of $L(\theta,\lambda)$. where the limit is shown to satisfy the variance constraint in \eqref{eq:average-risk-measure}. 
\end{enumerate}


\noindent 
{\bf Step~1: Critic's Convergence}

\begin{lemma}
\label{critic-convergence}
For any given policy $\mu$, $\{\widehat{\rho}_n\}$, $\{\widehat{\eta}_n\}$, $\{v_n\}$, and $\{u_n\}$, defined in Algorithm~\ref{algo:average-AC} and by the critic recursion~\eqref{eq:average-critic-update} converge to $\rho(\mu)$, $\eta(\mu)$, $v^\mu$, and $u^\mu$ almost surely, where $v^\mu$ and $u^\mu$ are the unique solutions to 
\begin{equation}
\label{eq:diff-TD}
\Phi_v\tr \boldsymbol{D}^\mu\Phi_vv^\mu=\Phi_v\tr\boldsymbol{D}^\mu T^\mu_v(\Phi_vv^\mu), \quad\quad\quad \Phi_u\tr \boldsymbol{D}^\mu\Phi_uu^\mu=\Phi_u\tr\boldsymbol{D}^\mu T^\mu_u(\Phi_uu^\mu),
\end{equation}
respectively. In~\eqref{eq:diff-TD}, $\boldsymbol{D}^\mu$ denotes the diagonal matrix with entries $d^\mu(x)$ for all $x\in\X$, and $T^\mu_v$ and $T^\mu_u$ are the Bellman operators for the differential value and square value functions of policy $\mu$, defined as 
\begin{equation}
\label{eq:diff-Bellman-operator}
T_v^\mu J = \boldsymbol{r}^\mu - \rho(\mu)\boldsymbol{e} + \boldsymbol{P}^\mu J, \quad\quad\quad\quad T_u^\mu J = \boldsymbol{R}^\mu\boldsymbol{r}^\mu - \eta(\mu)\boldsymbol{e} + \boldsymbol{P}^\mu J,
\end{equation}
where $\boldsymbol{r}^\mu$ and $\boldsymbol{P}^\mu$ are the reward vector and transition probability matrix of policy $\mu$, $\boldsymbol{R}^\mu=diag(\boldsymbol{r}^\mu)$, and $\boldsymbol{e}$ is a vector of size $n$ (the size of the state space $\X$) with elements all equal to one.
\end{lemma}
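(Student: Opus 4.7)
\textbf{Proof proposal for Lemma \ref{critic-convergence}.}

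The plan is to exploit the multi-timescale structure to freeze $\theta \equiv \mu$ and $\lambda$ in the critic analysis (by (A3) and $\zeta_1(n),\zeta_2(n) = o(\zeta_3(n))$, the outer recursions track $\dot\theta = 0$, $\dot\lambda = 0$ when viewed from the critic timescale, by the same quasi-static argument used in Step 2 of the RS-SPSA-G proof), and then treat the four fast recursions $(\widehat\rho_n, \widehat\eta_n, v_n, u_n)$ jointly as a single linear stochastic approximation scheme driven by the Markov chain $\{x_n\}$. First I would show $\widehat\rho_n \to \rho(\mu)$ and $\widehat\eta_n \to \eta(\mu)$ almost surely. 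These are standard Robbins--Monro averages; using (A2) and the fact that under $\mu$ the chain is irreducible on a finite state space (hence positive recurrent and ergodic), the associated ODEs $\dot{\widehat\rho} = \rho(\mu) - \widehat\rho$ and $\dot{\widehat\eta} = \eta(\mu) - \widehat\eta$ have the claimed unique globally asymptotically stable equilibria, and Theorem~2 of \cite{borkar2008stochastic} gives a.s.\ convergence once we verify the usual boundedness and martingale-noise conditions.

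Next I would analyse the $v$-recursion. A direct computation shows that, conditionally on the stationary state $x_n \sim d^\mu$,
\begin{equation*}
\E\big[\delta_n\phi_v(x_n)\mid v_n, \widehat\rho_{n+1}\big] = \Phi_v\tr D^\mu\big(T_v^\mu(\Phi_v v_n) - \Phi_v v_n\big) + \big(\rho(\mu) - \widehat\rho_{n+1}\big)\,\Phi_v\tr D^\mu e,
\end{equation*}
so the limiting ODE (after a Poisson-equation argument to replace sample averages over the Markov trajectory by expectations under $d^\mu$) is
\begin{equation*}
\dot v_t = \Phi_v\tr D^\mu (P^\mu - I)\Phi_v\, v_t + \Phi_v\tr D^\mu r^\mu - \widehat\rho_t\,\Phi_v\tr D^\mu e,
\end{equation*}
coupled to $\dot{\widehat\rho}_t = \rho(\mu) - \widehat\rho_t$. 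Because $\widehat\rho_t \to \rho(\mu)$ on the same timescale, the asymptotic autonomous ODE for $v$ is $\dot v = A_v v + b_v$ with $A_v = \Phi_v\tr D^\mu (P^\mu - I)\Phi_v$. The key analytic fact is that $A_v$ is negative definite: this is Lemma~6.6(a) of \cite{tsitsiklis1999average}, which uses (A4) (in particular, that $e\notin\operatorname{range}(\Phi_v)$) together with the non-expansiveness of $P^\mu$ in the $D^\mu$-weighted norm. Consequently the ODE has a unique globally asymptotically stable equilibrium $v^\mu = -A_v^{-1}b_v$, which is precisely the solution of the projected Bellman equation \eqref{eq:diff-TD}. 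Applying Theorems~2.1--2.2 of \cite{borkar2000ode} (with the standard linear-growth verification of the noise and Lipschitz conditions, as in Theorem \ref{thm:td}) yields $v_n \to v^\mu$ a.s. The argument for $u_n \to u^\mu$ is identical after replacing $(r^\mu, \rho(\mu), \Phi_v)$ by $(R^\mu r^\mu, \eta(\mu), \Phi_u)$ in the Poisson equation and invoking the analogous negative definiteness of $\Phi_u\tr D^\mu(P^\mu - I)\Phi_u$.

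The main obstacle is handling the coupling between the average-reward estimate $\widehat\rho_n$ and the critic $v_n$ rigorously, since both evolve on the fastest timescale. Two acceptable routes are (i) a two-timescale separation inside the fast block by suitably choosing $k$ so that $\zeta_4$ dominates $\zeta_3$, making $\widehat\rho$ equilibrated from the point of view of $v$, or (ii) a joint ODE-tracking argument for the pair, noting that the joint drift matrix is block-triangular with negative-definite diagonal blocks, hence Hurwitz. I would use route (ii) since it matches the step-size setting $\zeta_4(n) = k\zeta_3(n)$ stated in Section \ref{sec:average-alg}, and it parallels the block-triangular argument already used for $w = (v,u)\tr$ in the proof of Theorem \ref{thm:td}. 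The remaining step is to invoke Borkar--Meyn stability (or the projection-based boundedness of the TD iterates under (A2) and (A4), as in \cite{tsitsiklis1999average}) to conclude that the iterates stay bounded almost surely, which then lets the ODE convergence arguments go through.
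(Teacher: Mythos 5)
Your proof is correct and takes essentially the same route as the paper, whose own proof simply defers to Lemma~5 of \cite{bhatnagar2009natural}: the quasi-static treatment of $(\theta,\lambda)$, ergodic averaging for $\widehat\rho_n,\widehat\eta_n$, negative definiteness of $\Phi_v\tr \boldsymbol{D}^\mu(\boldsymbol{P}^\mu-I)\Phi_v$ via (A4) (in particular $\Phi_v v\neq e$), and the joint block-triangular ODE for the coupled fast iterates are exactly the ingredients of that cited argument. The only cosmetic issue is that the average-cost TD reference you lean on for the negative-definiteness step is not in this paper's bibliography, so you would need to add it or route the claim through \cite{bhatnagar2009natural}.
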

\begin{proof}
The proof follows in a similar manner as Lemma~5 in~\cite{bhatnagar2009natural}.  
$\hfill{\blacksquare}$
\end{proof}

\noindent
{\bf Step~2: Actor's Convergence}

Let $\Z_\lambda=\big\{\theta\in C:\check\Gamma\big(-\nabla L(\theta,\lambda)\big)=0\big\}$ denote the set of asymptotically stable equilibrium points of the ODE~\eqref{eq:average-theta-ode} and $\Z_\lambda^\varepsilon=\big\{\theta\in C:||\theta-\theta_0||<\varepsilon,\theta_0\in\Z_\lambda\big\}$ denote the set of points in the $\varepsilon$-neighborhood of $\Z_\lambda$.
The main result regarding the convergence of the policy parameter in \eqref{eq:average-actor-update-theta-L} is as follows:
\begin{theorem}
\label{actor-convergence}
Assume (A1)-(A4). Then, for a given $\varepsilon >0,\;\exists\beta>0$ such that if $\sup_{\theta} \|\B(\theta,\lambda)\|<\beta$, then $\theta_n$ governed by \eqref{eq:average-actor-update-theta-L} converges almost surely to $\Z^\varepsilon_\lambda$ as $n\rightarrow\infty$.
\end{theorem}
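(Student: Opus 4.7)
The plan is to apply the ODE method with timescale separation, mirroring the three-step template already used for Theorem \ref{thm:spsa-theta-convergence}, but now carrying the function-approximation bias $\B(\theta,\lambda)$ of Lemma \ref{bias-average} as a controlled perturbation of the limiting ODE. First, using (A3) with $\zeta_1(n)=o(\zeta_2(n))$, the $\lambda$-update is quasi-static on the $\{\zeta_2(n)\}$-timescale (by the same argument as in Section \ref{subsec:first-proofs}, which rewrites the $\lambda$-recursion as tracking $\dot\lambda_t=0$ when viewed on the faster clock), so $\lambda$ is held fixed for the actor analysis. The critic runs on the still faster $\{\zeta_3(n)\}$-timescale, so Lemma \ref{critic-convergence} lets us replace $\widehat\rho_{n+1},\widehat\eta_{n+1},v_n,u_n$ by their equilibria $\rho(\theta_n),\eta(\theta_n),v^{\theta_n},u^{\theta_n}$ up to an $o(1)$ error absorbed into the noise.

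Second, I would decompose the actor drift
\[
H_n := -\delta_n\psi_n+\lambda\bigl(\epsilon_n\psi_n-2\widehat\rho_{n+1}\delta_n\psi_n\bigr)
\]
into a conditional mean plus a martingale difference. Combining Lemma \ref{TD-error-Advantage} applied with the equilibrated critic with the identity $\E[\delta_n^\theta\psi_n\mid\theta]=\nabla\rho(\theta)+\sum_x d^\theta(x)[\nabla\bar V^\theta(x)-\nabla v^{\theta\top}\phi_v(x)]$ (and its $\eta$-counterpart) that is established inside the proof of Lemma \ref{bias-average}, and using $\nabla L(\theta,\lambda)=-\nabla\rho(\theta)+\lambda(\nabla\eta(\theta)-2\rho(\theta)\nabla\rho(\theta))$, the conditional mean collapses to
\[
\E\bigl[H_n\mid \theta_n,\lambda\bigr]\;=\;\nabla_\theta L(\theta_n,\lambda)+\B(\theta_n,\lambda)+o(1).
\]
Finite state-action spaces with bounded rewards make $H_n-\E[H_n\mid\theta_n,\lambda]$ uniformly bounded, so the standard square-integrability hypothesis on the martingale noise is trivially satisfied.

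Third, these pieces let the piecewise-linear interpolate of $\{\theta_n\}$ be realized as a $(T,\eta)$-perturbation (in the sense of Lemma \ref{hirsch-lemma}) of the bias-perturbed projected ODE
\[
\dot\theta_t\;=\;\check\Gamma\bigl(-\nabla_\theta L(\theta_t,\lambda)-\B(\theta_t,\lambda)\bigr),
\]
with $\check\Gamma$ defined in \eqref{eq:Pi-bar-operator}. Denote its set of asymptotically stable equilibria by $\Z_\lambda^{\B}$ and its $\varepsilon/2$-neighbourhood by $(\Z_\lambda^{\B})^{\varepsilon/2}$. Compactness of $\Theta$ and continuity of the drift in $(\theta,\B)$ imply $\Z_\lambda^{\B}\to\Z_\lambda$ in the Hausdorff sense as $\sup_\theta\|\B(\theta,\lambda)\|\to 0$; fixing $\varepsilon$ and choosing $\beta$ small enough that $\Z_\lambda^{\B}\subset\Z_\lambda^{\varepsilon/2}$, Hirsch's lemma yields almost-sure convergence of $\theta_n$ to $(\Z_\lambda^{\B})^{\varepsilon/2}\subset\Z_\lambda^{\varepsilon}$, which is the claim.

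The principal obstacle is the last continuity claim: trajectories of ODEs depend continuously on their driving vector fields only on compact time intervals, whereas asymptotic attractors need not. The saving fact is that $L(\cdot,\lambda)$ is a strict Lyapunov function for the unperturbed projected ODE on the compact set $\Theta$ (the Lyapunov computation parallels the one used in the proof of Theorem \ref{thm:spsa-theta-convergence}), so a small uniform perturbation $\B$ shifts the attractor $\Z_\lambda$ by an amount controlled by $\sup_\theta\|\B(\theta,\lambda)\|$, and $\beta=\beta(\varepsilon)$ can be chosen to keep this shift below $\varepsilon/2$. The remaining pieces, in particular the Kushner-Clark-style handling of the projection $\Gamma$ and the vanishing critic/$\lambda$-error, are routine and exactly parallel Step 2 of the proof of Theorem \ref{thm:spsa-theta-convergence}.
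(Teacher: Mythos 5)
Your proposal is correct and follows essentially the same route as the paper's proof: treat $\lambda$ as quasi-static and the critic as equilibrated, decompose the actor increment into the conditional mean $\nabla_\theta L(\theta_n,\lambda)+\B(\theta_n,\lambda)$ plus asymptotically vanishing critic-error terms and a martingale difference, and conclude that $\theta_n$ tracks the bias-perturbed projected ODE, with smallness of $\sup_\theta\|\B(\theta,\lambda)\|$ delivering convergence to $\Z_\lambda^\varepsilon$. The only (minor) divergence is at the end: the paper asserts that when the bias vanishes the perturbed trajectories converge to the unperturbed ones uniformly on compacts and lets the claim follow, whereas you explicitly flag the gap between finite-horizon trajectory closeness and closeness of asymptotic attractors and close it with the strict-Lyapunov argument --- a more careful rendering of the same step rather than a different proof.
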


\begin{proof}
 Let $\F(n)=\sigma(\theta_m,m\leq n)$ denote a sequence of $\sigma$-fields. We have
\begin{align*}
\theta_{n+1} &= \Gamma\Big(\theta_n-\zeta_2(n)\big(-\delta_n\psi_n+\lambda(\epsilon_n\psi_n-2\widehat{\rho}_{n+1}\delta_n\psi_n)\big)\Big) \\
&= \Gamma\big(\theta_n+\zeta_2(n)(1+2\lambda\widehat{\rho}_{n+1})\delta_n\psi_n-\zeta_2(n)\lambda\epsilon_n\psi_n\big) \\
&= \Gamma\bigg(\theta_n-\zeta_2(n)\Big[1+2\lambda\Big(\big(\widehat{\rho}_{n+1}-\rho(\theta_n)\big)+\rho(\theta_n)\Big)\Big]\E\big[\delta^{\theta_n}\psi_n|\F(n)\big] \\
&\hspace{0.5in}-\zeta_2(n)\Big[1+2\lambda\Big(\big(\widehat{\rho}_{n+1}-\rho(\theta_n)\big)+\rho(\theta_n)\Big)\Big]\Big(\delta_n\psi_n-\E\big[\delta_n\psi_n|\F(n)\big]\Big) \\
&\hspace{0.5in}-\zeta_2(n)\Big[1+2\lambda\Big(\big(\widehat{\rho}_{n+1}-\rho(\theta_n)\big)+\rho(\theta_n)\Big)\Big]\E\big[(\delta_n-\delta^{\theta_n})\psi_n|\F(n)\big] \\ 
&\hspace{0.5in}+\zeta_2(n)\lambda\E\big[\epsilon^{\theta_n}\psi_n|\F(n)\big] + \zeta_2(n)\lambda\Big(\epsilon_n\psi_n-\E\big[\epsilon_n\psi_n|\F(n)\big]\Big) \\
&\hspace{0.5in}+ \zeta_2(n)\lambda\E\big[(\epsilon_n-\epsilon^{\theta_n})\psi_n|\F(n)\big] \bigg). 
\end{align*}
By setting $\xi_n=\widehat{\rho}_{n+1}-\rho(\theta_n)$, we may write the above equation as
\begin{align}
\label{eq:theta-average-unrolled}
\theta_{n+1} &= \Gamma\bigg(\theta_n-\zeta_2(n)\big[1+2\lambda\big(\xi_n+\rho(\theta_n)\big)\big]\E\big[\delta^{\theta_n}\psi_n|\F(n)\big] \\
&\hspace{0.5in}-\zeta_2(n)\big[1+2\lambda\big(\xi_n+\rho(\theta_n)\big)\big]\underbrace{\Big(\delta_n\psi_n-\E\big[\delta_n\psi_n|\F(n)\big]\Big)}_{*} \nonumber\\
&\hspace{0.5in}-\zeta_2(n)\big[1+2\lambda\big(\xi_n+\rho(\theta_n)\big)\big]\underbrace{\E\big[(\delta_n-\delta^{\theta_n})\psi_n|\F(n)\big]}_{+} \nonumber\\ 
&\hspace{0.5in}+\zeta_2(n)\lambda\E\big[\epsilon^{\theta_n}\psi_n|\F(n)\big] + \zeta_2(n)\lambda\underbrace{\Big(\epsilon_n\psi_n-\E\big[\epsilon_n\psi_n|\F(n)\big]\Big)}_{*} \\
&\hspace{0.5in}+ \zeta_2(n)\lambda\underbrace{\E\big[(\epsilon_n-\epsilon^{\theta_n})\psi_n|\F(n)\big]}_{+} \bigg). \nonumber 
\end{align}
Since Algorithm~\ref{algo:average-AC} uses an unbiased estimator for $\rho$, we have $\widehat{\rho}_{n+1}\rightarrow\rho(\theta_n)$, and thus, $\xi_n\rightarrow 0$. The terms $(+)$ asymptotically vanish in lieu of Lemma~\ref{critic-convergence} (Critic convergence). Finally the terms $(*)$ can be seen to vanish using standard martingale arguments (cf.~Theorem~2 in~\cite{bhatnagar2009natural}). Thus,~\eqref{eq:theta-average-unrolled} can be seen to be equivalent in an asymptotic sense to 
\begin{equation}
\label{eq:temp1}
\theta_{n+1} = \Gamma\Big(\theta_n-\zeta_2(n)\big[1+2\lambda\rho(\theta_n)\big]\E\big[\delta^{\theta_n}\psi_n|\F(n)\big]+\zeta_2(n)\lambda\E\big[\epsilon^{\theta_n}\psi_n|\F(n)\big]\Big). 
\end{equation}
From the foregoing,~it can be seen that the actor recursion in \eqref{eq:average-actor-update-theta-L} asymptotically tracks the stable fixed points of the ODE
\begin{equation}
\label{eq:theta-average-ode1}
\dot \theta_{t} = \check\Gamma\Big( \nabla L(\theta_t,\lambda) + \B(\theta_t,\lambda)\Big). 
\end{equation}

Note that the bias of Algorithm~\ref{algo:average-AC} in estimating $\nabla L(\theta,\lambda)$ is (see Lemma~\ref{bias-average})
\begin{align*}
\B(\theta,\lambda)=&\sum_xd^\theta(x)\Big\{-\big(1+2\lambda\rho(\theta)\big)\big[\nabla\bar{V}^\theta(x)-\nabla v^{\theta\top}\phi_v(x)\big] + \lambda\big[\nabla\bar{U}^\theta(x) - \nabla u^{\theta\top}\phi_u(x)\big]\Big\}.
\end{align*}

So, if the bias $\sup_{\theta} \|\B(\theta,\lambda)\| \rightarrow 0$, the trajectories \eqref{eq:theta-average-ode1} converge to those of \eqref{eq:theta-ode} uniformly on compacts for the same initial condition and the claim follows. 
$\hfill{\blacksquare}$
\end{proof}


\noindent
{\bf Step~3: $\lambda$ Convergence and Overall Convergence of the Algorithm} \\

\noindent
As in the discounted setting, we first show that the $\lambda$-recursion converges and then prove convergence to a local saddle point of $L(\theta,\lambda)$. 
Consider the ODE
\begin{align}
\label{eq:lambda-ode-average}
\dot \lambda_t = \check\Gamma_\lambda\big(\Lambda(\theta^{\lambda_t}) - \alpha\big),
\end{align}
where $\check \Gamma_\lambda$ is a projection operator that forces the evolution of $\lambda$ via \eqref{eq:lambda-ode} is within $[0,\lambda_{\max}]$ and is defined in \eqref{eq:Pi-bar-operator-lambda}.


\begin{theorem}
\label{theorem:average-lambda}
$\lambda_n \rightarrow \F$ almost surely as $t \rightarrow \infty$, where $\F \stackrel{\triangle}{=}\big\{\lambda\mid \lambda \in [0,\lambda_{\max}], \check\Gamma_\lambda\big(\Lambda(\theta^\lambda) - \alpha\big)=0,\;\theta^\lambda \in \Z_\lambda\big\}$.
\end{theorem}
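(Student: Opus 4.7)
The plan is to mirror the argument used in Theorem~\ref{theorem:lambda} for the discounted setting, with the principal difference being that here the estimators for $\rho$ and $\eta$ are running averages rather than TD critic outputs for value/square-value functions. Because $\{\zeta_1(n)\}$ is the slowest timescale in Algorithm~\ref{algo:average-AC} (cf.\ (A3) and the condition $\zeta_1(n) = o(\zeta_2(n))$), the $\theta$-iterate viewed from the $\lambda$-timescale appears equilibrated, and the average and critic iterates viewed from the $\lambda$-timescale appear even more strongly equilibrated. Thus the first step is to invoke timescale separation to substitute $\theta_n \to \theta^{\lambda_n}$, $\widehat{\rho}_n \to \rho(\theta^{\lambda_n})$, $\widehat{\eta}_n \to \eta(\theta^{\lambda_n})$.

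Next I would rewrite the $\lambda$-update \eqref{eq:average-actor-update-lambda} as
\begin{align*}
\lambda_{n+1} = \Gamma_\lambda\Big(\lambda_n + \zeta_1(n)\big[\Lambda(\theta^{\lambda_n}) - \alpha + \chi_n\big]\Big),
\end{align*}
where the error $\chi_n := (\widehat{\eta}_{n+1} - \widehat{\rho}_{n+1}^2) - (\eta(\theta^{\lambda_n}) - \rho(\theta^{\lambda_n})^2)$ decomposes into a part coming from $\widehat{\rho}, \widehat{\eta}$ not yet matching the true averages and a part coming from $\theta_n$ not yet being in $\Z_{\lambda_n}$. The former is $o(1)$ by Lemma~\ref{critic-convergence} and the latter is $o(1)$ by Theorem~\ref{actor-convergence} (modulo the $\varepsilon$ of approximation bias in $\Z_\lambda^\varepsilon$), so $\chi_n \to 0$ almost surely and acts as an asymptotically vanishing perturbation in the sense of Kushner--Clark. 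The $\lambda$-recursion is then a standard projected stochastic approximation driving $\lambda$ toward the solutions of the ODE \eqref{eq:lambda-ode-average}.

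I would then identify the asymptotically stable equilibria of \eqref{eq:lambda-ode-average} with $\F$. Because $L$ is not jointly convex in $(\theta,\lambda)$ and the inner $\theta$-problem has multiple local minima, one cannot directly use $\nabla_\lambda L(\theta^{\lambda},\lambda)$ as the drift; instead, as in the discounted case, I would invoke the (generalized) envelope theorem \cite{milgrom2002envelope} as in~\cite{borkar2005actor} to identify
\begin{align*}
\check\Gamma_\lambda(\Lambda(\theta^\lambda) - \alpha) = \check\Gamma_\lambda\big(\nabla_\lambda L(\theta^\lambda,\lambda)\big)
\end{align*}
at differentiable points, interpreting the ODE in the Carath\'eodory sense so that time spent at non-differentiable points (other than maxima) is negligible. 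Applying Theorem~2 of~\cite{borkar2008stochastic} then yields $\lambda_n \to \F$ a.s. A parallel feasibility argument (analogue of Proposition~\ref{prop:feasible}), based on the fact that $\lambda_{\max} < \infty$ but $\hat\F$ requires the limit point $\lambda^* < \lambda_{\max}$, would establish that the corresponding $\theta^{\lambda^*}$ satisfies $\Lambda(\theta^{\lambda^*}) \leq \alpha$.

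The main obstacle is \emph{not} the martingale-noise or Lipschitz bookkeeping, which is routine once the decomposition above is in place, but rather handling the fact that $\theta^{\lambda}$ as a function of $\lambda$ is only single-valued on a set of full Lebesgue measure and may jump at bifurcation values of $\lambda$. Standard multi-timescale analysis (e.g.,~\cite[Chapter 6]{borkar2008stochastic}) requires a unique equilibrium on the faster timescale; here one must instead work with the differential inclusion whose right-hand side is the $\check\Gamma_\lambda$-projection of $\{\Lambda(\theta) - \alpha : \theta \in \Z_\lambda\}$, and argue that the slower iterate still tracks this inclusion. The envelope-theorem reformulation combined with the Carath\'eodory interpretation, following~\cite{borkar2005actor}, is what resolves this obstacle and makes the argument rigorous.
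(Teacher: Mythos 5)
Your proposal is correct and follows essentially the same route as the paper: the paper's own proof is a one-line citation to Theorem 3 of \cite{bhatnagar2012online}, and the surrounding text carries out exactly the steps you describe --- the vanishing-perturbation decomposition, the envelope-theorem reformulation of the drift as $\check\Gamma_\lambda\big(\nabla_\lambda L(\theta^{\lambda},\lambda)\big)$, and the Carath\'eodory interpretation --- mirroring the discounted-case Theorem~\ref{theorem:lambda}. Your closing remark on the possibly set-valued nature of $\lambda \mapsto \theta^{\lambda}$ and the resulting differential-inclusion formulation is, if anything, more careful than the paper, which treats $\theta^{\lambda}$ as well-defined without comment.
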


\begin{proof}
The proof follows in a similar manner as that of Theorem~3 in~\cite{bhatnagar2012online}.
$\hfill{\blacksquare}$
\end{proof} 

As in the discounted setting, the following proposition claims that the limit $\theta^{\lambda^*}$ corresponding to $\lambda^*$ satisfies the variance constraint in \eqref{eq:average-risk-measure}, i.e.,
\begin{proposition}
\label{prop:feasible-average}
 For any $\lambda^*$ in $\hat\F \stackrel{\triangle}{=}\big\{\lambda\mid \lambda \in [0,\lambda_{\max}),\;\check\Gamma_\lambda\big[ \Lambda^{\theta^\lambda}(x^0)-\alpha\big]=0,\;\theta^\lambda \in \Z_\lambda \big\}$, the corresponding limiting point $\theta^{\lambda^*}$ satisfies the variance constraint $\Lambda^{\theta^{\lambda^*}}(x^0) \le \alpha$.
\end{proposition}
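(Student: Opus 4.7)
The plan is to argue by contradiction, exactly paralleling the proof of Proposition~\ref{prop:feasible} in the discounted setting (which in turn follows Proposition~10.6 of~\cite{Bhatnagar13SR}). The key exploitable fact is that a point $\lambda^* \in \hat{\F}$ lies in the \emph{open} interval $[0,\lambda_{\max})$, so the upper boundary of the projection interval does not play any role near $\lambda^*$, and the only way to have $\check{\Gamma}_\lambda\bigl(\Lambda(\theta^{\lambda^*})-\alpha\bigr)=0$ while $\lambda^*>0$ is for the argument itself to be nonpositive.

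More concretely, suppose for contradiction that $\Lambda(\theta^{\lambda^*})-\alpha > 0$. Since $\lambda^* < \lambda_{\max}$, there exists $\tau_0>0$ such that for all $\tau\in(0,\tau_0)$, $\lambda^*+\tau(\Lambda(\theta^{\lambda^*})-\alpha)\in[0,\lambda_{\max}]$, so $\Gamma_\lambda$ acts as the identity on these perturbations. Substituting into the definition~\eqref{eq:Pi-bar-operator-lambda} of $\check{\Gamma}_\lambda$ gives
\[
\check{\Gamma}_\lambda\bigl(\Lambda(\theta^{\lambda^*})-\alpha\bigr) \;=\; \lim_{\tau\downarrow 0}\frac{\bigl(\lambda^*+\tau(\Lambda(\theta^{\lambda^*})-\alpha)\bigr)-\lambda^*}{\tau} \;=\; \Lambda(\theta^{\lambda^*})-\alpha \;>\;0,
\]
contradicting $\lambda^*\in\hat{\F}$. (If $\lambda^*=0$, the same one-sided argument applies since the perturbed value remains inside $[0,\lambda_{\max}]$ for small $\tau>0$.) Hence $\Lambda(\theta^{\lambda^*})\le \alpha$.

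The one subtle point I expect to have to address carefully is the fact that $\Lambda(\theta^{\lambda^*})$ above actually refers to the \emph{approximate} long-run variance induced by the linear critic approximation at the limit $\theta^{\lambda^*}$, and that the sample-based $\lambda$-recursion uses $\widehat{\eta}_{n+1}-\widehat{\rho}_{n+1}^2$ rather than $\Lambda(\theta_n)$ directly. This is handled by recalling from Lemma~\ref{critic-convergence} that $\widehat{\rho}_{n+1}\to\rho(\theta)$ and $\widehat{\eta}_{n+1}\to\eta(\theta)$ almost surely for fixed $\theta$, so the noise term in the $\lambda$-recursion is asymptotically vanishing and the limiting ODE~\eqref{eq:lambda-ode-average} is indeed driven by $\Lambda(\theta^{\lambda})-\alpha$. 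The statement of the proposition should be read in this sense, consistently with Proposition~\ref{prop:feasible}; no additional work beyond the contradiction argument above is required.
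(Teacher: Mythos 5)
Your proof is correct and is essentially the paper's own argument: the paper gives no explicit proof here, deferring (via the analogous Proposition~\ref{prop:feasible}) to Proposition~10.6 of \cite{Bhatnagar13SR}, and your contradiction argument---that for $\lambda^*<\lambda_{\max}$ the projection $\Gamma_\lambda$ acts as the identity on small positive perturbations, so $\check{\Gamma}_\lambda\bigl(\Lambda(\theta^{\lambda^*})-\alpha\bigr)=0$ forces $\Lambda(\theta^{\lambda^*})-\alpha\le 0$---is precisely that standard argument, correctly spelled out. Your closing remark on the sample-based $\lambda$-recursion and Lemma~\ref{critic-convergence} is also consistent with how the paper treats the vanishing noise term.
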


Using arguments similar to that used to prove convergence of RS-SPSA-G, it can be shown that that the ODE \eqref{eq:lambda-ode-average} is equivalent to $\dot \lambda_t = \check\Gamma_\lambda\big[\nabla_\lambda  L(\theta^{\lambda_t},\lambda_t)\big]$ and thus, the actor parameters $(\theta_n,\lambda_n)$ updated according to \eqref{eq:average-actor-update-theta-L} converge to a (local) saddle point $(\theta^{\lambda^*},\lambda^*)$ of $L(\theta,\lambda)$. Morever, the limiting point $\theta^{\lambda^*}$ satisfies the variance constraint in \eqref{eq:average-risk-measure}.




\section{Experimental Results}
\label{sec:simulation}

We evaluate our algorithms in the context of a traffic signal control application. 
 The objective in our formulation is to minimize the total number of vehicles in the system, which indirectly minimizes the delay experienced by the system. The motivation behind using a risk-sensitive control strategy is to reduce the variations in the delay experienced by road users. 


\subsection{Implementation}

\tikzset{roads/.style={line width=0.2cm}}

\begin{figure}
\centering
        \includegraphics[width=3in,height=2in]{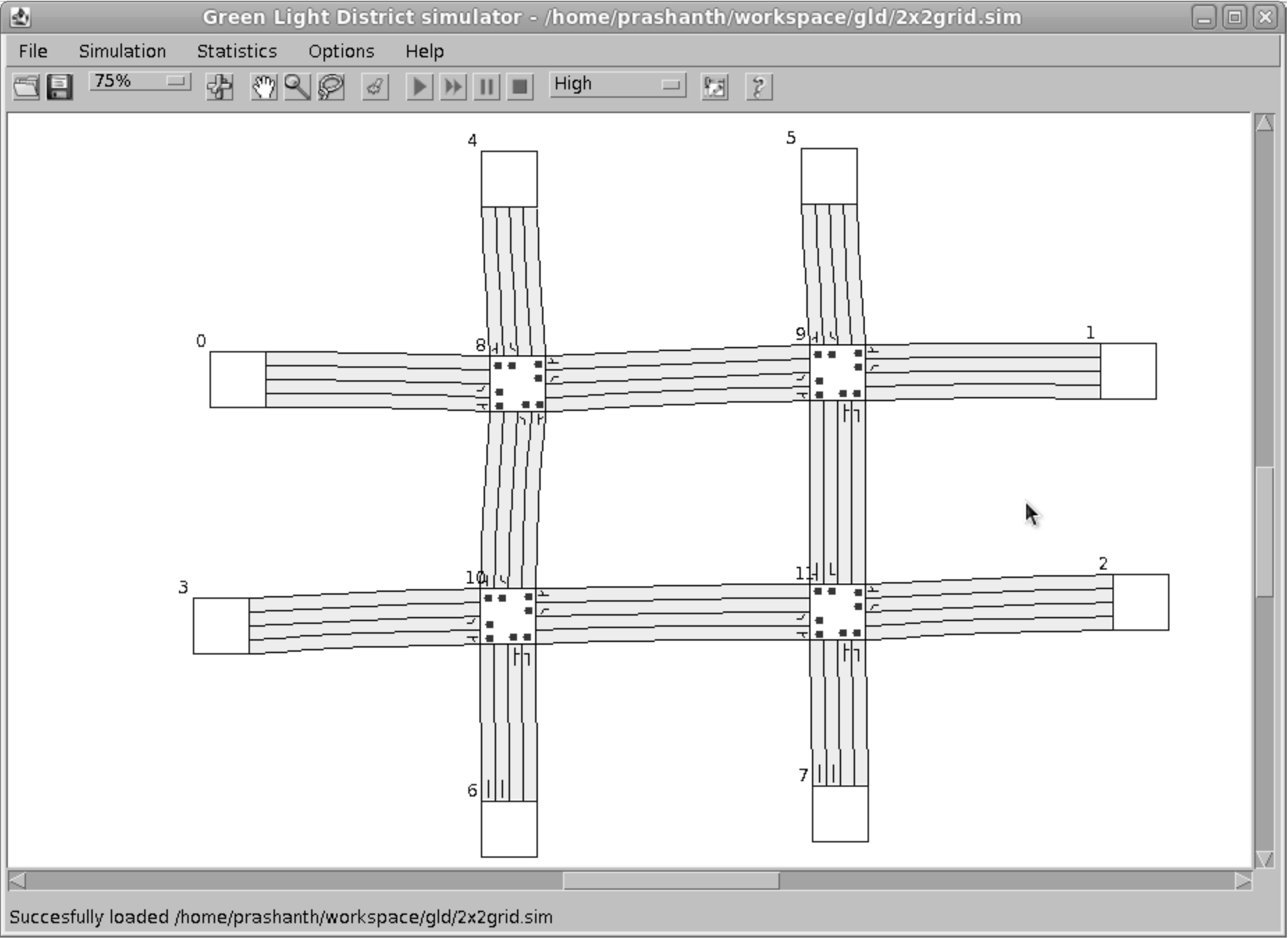}
  \caption{The 2x2-grid network used in our traffic signal control experiments.}
\label{fig:roadnets}
  \end{figure}

We consider both infinite horizon discounted and average settings for the traffic signal control MDP, formulated as in~\cite{prashanth2011reinforcement}. We briefly recall their formulation here: The state at each time $t$, $x_n$, is the vector of queue lengths and elapsed times and is given by $x_n = (q_1(n), \ldots, q_N(n), t_1(n), \ldots, t_N(n))$, where $N$ is the number of signalled lanes in the road network considered. Here $q_i$ and $t_i$ denote the queue length and elapsed time since the signal turned to red on lane $i$. The actions $a_n$ belong to the set of feasible sign configurations. The single-stage cost function $h(x_n)$ is defined as follows:

\begin{align}
\label{eq:cost-traffic}
h(x_n)  =&  r_1 * \big[ \sum_{i \in I_p} r_2 * q_i(n) + \sum_{i \notin I_p} s_2 * q_i(n)\big] + s_1 * \big[\sum_{i \in I_p} r_2 * t_i(n) + \sum_{i \notin I_p} s_2 * t_i(n) \big],
\end{align}
where $r_i,s_i \ge 0$ such that $r_i + s_i =1$ for $i=1,2$ and $r_2 > s_2$. The set $I_p$ is the set of prioritized lanes in the  road network considered. While the weights $r_1, s_1$ are used to differentiate between the queue length and elapsed time factors, the weights $r_2,s_2$ help in prioritization of traffic. 

Given the above traffic control setting, we aim to minimize both the long run discounted and average sum of the cost function $h(x_n)$. We implement the following algorithms using the Green Light District (GLD) simulator~\cite{GLDSim}\footnote{We would like to point out that the experimental setting involves 'costs' and not 'rewards' and the algorithms implemented should be understood as optimizing a negative reward.
}: 

\paragraph{\bf Discounted Setting}
\begin{enumerate}
\item {\bf\em SPSA-G}: This is a first-order risk-neutral algorithm with SPSA-based gradient estimates that updates the parameter $\theta$ as follows:
\begin{align*}
\theta_{n+1}^{(i)} &= \Gamma_i\left(\theta_n^{(i)} + \frac{\zeta_2(n)}{\beta\Delta_n^{(i)}}(v^+_n - v_n)\tr \phi_v(x^0)\right),
\end{align*} 
where the critic parameters $v_n, v^+_n$ are updated according to~\eqref{eq:critic-discounted}. Note that this is a two-timescale algorithm with a TD critic on the faster timescale and the actor on the slower timescale. Unlike RS-SPSA-G, this algorithm, being risk-neutral, does not involve the Lagrange multiplier recursion.

\item {\bf\em SF-G}: This is a first-order risk-neutral algorithm that is similar to SPSA-G, except that the gradient estimation scheme used here is based on the smoothed functional (SF) technique. The update of the policy parameter in this algorithm is given by
\begin{align*}
\theta_{n+1}^{(i)} &= \Gamma_i\left(\theta_n^{(i)} + \zeta_2(n)\Big(\frac{\Delta_n^{(i)}}{\beta}(v^+_n - v_n)\tr \phi_v(x^0)\Big)\right).
\end{align*}
\item {\bf\em SPSA-N}: This is a risk-neutral algorithm and is the second-order counterpart of SPSA-G. The Hessian update in this algorithm is as follows: For $i,j=1,\ldots, \kappa_1$, $i< j$, the update is
\begin{align}
\label{eq:hessian-update-no-risk-spsa-n}
H^{(i, j)}_{n+1}= H^{(i, j)}_n + \zeta'_2(n)\bigg[&\dfrac{(v_n-v^+_n)\tr \phi_v(x^0)}{\beta^2 \Delta^{(i)}_n\widehat\Delta^{(j)}_n}  - H^{(i, j)}_n \bigg],
\end{align}
and for $i > j$, we set $H^{(i, j)}_{n+1} = H^{(j, i)}_{n+1}$. As in RS-SPSA-N, let $M_n \stackrel{\triangle}{=} H_n^{-1}$, where $H_n = \Upsilon\big([H^{(i,j)}_n]_{i,j = 1}^{|\kappa_1|}\big)$. The actor updates the parameter $\theta$ as follows:
\begin{align}
\label{eq:actor-no-risk-spsa-n-update}
\theta_{n+1}^{(i)}= \Gamma_i\bigg[\theta_n^{(i)} + \zeta_2(n)\sum\limits_{j = 1}^{\kappa_1} M^{(i, j)}_n\Big(&\dfrac{(v^+_n - v_n)\tr \phi_v(x^0)}{\beta \Delta_n^{(j)}} \Big)\bigg].
\end{align}
The rest of the symbols, including the critic parameters, are as in RS-SPSA-N.

\item {\bf\em SF-N}: This is a risk-neutral algorithm and is the second-order counterpart of SF-G. It updates the Hessian and the actor as follows: For $i,j,k=1,\ldots, \kappa_1$, $j< k$, the Hessian update is 
\begin{align*}
\text{\bf Hessian:} \quad\quad H^{(i, i)}_{n + 1} &= H^{(i, i)}_n + \zeta'_2(n)\bigg[\dfrac{\big(\Delta^{(i)^2}_n-1\big)}{\beta^2}(v_n-v^+_n)\tr \phi_v(x^0)  - H^{(i, i)}_n \bigg],\\
H^{(j, k)}_{n + 1} &= H^{(j, k)}_n + \zeta'_2(n)\bigg[\dfrac{\Delta^{(j)}_n\Delta^{(k)}_n}{\beta^2}(v_n-v^+_n)\tr \phi_v(x^0)  - H^{(j, k)}_n \bigg],
\end{align*}
and for $j > k$, we set $H^{(j, k)}_{n+1} = H^{(k, j)}_{n+1}$. As before, let $M_n \stackrel{\triangle}{=} H_n^{-1}$, with $H_n$ formed as in SPSA-N. Then, the actor update for the parameter $\theta$ is as follows:
\begin{equation*}
\text{\bf Actor:} \quad\quad \theta_{n+1}^{(i)}= \Gamma_i\bigg[\theta_n^{(i)} + \zeta_2(n)\sum\limits_{j = 1}^{\kappa_1} M^{(i, j)}_n\frac{ \Delta_n^{(j)}}{\beta}(v^+_n - v_n)\tr \phi_v(x^0) \bigg].
\end{equation*}
The rest of the symbols, including the critic parameters, are as in RS-SPSA-N.

\item {\bf\em RS-SPSA-G}: This is the first-order risk-sensitive actor-critic algorithm that attempts to solve~\eqref{eq:average-risk-measure} and updates according to~\eqref{eq:actor-spsa-update}. 
\item {\bf\em RS-SF-G}: This is a first-order algorithm and the risk-sensitive variant of SF-G that updates the actor according to~\eqref{eq:actor-sf-update}. 
\item {\bf\em RS-SPSA-N}: This is a second-order risk-sensitive algorithm that estimates gradient and Hessian using SPSA and updates them according to~\eqref{eq:actor-spsa-n-update}. 
\item {\bf\em RS-SF-N}: This second-order risk-sensitive algorithm is the SF counterpart of RS-SPSA-N, and updates according to~\eqref{eq:actor-sf-n-update}. 
\end{enumerate}

\paragraph{\bf Average Setting}
\begin{enumerate}
\item {\bf\em AC}: This is an actor-critic algorithm that minimizes the long-run average sum of the single-stage cost function $h(x_n)$, without considering any risk criteria. This is similar to Algorithm~1 in~\citet{bhatnagar2009natural}.
\item {\bf\em RS-AC}: This is the risk-sensitive actor-critic algorithm that attempts to solve~\eqref{eq:average-risk-measure} and is described in Section~\ref{sec:average-alg}. 
\end{enumerate}

The underlying policy that guides the selection of the sign configuration in each of the algorithms above is a parameterized Boltzmann family and has the form
\begin{equation}
\mu_{\theta}(x,a) = \frac{e^{\theta^{\top} \phi_{x,a}}}{\sum_{a' \in {\A(x)}} e^{\theta^{\top} \phi_{x,a'}}},
\hspace{6pt} \forall x \in \X,\;\forall a \in \A.
\label{eq:pi-boltzmann}
\end{equation}
All our algorithms incorporate function approximation owing to the curse of dimensionality associated with larger road networks. For instance, assuming only $20$ vehicles per lane of a 2x2-grid network, the cardinality of the state space is approximately of the order $10^{32}$ and the situation is aggravated as the size of the road network increases. The choice of features used in each of our algorithms is as described in Section V-B of~\cite{prashanth2012threshold}. 

The experiments for each algorithm comprised of the following two phases:
\begin{description}
\item[{\bf Policy Search Phase:}] Here each iteration involved the simulation run with the nominal policy parameter $\theta$ as well as the perturbed policy parameter $\theta^+$ (algorithm-specific). We run each algorithm for $500$ iterations, where the run length for a particular policy parameter is $150$ steps.    
\item[{\bf Policy Test Phase:}] After the completion of the policy search phase, we freeze the policy parameter and run $50$ independent simulations with this (converged) choice of the parameter. The results presented subsequently are averages  over these $50$ runs.
\end{description}
  
Figure~\ref{fig:roadnets} shows a snapshot of the road network used for conducting the experiments from GLD simulator. Traffic is added to the network at each time step from the edge nodes. The spawn frequencies specify the rate at which traffic is generated at each edge node and follow a Poisson distribution. The spawn frequencies are set such that the proportion of the number of vehicles on the main roads (the horizontal ones in Fig. \ref{fig:roadnets}) to those on the side roads is in the ratio of $100:5$. This setting is close to what is observed in practice and has also been used for instance in~\cite{prashanth2011reinforcement,prashanth2012threshold}. In all our experiments, we set the weights in the single stage cost function~\eqref{eq:cost-traffic} as follows: $r_1 = r_2 = 0.5$ and $r_2=0.6, s_2=0.4$. For the SPSA and SF-based algorithms in the discounted setting, we set the parameter $\delta = 0.2$ and the discount factor $\gamma=0.9$. The parameter $\alpha$ in the formulations~\eqref{eq:average-risk-measure} and~\eqref{eq:discounted-risk-measure} was set to $20$. The step-size sequences are chosen as follows:
\begin{align}
\zeta_1(n)= \frac{1}{n}, \quad \zeta_2(n)= \frac{1}{n^{0.75}}, \quad \zeta'_2(n)= \frac{1}{n^{0.7}}, \quad \zeta_3(n)= \frac{1}{n^{0.66}}, \quad\quad n \ge 1.
\end{align}
Further, the constant $k$ related to $\zeta_4(n)$ in the risk-sensitive average reward algorithm is set to $1$. It is easy to see that the choice of step-sizes above satisfies (A4).
The projection operator $\Gamma_i$ was set to project the iterate $\theta^{(i)}$ onto the set $[0,10]$, for all $i=1,\ldots,\kappa_1$, while the projection operator for the Lagrange multiplier used the set $[0,1000]$. All the experiments were performed on a 2.53GHz Intel quad core machine with 3.8GB RAM.


\subsection{Results}
  
\pgfmathdeclarefunction{gauss}{2}{%
\pgfmathparse{1/(#2*sqrt(2*pi))*exp(-((x-#1)^2)/(2*#2^2))}%
}

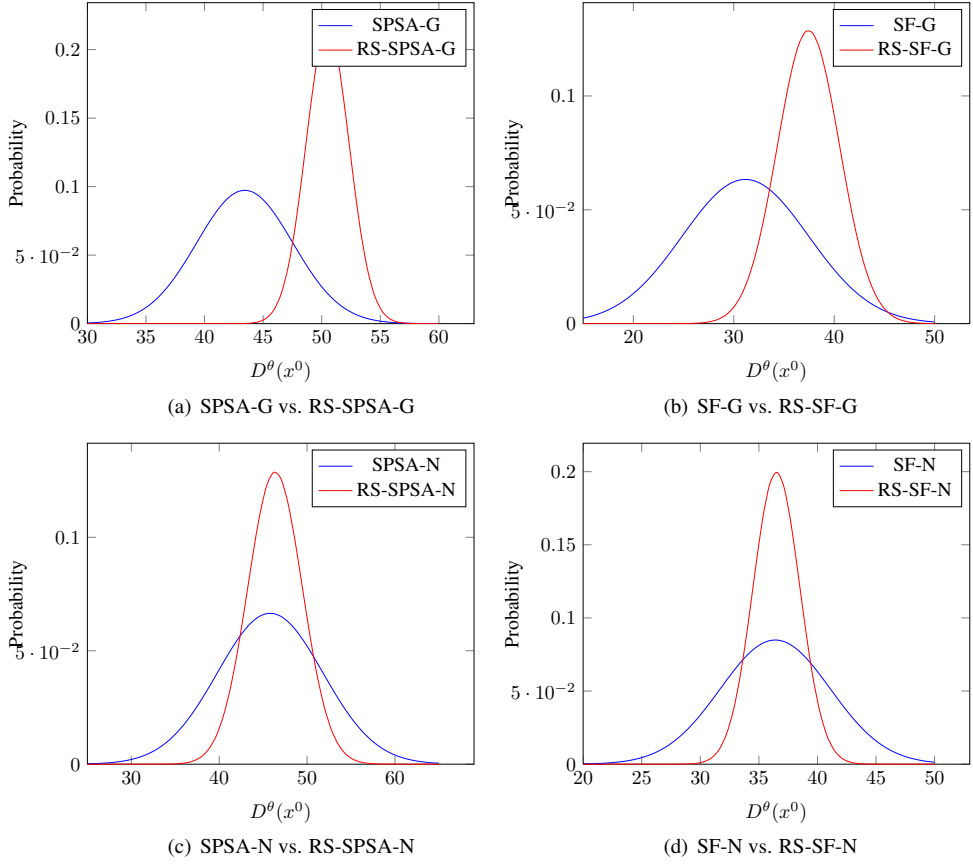
\begin{figure}
\centering
\begin{tabular}{cc}

\subfigure[SPSA-G vs. RS-SPSA-G]
{
      \label{fig:discounted_perf_spsa}
\hspace{-4em}\tabl{c}{\scalebox{0.75}{\begin{tikzpicture}
\begin{axis}[every axis plot post/.append style={
  mark=none,domain=30:60,samples=50,smooth}, 
  enlargelimits=upper, 
xlabel={$D^\theta(x^0)$},ylabel={Probability}
  ] 
  \addplot {gauss(43.43,4.1)};
  \addlegendentry{SPSA-G}
  \addplot {gauss(50.5,1.85)};
  \addlegendentry{RS-SPSA-G}
\end{axis}
\end{tikzpicture}}}
}
&
\subfigure[SF-G vs. RS-SF-G]
{
      \label{fig:discounted_perf_sf}
\hspace{-2em}\tabl{c}{\scalebox{0.75}{\begin{tikzpicture}
\begin{axis}[every axis plot post/.append style={
  mark=none,domain=15:50,samples=50,smooth}, 
  enlargelimits=upper, 
xlabel={$D^\theta(x^0)$},ylabel={Probability}
  ] 
  \addplot {gauss(31.16,6.3)};
  \addlegendentry{SF-G}
  \addplot {gauss(37.44,3.1)};
  \addlegendentry{RS-SF-G}
\end{axis}
\end{tikzpicture}}}
}
\\
\subfigure[SPSA-N vs. RS-SPSA-N]
{
\hspace{-4em}\tabl{c}{\scalebox{0.75}{\begin{tikzpicture}
\begin{axis}[every axis plot post/.append style={
  mark=none,domain=25:65,samples=50,smooth}, 
  enlargelimits=upper, 
xlabel={$D^\theta(x^0)$},ylabel={Probability}
  ] 
  \addplot {gauss(45.8,6)};
  \addlegendentry{SPSA-N}
  \addplot {gauss(46.36,3.1)};
  \addlegendentry{RS-SPSA-N}
\end{axis}
\end{tikzpicture}}}
}
&
\subfigure[SF-N vs. RS-SF-N]
{
\hspace{-2em}\tabl{c}{\scalebox{0.75}{\begin{tikzpicture}
\begin{axis}[every axis plot post/.append style={
  mark=none,domain=20:50,samples=50,smooth}, 
  enlargelimits=upper, 
xlabel={$D^\theta(x^0)$},ylabel={Probability}
  ] 
  \addplot {gauss(36.4,4.7)};
  \addlegendentry{SF-N}
  \addplot {gauss(36.5,2)};
  \addlegendentry{RS-SF-N}
\end{axis}
\end{tikzpicture}}}
}
\end{tabular}
\caption{Performance comparison in the discounted setting  using the distribution of $D^\theta(x^0)$.} 
\label{fig:discounted-normal}
\end{figure}

 \begin{figure}
    \centering
\begin{tabular}{cc}
    \subfigure[SPSA-G vs. RS-SPSA-G]{
      \label{fig:discounted_atwt_spsa}
\hspace{-4em}\tabl{c}{\scalebox{0.75}{\begin{tikzpicture}
\begin{axis}[xlabel={time},ylabel={TAR},smooth,legend pos=south east]
\addplot table[x index=0,y index=1,col sep=space,each nth point={100}] {results/tar/averaged_results_SPSA_TAR.dat};
\addlegendentry{SPSA-G}
\addplot table[x index=0,y index=1,col sep=space,each nth point={100}] {results/tar/averaged_results_RS_SPSA_TAR.dat};
\addlegendentry{RS-SPSA-G}
\end{axis}
\end{tikzpicture}}\\[1ex]}
}
&
    \subfigure[SF-G vs. RS-SF-G]{
      \label{fig:discounted_atwt_sf}
\hspace{-2em}\tabl{c}{\scalebox{0.75}{\begin{tikzpicture}
\begin{axis}[xlabel={time},ylabel={TAR},smooth,legend pos=south east]
\addplot table[x index=0,y index=1,col sep=space,each nth point={100}] {results/tar/averaged_results_SF_TAR.dat};
\addlegendentry{SF-G}
\addplot table[x index=0,y index=1,col sep=space,each nth point={100}] {results/tar/averaged_results_RS_SF_TAR.dat};
\addlegendentry{RS-SF-G}
\end{axis}
\end{tikzpicture}}\\[1ex]}
}
\\
    \subfigure[SPSA-N vs. RS-SPSA-N]{
\hspace{-4em}\tabl{c}{\scalebox{0.75}{\begin{tikzpicture}
\begin{axis}[xlabel={time},ylabel={TAR},smooth,legend pos=south east]
\addplot table[x index=0,y index=1,col sep=space,each nth point={100}] {results/tar/averaged_results_SPSA_N_TAR.dat};
\addlegendentry{SPSA-N}
\addplot table[x index=0,y index=1,col sep=space,each nth point={100}] {results/tar/averaged_results_RS_SPSA_N_TAR.dat};
\addlegendentry{RS-SPSA-N}
\end{axis}
\end{tikzpicture}}\\[1ex]}
}
&
    \subfigure[SF-N vs. RS-SF-N]{
\hspace{-2em}\tabl{c}{\scalebox{0.75}{\begin{tikzpicture}
\begin{axis}[xlabel={time},ylabel={TAR},smooth,legend pos=south east]
\addplot table[x index=0,y index=1,col sep=space,each nth point={100}] {results/tar/averaged_results_SF_N_TAR.dat};
\addlegendentry{SF-N}
\addplot table[x index=0,y index=1,col sep=space,each nth point={100}] {results/tar/averaged_results_RS_SF_N_TAR.dat};
\addlegendentry{RS-SF-N}
\end{axis}
\end{tikzpicture}}\\[1ex]}
}
\end{tabular}
\caption{Performance comparison of the algorithms in the discounted setting using the total arrived road users (TAR).}
\label{fig:tar}
\end{figure}
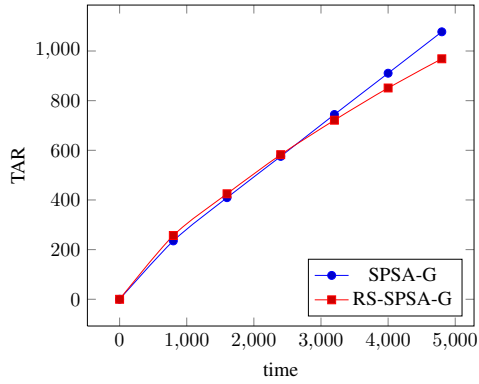
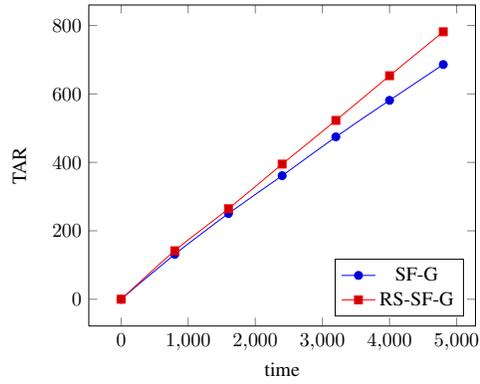
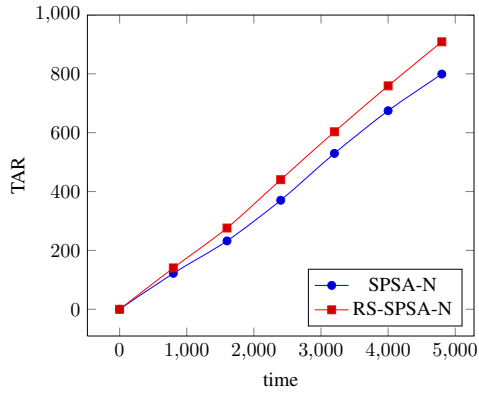
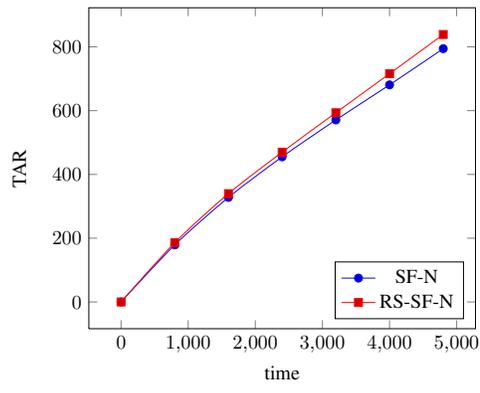

 \begin{figure}
    \centering
\tabl{c}{\scalebox{0.75}{\begin{tikzpicture}
\begin{axis}[xlabel={time},ylabel={AJWT},smooth,legend pos=south east]
\addplot table[x index=0,y index=1,col sep=space,each nth point={100}] {results/delay/averaged_results_SF.dat};
\addlegendentry{SF-G}
\addplot table[x index=0,y index=1,col sep=space,each nth point={100}] {results/delay/averaged_results_RS_SF.dat};
\addlegendentry{RS-SF-G}
\end{axis}
\end{tikzpicture}}\\[1ex]}
\caption{Performance comparison of the first-order SF-based algorithms, SF-G and RS-SF-G, using the average junction waiting time (AJWT).}
\label{fig:ajwt}
\end{figure}
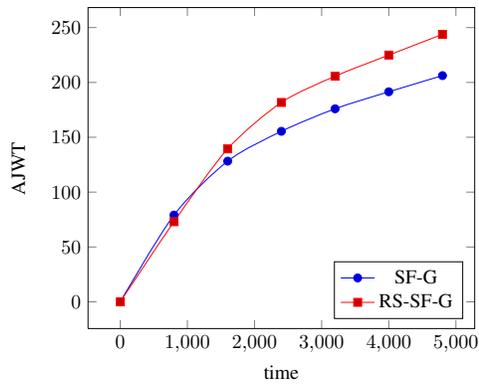

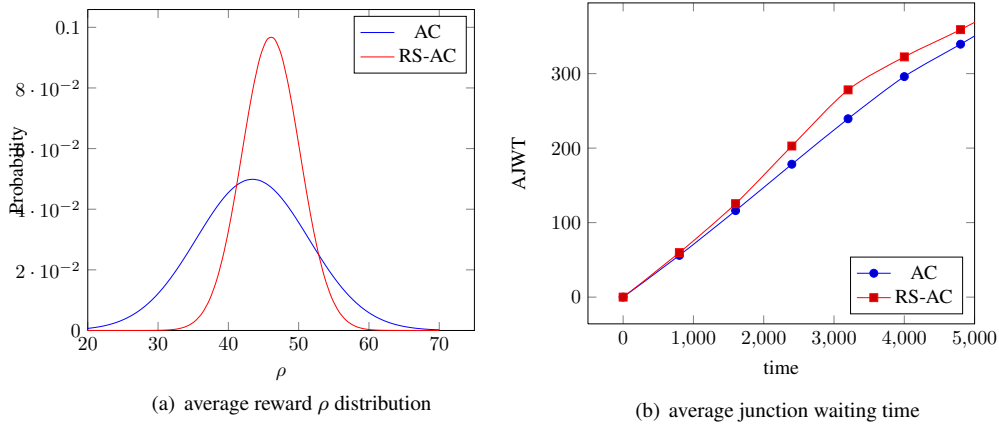
\begin{figure}
\centering
\begin{tabular}{cc}
\subfigure[average reward $\rho$ distribution]
{
  \label{fig:average_rho}
\hspace{-4em}\tabl{c}{\scalebox{0.75}{\begin{tikzpicture}
\begin{axis}[every axis plot post/.append style={
  mark=none,domain=20:70,samples=50,smooth}, 
  enlargelimits=upper, 
xlabel={$\rho$},ylabel={Probability}
  ] 
  \addplot {gauss(43.44,8)};
  \addlegendentry{AC}
  \addplot {gauss(46.07,4.12)};
  \addlegendentry{RS-AC}
\end{axis}
\end{tikzpicture}}}
}
&
\subfigure[average junction waiting time]
{
 \label{fig:atwt_perf_ac}
\hspace{-2em} \tabl{c}{\scalebox{0.75}{\begin{tikzpicture}
\begin{axis}[xlabel={time},ylabel={AJWT},smooth,legend pos=south east,xmax=5000]
\addplot table[x index=0,y index=1,col sep=space,each nth point={50}] {results/delay/averaged_results_AC_ATWT.dat};
\addlegendentry{AC}
\addplot table[x index=0,y index=1,col sep=space,each nth point={50}] {results/delay/averaged_results_RS_AC_ATWT.dat};
\addlegendentry{RS-AC}
\end{axis}
\end{tikzpicture}}\\[1ex]}
}
\end{tabular}
\caption{Performance comparison of the risk-neutral (AC) and risk-sensitive (RS-AC) average reward actor-critic algorithms using two different metrics.}
  \label{fig:average_perf}
\end{figure}

 \begin{figure}
    \centering
\begin{tabular}{cc}
    \subfigure[RS-SPSA-G]{
      \label{fig:conv-g}
\hspace{-4em}\tabl{c}{\scalebox{0.75}{\begin{tikzpicture}
\begin{axis}[xlabel={time},ylabel={$\theta_n^{(i)}$},smooth,legend pos=north east,line width=1pt]
\addplot[no markers,color=red] table[x index=0,y index=2,col sep=space] {results/RS_SPSA_theta_1_2x2grid_unif_alpha20.0_delta0.2.txt};
\addlegendentry{$\theta_n^{(2)}$}
\addplot[no markers,color=blue] table[x index=0,y index=7,col sep=space] {results/RS_SPSA_theta_1_2x2grid_unif_alpha20.0_delta0.2.txt};
\addlegendentry{$\theta_n^{(7)}$}
\end{axis}
\end{tikzpicture}}\\[1ex]}
}
&
    \subfigure[RS-SPSA-N]{
      \label{fig:conv-n}
\hspace{-2em}\tabl{c}{\scalebox{0.75}{\begin{tikzpicture}
\begin{axis}[xlabel={time},ylabel={$\theta_n^{(i)}$},smooth,legend pos=north east,smooth,line width=1pt]
\addplot[no markers,color=blue] table[x index=0,y index=5,col sep=tab] {results/RS_SPSA_N_theta_1_2x2grid_unif_alpha20.0_delta0.2.txt};
\addlegendentry{$\theta_n^{(2)}$}
\addplot[no markers,color=red] table[x index=0,y index=2,col sep=tab] {results/RS_SPSA_N_theta_1_2x2grid_unif_alpha20.0_delta0.2.txt};
\addlegendentry{$\theta_n^{(5)}$}
\end{axis}
\end{tikzpicture}}\\[1ex]}
}
\end{tabular}
\caption{Convergence of SPSA based algorithms in the discounted setting -- illustration using two (arbitrarily chosen) coordinates of the parameter $\theta$.}
\label{fig:conv}
\end{figure}
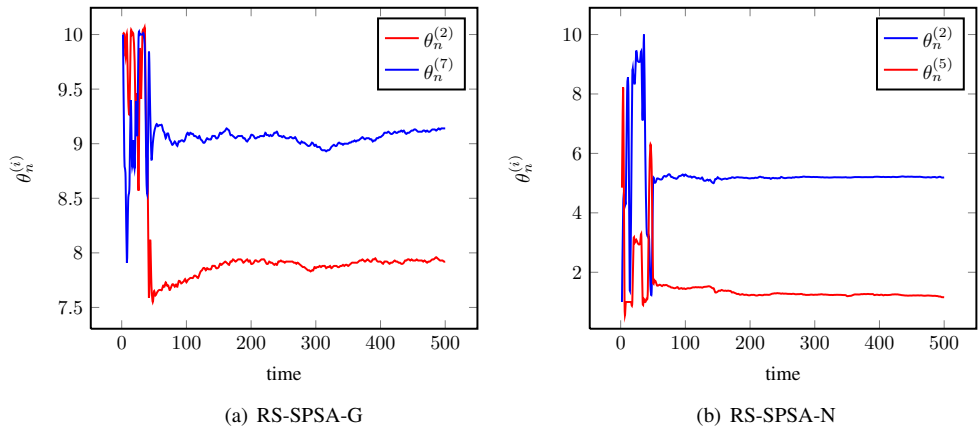

Figure~\ref{fig:discounted-normal} shows the distribution of the discounted cumulative reward $D^\theta(x^0)$ for the algorithms in the discounted setting. Figure~\ref{fig:tar} shows the total arrived road users (TAR) obtained for all the algorithms in the discounted setting, whereas Figure \ref{fig:ajwt} presents the average junction waiting time (AJWT) for the first-order SF-based algorithm RS-SF-G.\footnote{The AJWT performance of the other algorithms in the discounted setting is similar and the corresponding plots are omitted here.} TAR is a throughput metric that measures the number of road users who have reached their destination, whereas AJWT is a delay metric that quantifies the average delay experienced by the road users. 

The performance of the algorithms in the average setting is presented in Figure~\ref{fig:average_perf}. In particular, Figure \ref{fig:average_rho} shows the distribution of the average reward $\rho$, while Figure \ref{fig:atwt_perf_ac} presents the average junction waiting time (AJWT) for the average cost algorithms.

From Figures \ref{fig:discounted-normal} and \ref{fig:average_rho}, we notice that the risk-sensitive algorithms proposed in this paper result in a long-term (discounted or average) cost that is higher than their risk-neutral variants. However, from the empirical variance of the cost (both discounted as well as average) perspective, the risk-sensitive algorithms outperform their risk-neutral variants. Amongst our algorithms in the discounted setting, we observe that the second-order schemes (RS-SPSA-N and RS-SF-N) exhibit better results, though they involve an additional computational cost of inverting the Hessian at each time step. Further, from a traffic signal control application standpoint, we notice from the throughput (TAR) and delay (AJWT) plots (see Figures~\ref{fig:tar},~\ref{fig:ajwt} and~\ref{fig:atwt_perf_ac}), that the performance of the risk-sensitive algorithm variants is close to that of the corresponding risk-neutral algorithms in both the considered settings.

We observe that the policy parameter $\theta$ converges for the SPSA based algorithms in the discounted setting. This is illustrated in Figures~\ref{fig:conv-g} and~\ref{fig:conv-n}. Note that we established theoretical convergence of our algorithms earlier (see Sections~\ref{sec:SPSA-SF-proofs} and~\ref{sec:average-analysis}) and these plots confirm the same. Further, these plots also show that the transient period, i.e.,~the initial phase when $\theta$ has not converged, is short. Similar observations hold for the other algorithms as well. The results of this section indicate the rapid empirical convergence of our proposed algorithms. This observation coupled with the fact that they guarantee low variance of return, make them attractive for implementation in risk-constrained systems. 


\section{Conclusions and Future Work}
\label{sec:conclusions}

We proposed novel actor-critic algorithms for control in risk-sensitive discounted and average reward MDPs. All our algorithms involve a TD critic on the fast timescale, a policy gradient (actor) on the intermediate timescale, and a dual ascent for Lagrange multipliers on the slowest timescale. In the discounted setting, we pointed out the difficulty in estimating the gradient of the variance of the return and incorporated simultaneous perturbation based SPSA and SF approaches for gradient estimation in our algorithms. The average setting, on the other hand, allowed for an actor to employ compatible features to estimate the gradient of the variance. We provided proofs of convergence to locally (risk-sensitive) optimal policies for all the proposed algorithms. Further, using a traffic signal control application, we observed that our algorithms resulted in lower variance empirically as compared to their risk-neutral counterparts. 

As future work, it would be interesting to develop a risk-sensitive algorithm that uses a single trajectory in the discounted setting. Further, it would also be interesting to consider conditional value at risk (CVaR) as a measure of risk and develop a control algorithm that optimizes the return of a MDP with bounds on CVaR. The resulting algorithm could be applied for portfolio optimization in a financial application. An orthogonal direction of future research is to obtain finite-time bounds on the quality of the solution obtained by our algorithms. As mentioned earlier, this is challenging as, to the best of our knowledge, there are no convergence rate results available for multi-timescale stochastic approximation schemes, and hence, for actor-critic algorithms. 
\bibliography{risk-sensitive-rl}

\begin{thebibliography}{68}
\providecommand{\natexlab}[1]{#1}
\providecommand{\url}[1]{\texttt{#1}}
\expandafter\ifx\csname urlstyle\endcsname\relax
  \providecommand{\doi}[1]{doi: #1}\else
  \providecommand{\doi}{doi: \begingroup \urlstyle{rm}\Url}\fi

\bibitem[Altman(1999)]{altman1999constrained}
Eitan Altman.
\newblock \emph{Constrained Markov decision processes}, volume~7.
\newblock CRC Press, 1999.

\bibitem[Barto et~al.(1983)Barto, Sutton, and Anderson]{Barto83NE}
A.~Barto, R.~Sutton, and C.~Anderson.
\newblock Neuron-like elements that can solve difficult learning control
  problems.
\newblock \emph{IEEE Transaction on Systems, Man and Cybernetics}, 13:\penalty0
  835--846, 1983.

\bibitem[Basu et~al.(2008)Basu, Bhattacharyya, and Borkar]{Basu08LA}
A.~Basu, T.~Bhattacharyya, and V.~Borkar.
\newblock A learning algorithm for risk-sensitive cost.
\newblock \emph{Mathematics of Operations Research}, 33\penalty0 (4):\penalty0
  880--898, 2008.

\bibitem[Baxter and Bartlett(2001)]{Baxter01IP}
J.~Baxter and P.~Bartlett.
\newblock Infinite-horizon policy-gradient estimation.
\newblock \emph{Journal of Artificial Intelligence Research}, 15:\penalty0
  319--350, 2001.

\bibitem[Bertsekas(1995)]{BertsekasDP01}
D.~Bertsekas.
\newblock \emph{Dynamic Programming and Optimal Control}.
\newblock Athena Scientific, 1995.

\bibitem[Bertsekas(1999)]{bertsekas1999nonlinear}
D.~Bertsekas.
\newblock \emph{Nonlinear programming}.
\newblock Athena Scientific, 1999.

\bibitem[Bertsekas and Tsitsiklis(1996)]{BertsekasT96}
D.~Bertsekas and J.~Tsitsiklis.
\newblock \emph{Neuro-Dynamic Programming}.
\newblock Athena Scientific, 1996.

\bibitem[Bhatnagar(2005)]{bhatnagar2005adaptive}
S.~Bhatnagar.
\newblock Adaptive multivariate three-timescale stochastic approximation
  algorithms for simulation based optimization.
\newblock \emph{ACM Transactions on Modeling and Computer Simulation},
  15\penalty0 (1):\penalty0 74--107, 2005.

\bibitem[Bhatnagar(2007)]{bhatnagar2007adaptive}
S.~Bhatnagar.
\newblock Adaptive {Newton}-based multivariate smoothed functional algorithms
  for simulation optimization.
\newblock \emph{ACM Transactions on Modeling and Computer Simulation},
  18\penalty0 (1):\penalty0 1--35, 2007.

\bibitem[Bhatnagar(2010)]{bhatnagar2010actor}
S.~Bhatnagar.
\newblock An actor--critic algorithm with function approximation for discounted
  cost constrained {M}arkov decision processes.
\newblock \emph{Systems \& Control Letters}, 59\penalty0 (12):\penalty0
  760--766, 2010.

\bibitem[Bhatnagar and Lakshmanan(2012)]{bhatnagar2012online}
S.~Bhatnagar and K.~Lakshmanan.
\newblock An online actor-critic algorithm with function approximation for
  constrained {M}arkov decision processes.
\newblock \emph{Journal of Optimization Theory and Applications}, pages 1--21,
  2012.

\bibitem[Bhatnagar et~al.(2003)Bhatnagar, Fu, Marcus, and
  Wang]{bhatnagar2003two}
S.~Bhatnagar, M.C. Fu, S.I. Marcus, and I.~Wang.
\newblock {Two-timescale simultaneous perturbation stochastic approximation
  using deterministic perturbation sequences}.
\newblock \emph{ACM Transactions on Modeling and Computer Simulation},
  13\penalty0 (2):\penalty0 180--209, 2003.
\newblock ISSN 1049-3301.

\bibitem[Bhatnagar et~al.(2007)Bhatnagar, Sutton, Ghavamzadeh, and
  Lee]{Bhatnagar07IN}
S.~Bhatnagar, R.~Sutton, M.~Ghavamzadeh, and M.~Lee.
\newblock Incremental natural actor-{Critic} algorithms.
\newblock In \emph{Proceedings of Advances in Neural Information Processing
  Systems 20}, pages 105--112, 2007.

\bibitem[Bhatnagar et~al.(2009)Bhatnagar, Sutton, Ghavamzadeh, and
  Lee]{bhatnagar2009natural}
S.~Bhatnagar, R.~Sutton, M.~Ghavamzadeh, and M.~Lee.
\newblock Natural actor-critic algorithms.
\newblock \emph{Automatica}, 45\penalty0 (11):\penalty0 2471--2482, 2009.

\bibitem[Bhatnagar et~al.(2011)Bhatnagar, Hemachandra, and
  Mishra]{shalabh2011constrained}
S.~Bhatnagar, N.~Hemachandra, and V.~Mishra.
\newblock Stochastic approximation algorithms for constrained optimization via
  simulation.
\newblock \emph{ACM Transactions on Modeling and Computer Simulation},
  21\penalty0 (3):\penalty0 15, 2011.

\bibitem[Bhatnagar et~al.(2013)Bhatnagar, Prasad, and Prashanth]{Bhatnagar13SR}
S.~Bhatnagar, H.~Prasad, and {L.A.} Prashanth.
\newblock \emph{Stochastic Recursive Algorithms for Optimization}, volume 434.
\newblock Springer, 2013.

\bibitem[Borkar(2001)]{Borkar01SF}
V.~Borkar.
\newblock A sensitivity formula for the risk-sensitive cost and the
  actor-critic algorithm.
\newblock \emph{Systems \& Control Letters}, 44:\penalty0 339--346, 2001.

\bibitem[Borkar(2002)]{Borkar02QR}
V.~Borkar.
\newblock Q-learning for risk-sensitive control.
\newblock \emph{Mathematics of Operations Research}, 27:\penalty0 294--311,
  2002.

\bibitem[Borkar(2005)]{borkar2005actor}
V.~Borkar.
\newblock An actor-critic algorithm for constrained {Markov} decision
  processes.
\newblock \emph{Systems \& Control Letters}, 54\penalty0 (3):\penalty0
  207--213, 2005.

\bibitem[Borkar(2008)]{borkar2008stochastic}
V.~Borkar.
\newblock \emph{Stochastic approximation: a dynamical systems viewpoint}.
\newblock Cambridge University Press, 2008.

\bibitem[Borkar(2010)]{borkar2010learning}
V.~Borkar.
\newblock Learning algorithms for risk-sensitive control.
\newblock In \emph{Proceedings of the Nineteenth International Symposium on
  Mathematical Theory of Networks and Systems}, pages 1327--1332, 2010.

\bibitem[Borkar and Meyn(2000)]{borkar2000ode}
Vivek~S Borkar and Sean~P Meyn.
\newblock The ode method for convergence of stochastic approximation and
  reinforcement learning.
\newblock \emph{SIAM Journal on Control and Optimization}, 38\penalty0
  (2):\penalty0 447--469, 2000.

\bibitem[Chen et~al.(1999)Chen, Duncan, and Pasik-Duncan]{Chen99KW}
H.~Chen, T.~Duncan, and B.~Pasik-Duncan.
\newblock A {K}iefer-{W}olfowitz algorithm with randomized differences.
\newblock \emph{IEEE Transactions on Automatic Control}, 44\penalty0
  (3):\penalty0 442--453, 1999.

\bibitem[Delage and Mannor(2010)]{Delage10PO}
E.~Delage and S.~Mannor.
\newblock Percentile optimization for {M}arkov decision processes with
  parameter uncertainty.
\newblock \emph{Operations Research}, 58\penalty0 (1):\penalty0 203--213, 2010.

\bibitem[Dippon and Renz(1997)]{dippon1997weighted}
J.~Dippon and J.~Renz.
\newblock Weighted means in stochastic approximation of minima.
\newblock \emph{SIAM Journal on Control and Optimization}, 35\penalty0
  (5):\penalty0 1811--1827, 1997.

\bibitem[Fabian(1968)]{fabian1968asymptotic}
V.~Fabian.
\newblock On asymptotic normality in stochastic approximation.
\newblock \emph{The Annals of Mathematical Statistics}, pages 1327--1332, 1968.

\bibitem[Filar et~al.(1989)Filar, Kallenberg, and Lee]{filar1989variance}
J.~Filar, L.~Kallenberg, and H.~Lee.
\newblock Variance-penalized {M}arkov decision processes.
\newblock \emph{Mathematics of Operations Research}, 14\penalty0 (1):\penalty0
  147--161, 1989.

\bibitem[Filar et~al.(1995)Filar, Krass, and Ross]{Filar95PP}
J.~Filar, D.~Krass, and K.~Ross.
\newblock Percentile performance criteria for limiting average {M}arkov
  decision processes.
\newblock \emph{IEEE Transaction of Automatic Control}, 40\penalty0
  (1):\penalty0 2--10, 1995.

\bibitem[Gill et~al.(1981)Gill, Murray, and Wright]{gill1981practical}
P.~Gill, W.~Murray, and M.~Wright.
\newblock \emph{Practical optimization}.
\newblock Academic press, 1981.

\bibitem[Hirsch(1989)]{hirsch}
M.~W. Hirsch.
\newblock Convergent activation dynamics in continuous time networks.
\newblock \emph{Neural Networks}, 2:\penalty0 331--349, 1989.

\bibitem[Howard and Matheson(1972)]{Howard72RS}
R.~Howard and J.~Matheson.
\newblock Risk sensitive {M}arkov decision processes.
\newblock \emph{Management Science}, 18\penalty0 (7):\penalty0 356--369, 1972.

\bibitem[Katkovnik and Kulchitsky(1972)]{katkovnik1972convergence}
V.~Katkovnik and Y.~Kulchitsky.
\newblock Convergence of a class of random search algorithms.
\newblock \emph{Automatic Remote Control}, 8:\penalty0 81--87, 1972.

\bibitem[Konda and Tsitsiklis(2000)]{Konda00AA}
V.~Konda and J.~Tsitsiklis.
\newblock {A}ctor-{C}ritic algorithms.
\newblock In \emph{Proceedings of Advances in Neural Information Processing
  Systems 12}, pages 1008--1014, 2000.

\bibitem[Konda and Tsitsiklis(2004)]{konda2004convergence}
V.~R. Konda and J.~N. Tsitsiklis.
\newblock Convergence rate of linear two-time-scale stochastic approximation.
\newblock \emph{Annals of Applied Probability}, pages 796--819, 2004.

\bibitem[Korda and Prashanth(2014)]{korda2014td}
Nathaniel Korda and {L.A.} Prashanth.
\newblock {On TD (0) with function approximation: Concentration bounds and a
  centered variant with exponential convergence}.
\newblock \emph{arXiv preprint arXiv:1411.3224}, 2014.

\bibitem[Kushner and Clark(1978)]{kushner-clark}
H.~Kushner and D.~Clark.
\newblock \emph{Stochastic approximation methods for constrained and
  unconstrained systems}.
\newblock Springer-Verlag, 1978.

\bibitem[Mannor and Tsitsiklis(2011)]{Mannor11MV}
S.~Mannor and J.~Tsitsiklis.
\newblock Mean-variance optimization in {M}arkov decision processes.
\newblock In \emph{Proceedings of the Twenty-Eighth International Conference on
  Machine Learning}, pages 177--184, 2011.

\bibitem[Marbach(1998)]{Marbach98SM}
P.~Marbach.
\newblock \emph{Simulated-Based Methods for {M}arkov Decision Processes}.
\newblock PhD thesis, Massachusetts Institute of Technology, 1998.

\bibitem[Mas-Colell et~al.(1995)Mas-Colell, Whinston, and
  Green]{mas1995microeconomic}
A.~Mas-Colell, M.~Whinston, and J.~Green.
\newblock \emph{Microeconomic theory}.
\newblock Oxford University Press, 1995.

\bibitem[Mihatsch and Neuneier(2002)]{Mihatsch02RS}
O.~Mihatsch and R.~Neuneier.
\newblock Risk-sensitive reinforcement learning.
\newblock \emph{Machine Learning}, 49\penalty0 (2):\penalty0 267--290, 2002.

\bibitem[Milgrom and Segal(2002)]{milgrom2002envelope}
Paul Milgrom and Ilya Segal.
\newblock Envelope theorems for arbitrary choice sets.
\newblock \emph{Econometrica}, 70\penalty0 (2):\penalty0 583--601, 2002.

\bibitem[Nilim and Ghaoui(2005)]{Nilim05RC}
A.~Nilim and L.~El Ghaoui.
\newblock Robust control of {M}arkov decision processes with uncertain
  transition matrices.
\newblock \emph{Operations Research}, 53\penalty0 (5):\penalty0 780--798, 2005.

\bibitem[Peters et~al.(2005)Peters, Vijayakumar, and Schaal]{Peters05NA}
J.~Peters, S.~Vijayakumar, and S.~Schaal.
\newblock Natural actor-critic.
\newblock In \emph{Proceedings of the Sixteenth European Conference on Machine
  Learning}, pages 280--291, 2005.

\bibitem[Prashanth and Bhatnagar(2011)]{prashanth2011reinforcement}
{L.A.} Prashanth and S.~Bhatnagar.
\newblock {Reinforcement Learning With Function Approximation for Traffic
  Signal Control}.
\newblock \emph{IEEE Transactions on Intelligent Transportation Systems},
  12\penalty0 (2):\penalty0 412 --421, june 2011.

\bibitem[Prashanth and Bhatnagar(2012)]{prashanth2012threshold}
{L.A.} Prashanth and S.~Bhatnagar.
\newblock {Threshold Tuning Using Stochastic Optimization for Graded Signal
  Control}.
\newblock \emph{IEEE Transactions on Vehicular Technology}, 61\penalty0
  (9):\penalty0 3865 --3880, nov. 2012.

\bibitem[Prashanth and Ghavamzadeh(2013)]{Prashanth13AC}
{L.A.} Prashanth and M.~Ghavamzadeh.
\newblock Actor-critic algorithms for risk-sensitive {MDP}s.
\newblock In \emph{Proceedings of Advances in Neural Information Processing
  Systems 26}, pages 252--260, 2013.

\bibitem[Puterman(1994)]{puterman1994markov}
M.~Puterman.
\newblock \emph{Markov decision processes: Discrete stochastic dynamic
  programming}.
\newblock John Wiley \& Sons, 1994.

\bibitem[Ruszczy{\'n}ski(2010)]{Ruszczynski10RA}
A.~Ruszczy{\'n}ski.
\newblock Risk-averse dynamic programming for {M}arkov decision processes.
\newblock \emph{Mathematical Programming}, 125:\penalty0 235--261, 2010.

\bibitem[Sharpe(1966)]{Sharpe66MF}
W.~Sharpe.
\newblock Mutual fund performance.
\newblock \emph{Journal of Business}, 39\penalty0 (1):\penalty0 119--138, 1966.

\bibitem[Shen et~al.(2013)Shen, Stannat, and Obermayer]{Shen13RS}
Y.~Shen, W.~Stannat, and K.~Obermayer.
\newblock Risk-sensitive {M}arkov control processes.
\newblock \emph{SIAM Journal on Control and Optimization}, 51\penalty0
  (5):\penalty0 3652--3672, 2013.

\bibitem[Sobel(1982)]{Sobel82VD}
M.~Sobel.
\newblock The variance of discounted {M}arkov decision processes.
\newblock \emph{Applied Probability}, pages 794--802, 1982.

\bibitem[Spall(1992)]{Spall92MS}
J.~Spall.
\newblock Multivariate stochastic approximation using a simultaneous
  perturbation gradient approximation.
\newblock \emph{IEEE Transactions on Automatic Control}, 37\penalty0
  (3):\penalty0 332--341, 1992.

\bibitem[Spall(1997)]{spall1997one}
J.~Spall.
\newblock {A one-measurement form of simultaneous perturbation stochastic
  approximation}.
\newblock \emph{Automatica}, 33\penalty0 (1):\penalty0 109--112, 1997.
\newblock ISSN 0005-1098.

\bibitem[Spall(2000)]{spall2000adaptive}
J.~Spall.
\newblock Adaptive stochastic approximation by the simultaneous perturbation
  method.
\newblock \emph{IEEE Transactions on Automatic Control}, 45\penalty0
  (10):\penalty0 1839--1853, 2000.

\bibitem[Styblinski and Opalski(1986)]{styblinski1986}
M.~A. Styblinski and L.~J. Opalski.
\newblock {Algorithms and software tools for IC yield optimization based on
  fundamental fabrication parameters}.
\newblock \emph{IEEE Transactions on Computer Aided Design CAD}, 1\penalty0
  (5):\penalty0 79--89, 1986.

\bibitem[Sutton(1984)]{Sutton84TC}
R.~Sutton.
\newblock \emph{Temporal credit assignment in reinforcement learning}.
\newblock PhD thesis, University of Massachusetts Amherst, 1984.

\bibitem[Sutton(1988)]{Sutton88LP}
R.~Sutton.
\newblock Learning to predict by the methods of temporal differences.
\newblock \emph{Machine Learning}, 3:\penalty0 9--44, 1988.

\bibitem[Sutton and Barto(1998)]{sutton1998reinforcement}
R.~Sutton and A.~Barto.
\newblock \emph{Reinforcement learning: An introduction}.
\newblock MIT Press, 1998.

\bibitem[Sutton et~al.(2000)Sutton, McAllester, Singh, and Mansour]{Sutton00PG}
R.~Sutton, D.~McAllester, S.~Singh, and Y.~Mansour.
\newblock Policy gradient methods for reinforcement learning with function
  approximation.
\newblock In \emph{Proceedings of Advances in Neural Information Processing
  Systems 12}, pages 1057--1063, 2000.

\bibitem[Sutton et~al.(1999)Sutton, McAllester, Singh, Mansour,
  et~al.]{sutton1999policy}
Richard~S Sutton, David~A McAllester, Satinder~P Singh, Yishay Mansour, et~al.
\newblock Policy gradient methods for reinforcement learning with function
  approximation.
\newblock In \emph{NIPS}, volume~99, pages 1057--1063. Citeseer, 1999.

\bibitem[Tamar and Mannor(2013)]{Tamar13VA}
A.~Tamar and S.~Mannor.
\newblock Variance adjusted actor-critic algorithms.
\newblock \emph{arXiv preprint arXiv:1310.3697}, 2013.

\bibitem[Tamar et~al.(2012)Tamar, {Di Castro}, and Mannor]{tamar2012policy}
A.~Tamar, D.~{Di Castro}, and S.~Mannor.
\newblock Policy gradients with variance related risk criteria.
\newblock In \emph{Proceedings of the Twenty-Ninth International Conference on
  Machine Learning}, pages 387--396, 2012.

\bibitem[Tamar et~al.(2013{\natexlab{a}})Tamar, {Di Castro}, and
  Mannor]{Tamar13TD}
A.~Tamar, D.~{Di Castro}, and S.~Mannor.
\newblock Temporal difference methods for the variance of the reward to go.
\newblock In \emph{Proceedings of the Thirtieth International Conference on
  Machine Learning}, pages 495--503, 2013{\natexlab{a}}.

\bibitem[Tamar et~al.(2013{\natexlab{b}})Tamar, {Di Castro}, and
  Mannor]{tamar2013policy}
A.~Tamar, D.~{Di Castro}, and S.~Mannor.
\newblock Policy evaluation with variance related risk criteria in markov
  decision processes.
\newblock \emph{arXiv preprint arXiv:1301.0104}, 2013{\natexlab{b}}.

\bibitem[Tsitsiklis and Van~Roy(1997)]{tsitsiklis1997analysis}
John~N Tsitsiklis and Benjamin Van~Roy.
\newblock An analysis of temporal-difference learning with function
  approximation.
\newblock \emph{IEEE Transactions on Automatic Control}, 42\penalty0
  (5):\penalty0 674--690, 1997.

\bibitem[Wiering et~al.(2004)Wiering, Vreeken, van Veenen, and Koopman]{GLDSim}
M.~Wiering, J.~Vreeken, J.~van Veenen, and A.~Koopman.
\newblock Simulation and optimization of traffic in a city.
\newblock In \emph{IEEE Intelligent Vehicles Symposium}, pages 453--458, June
  2004.

\bibitem[Williams(1992)]{Williams92SS}
R.~Williams.
\newblock Simple statistical gradient-following algorithms for connectionist
  reinforcement learning.
\newblock \emph{Machine Learning}, 8:\penalty0 229--256, 1992.

\bibitem[Xu and Mannor(2012)]{Xu12DR}
H.~Xu and S.~Mannor.
\newblock Distributionally robust {M}arkov decision processes.
\newblock \emph{Mathematics of Operations Research}, 37\penalty0 (2):\penalty0
  288--300, 2012.

\end{thebibliography}
\bibliographystyle{plainnat}

\end{document}